\let\cite\citep
\patchcmd{\algorithmic}{\itemsep0pt}{\itemsep4pt plus1pt minus1pt}{}{}
\setlist[itemize]{noitemsep,topsep=0pt,leftmargin=*}
\newtheorem{theorem}{Theorem}
\newtheorem{lemma}{Lemma}
\newtheorem{corollary}{Corollary}
\newtheorem{observation}{Observation}
\newcommand{\red}[1]{{\color{red} #1}}
\newcommand{\ignore}[1]{}
\newcommand{\BibTeX}{B\kern-.05em{\sc i\kern-.025em b}\kern-.08em\TeX}
\newcommand{\bx}{{\mathbf{x}}}
\newcommand{\bC}{{\mathbf{C}}}
\newcommand{\bp}{{\mathbf{p}}}
\newcommand{\bP}{{\mathbf{P}}}
\newcommand{\bmu}{{\boldsymbol{\mu}}}
\newcommand{\bX}{{\mathbf{X}}}
\begin{document}

%%%%%%%%%%%%%%%%%%%%%%%%%%%%%%%%%%%%%%%%%%%%%%%%%%%%%%%%%%%%%%%%%%%%%%%%

\begin{frontmatter}

%%% Use this command to specify your submission number.
%%% In doubleblind mode, it will be printed on the first page.

\paperid{8214} 

%%% Use this command to specify the title of your paper.

\title{Online Learning with Probing for Sequential User-Centric Selection}

%%% Use this combinations of commands to specify all authors of your 
%%% paper. Use \fnms{} and \snm{} to indicate everyone's first names 
%%% and surname. This will help the publisher with indexing the 
%%% proceedings. Please use a reasonable approximation in case your 
%%% name does not neatly split into "first names" and "surname".
%%% Specifying your ORCID digital identifier is optional. 
%%% Use the \thanks{} command to indicate one or more corresponding 
%%% authors and their email address(es). If so desired, you can specify
%%% author contributions using the \footnote{} command.

\author[A]{\fnms{Tianyi}~\snm{Xu}\thanks{Corresponding author. Email: txu9@tulane.edu.}}
\author[B]{\fnms{Yiting}~\snm{Chen}}
\author[A]{\fnms{Henger}~\snm{Li}}
\author[C]{\fnms{Zheyong}~\snm{Bian}}
\author[B]{\fnms{Emiliano}~\snm{Dall'Anese}}
\author[A]{\fnms{Zizhan}~\snm{Zheng}}

\address[A]{Tulane University, United States}
\address[B]{Boston University, United States}
\address[C]{University of Houston, United States}

%%% Use this environment to include an abstract of your paper.

\begin{abstract}
We formalize sequential decision–making with information acquisition as the Probing‑augmented User‑Centric Selection (PUCS) framework, where a learner first probes a subset of arms to obtain side information on resources and rewards, and then assigns $K$ plays to $M$ arms.  PUCS encompasses practical scenarios such as ridesharing, wireless scheduling, and content recommendation, in which both resources and payoffs are initially unknown and probing incurs cost. For the offline setting (known distributions) we present a greedy probing algorithm with a constant‑factor approximation guarantee of $\zeta=(e-1)/(2e-1)$.  For the online setting (unknown distributions) we introduce \textsc{OLPA}, a stochastic combinatorial bandit algorithm that achieves a regret bound of $\mathcal{O}\!\bigl(\sqrt{T}+\ln^{2}\!T\bigr)$.  We also prove an $\Omega(\sqrt{T})$ lower bound, showing that the upper bound is tight up to logarithmic factors. Numerical results using the dataset collected from the real world demonstrate the effectiveness of our solutions.
\end{abstract}

\end{frontmatter}

%%%%%%%%%%%%%%%%%%%%%%%%%%%%%%%%%%%%%%%%%%%%%%%%%%%%%%%%%%%%%%%%%%%%%%%%
\ignore{
\section{Introduction}

Stochastic multi-armed bandits (MAB)~\cite{lai1985asymptotically} and Multi-player MABs (MP-MAB)~\cite{anantharam1987asymptotically} have been extensively studied as general frameworks for sequential decision making, where the goal is to maximize the cumulative reward by sequentially choosing from a set of options, or ``arms''~\cite{lai1985asymptotically,auer2002finite}. These models have been applied in a variety of domains, including personalized recommendation~\cite{li2010contextual}, financial portfolio management~\cite{huo2017risk}, and dynamic resource allocation~\cite{anantharam1987asymptotically}.

To cope with complex real-world scenarios, the user-centric selection problem has recently been introduced as an extension of MP-MABs \cite{chen2022online}. Consider context recommendation as an example. %, In this problem, 
The decision-maker recommends multiple pieces of content (mapped to arms in MP-MABs) and selects a subset of slots (plays) to allocate to these arms (assignment decision). Unlike traditional MP-MABs, the user-centric selection problem allows each arm to be associated with stochastic units of resources %(here can be user's features), 
and multiple plays can pull the same arm. Applications of the user-centric selection problem are vast and include personalized content delivery \cite{li2010contextual} and resource allocation in ridesharing platforms \cite{chen2022online}.

As another example, consider the user-centric selection problem in ridesharing~\cite{chen2022online}. Pickup locations %can be mapped into 
correspond to arms, and drivers can be modeled as
plays. There is a moving cost to a pickup
location that is related to the reward. Multiple drivers can drive to the same location, but there is a limit on the maximum number of requests (resources) that can be hosted together at any location. This example clearly illustrates the user-centric selection problem, but in practice, some information is unknown. For instance, in this example, the distribution of passenger requests in a certain area is unknown. The real-time road conditions are also unknown. These unknown factors can significantly impact the accuracy of decision-making. The classic MAB framework relies purely on instantaneous feedback received after each decision round to obtain a desired exploration vs. exploitation tradeoff.

In such an uncertain environment, probing is a promising approach for acquiring additional information in searching for best alternatives with uncertainty. It was initially studied in economics~\cite{pandora-Weitzman} and has found applications in database query optimization~\cite{munagala2005pipelined,deshpande2016approximation,liu2008near}. Recently, 
it has been used to obtain real-time road traffic conditions (subject to a cost) before making vehicle routing decisions~\cite{bhaskara2020adaptive}, and for collecting link quality information for scheduling in wireless access points~\cite{xu2021joint,xu2023online}. % has explored the use of probing to obtain additional information before making a decision in the setting of a wireless network AP serving users.

In this work, we propose a unified framework that integrates probing strategies with assignment decisions in the context of user-centric selection. The decision maker first probes a subset of arms to observe their associated rewards and resources and then determines the arm-play assignment. In the offline case where the reward and resource distributions are known a priori, we derive a greedy probing algorithm that achieves a constant approximation factor by utilizing the submodularity of the objective function. For the more challenging online setting, we develop a combinatorial bandit algorithm and derive its regret bound. 

We remark that \cite{zuo2020observe} also considers probing in a similar multi-player MAB setting. However, they do not consider multiple resources for arms and assume that two or more plays assigned to the same arm can lead to a collision. In addition, they assume the rewards follow a Bernoulli distribution, while we consider general distributions. Further, they do not consider a given probing budget as we do.

The primary contributions of this work are as follows:
\begin{itemize}
\item We %build a problem 
develop a principled framework for joint probing and assignment decision processes for user-centric selection problems. 
%\red{This is the first time when ``matching'' appears. It should appear much earlier so that readers know exactly what problem the paper is solving.} 
\item In an offline setting with known distributions of resources and rewards, we develop a novel greedy approximation algorithm that integrates probing strategies for both resources and rewards and derives its performance guarantee. %\red{Is the problem known to be NP-hard?} 
% We show guaranteed performance for this algorithm.
\item For the online setting with unknown resources and reward distribution, we develop a stochastic combinatorial bandit algorithm and derive its regret.
\item We validate our approach through extensive experiments in simulated environments, demonstrating our algorithm's effectiveness compared to baseline algorithms.
\item We discuss the applicability of our method to real-world scenarios, including personalized recommendation systems and dynamic resource allocation in ridesharing.
\end{itemize}
}

\section{Introduction}

Stochastic multi‑armed bandits (MAB)~\cite{lai1985asymptotically} and Multi‑player MABs (MP‑MAB)~\cite{anantharam1987asymptotically} have been extensively studied as general frameworks for sequential decision making, where the goal is to maximize the cumulative reward by sequentially choosing from a set of options, or ``arms''~\cite{lai1985asymptotically,auer2002finite}. These models have been applied in a variety of domains, including personalized recommendation~\cite{li2010contextual}, financial portfolio management~\cite{huo2017risk}, and dynamic resource allocation~\cite{anantharam1987asymptotically}. 

To cope with complex real‑world scenarios, the user‑centric selection problem has recently been introduced as an extension of MP‑MABs \cite{chen2022online}. Consider context recommendation as an example. The decision‑maker recommends multiple pieces of content (mapped to arms in MP‑MABs) and selects a subset of slots (plays) to allocate to these arms (assignment decision). Unlike traditional MP‑MABs, the user‑centric selection problem allows each arm to be associated with stochastic units of resources, and multiple plays can pull the same arm. Applications of the user‑centric selection problem are vast and include personalized content delivery \cite{li2010contextual} and resource allocation in ridesharing platforms \cite{chen2022online}.

As another example, consider the user‑centric selection problem in ridesharing~\cite{chen2022online}. Pickup locations correspond to arms, and drivers can be modeled as plays. There is a moving cost to a pickup location that is related to the reward. Multiple drivers can drive to the same location, but there is a limit on the maximum number of requests (resources) that can be hosted together at any location. This example clearly illustrates the user‑centric selection problem, but in practice, some information is unknown. For instance, in this example, the distribution of passenger requests in a certain area is unknown. The real‑time road conditions are also unknown. These unknown factors can significantly impact the accuracy of decision‑making. The classic MAB framework relies purely on instantaneous feedback received after each decision round to obtain a desired exploration vs.\ exploitation trade‑off. 

In such an uncertain environment, probing is a promising approach for acquiring additional information when searching for the best alternative under uncertainty. It was initially studied in economics~\cite{pandora-Weitzman} and has found applications in database query optimisation~\cite{munagala2005pipelined,deshpande2016approximation,liu2008near}. Recently, it has been used to obtain real–time road traffic conditions (subject to a cost) before making vehicle‑routing decisions~\cite{bhaskara2020adaptive}, and for collecting link‑quality information for scheduling in wireless access points~\cite{xu2021joint,xu2023online}. 

In this work, we propose a unified framework that integrates probing strategies with assignment decisions in the context of user‑centric selection. We term this abstraction the Probing‑augmented User‑Centric Selection (PUCS) framework.  PUCS explicitly couples the probing decision with the subsequent play‑to‑arm assignment, enabling us to quantify both the value \emph{and} the cost of information acquisition.  Concretely, in every round the decision maker first probes a budget‑limited subset of arms to observe their resources and rewards, and then assigns the $K$ plays to the $M$ arms accordingly. In the offline case where the reward and resource distributions are known a~priori, we derive a greedy probing algorithm that achieves a constant approximation factor by exploiting the submodularity of the objective function. For the more challenging online setting, we develop a combinatorial bandit algorithm and prove a regret bound of $\mathcal{O}\!\bigl(\sqrt{T}+\ln^{2}\!T\bigr)$. 

We remark that \cite{zuo2020observe} also considers probing in a similar multi‑player MAB setting. However, they do not consider multiple resources for arms and assume that two or more plays assigned to the same arm can lead to a collision. In addition, they assume the rewards follow a Bernoulli distribution, while we consider general distributions. Further, they do not consider a given probing budget as we do.

\ignore{
The primary contributions of this work are as follows:
\begin{itemize}[noitemsep,topsep=0pt,parsep=0pt]
\item We develop the PUCS framework for joint probing and assignment in user‑centric selection problems.
\item In the offline setting with known distributions, we design a greedy probing algorithm with a constant‑factor approximation guarantee \(\zeta=(e-1)/(2e-1)\).
\item In the online setting with unknown distributions, we devise a stochastic combinatorial bandit algorithm (\textsc{OLPA}) and prove a regret bound \(\mathcal{O}\!\bigl(\sqrt{T}+\ln^{2}T\bigr)\) together with an \(\Omega(\sqrt{T})\) lower bound.
\item Extensive experiments on real‑world datasets demonstrate the empirical effectiveness of our approach over strong baselines.
\end{itemize}
}

\paragraph{Contributions.}The primary contributions of this work are as follows:
\begin{itemize}[leftmargin=*,labelsep=0.4em,itemsep=0pt,topsep=0pt,parsep=0pt]
    \item We introduce the \emph{PUCS} framework for jointly choosing which arms to probe and how to assign plays in user‑centric selection problems.
    \item For the offline setting with known distributions, we design a greedy probing algorithm that admits a constant‑factor approximation guarantee $\zeta=(e-1)/(2e-1)$.
    \item For the online setting with unknown distributions, we propose a two‑phase stochastic combinatorial bandit algorithm (\textsc{OLPA}) and establish a regret bound $\mathcal{O}\!\bigl(\sqrt{T}+\ln^{2}T\bigr)$ together with an $\Omega(\sqrt{T})$ lower bound.
    \item Extensive experiments on real‑world datasets demonstrate the empirical effectiveness of our approach over strong baselines.
\end{itemize}

\section{Related Work}

Sequential decision-making and online learning have been extensively studied in various contexts. Classical approaches such as multi-armed bandits \cite{auer2002finite,bubeck2012regret} provide foundational strategies for balancing exploration and exploitation. Recent advancements have extended these models to more complex scenarios, including combinatorial bandits \cite{chen2013combinatorial}, and multi-player MAB \cite{komiyama2015optimal} which are more relevant to our sequential user-centric selection problems.

Sequential user-centric selection problems can be viewed as a variant and extension of the multi-player multi-armed bandit (MP-MAB) framework. MP-MAB models are designed to handle situations where multiple players compete for shared arms, requiring strategies that account for both coordination and competition among players \cite{zhou2018budget,mo2023multi}. These models are particularly effective in scenarios like recommendation systems or resource allocation tasks, where multiple users interact with a common set of options. Unlike traditional MP-MABs, the user-centric selection problem allows each arm to be associated with stochastic units of resources and multiple plays can pull the same arm \cite{chen2022online}.

A ridesharing example is considered in \cite{chen2022online} about the sequential user-centric selection problem. In this scenario, pickup locations can be mapped to arms, and drivers can be modeled as plays. The cost of moving to a pickup location is related to the reward, and the ride requests arriving at each arm can be modeled as the resource. Multiple drivers can choose to drive to the same location. However, this approach overlooks the unknown information of the real world, such as real-time traffic information, which can significantly impact decision-making. 

To address these unknown factors in real‑world systems, one can actively probe
the environment and acquire side information before acting.  Selecting which
items to probe is, however, computationally intractable in general—the
underlying decision problems are typically NP‑hard \cite{goel2006asking}.
Notably, the work of Golovin and Krause on adaptive submodularity \cite{golovin2011adaptive} has provided theoretical guarantees for greedy algorithms in adaptive settings. This insight has been used
in diverse applications, including active learning \cite{guillory2010interactive}
and network monitoring \cite{leskovec2007cost}, where limited probing gathers
critical information before decisions are made.

%To address these unknown factors in such real-world scenarios, probing techniques can be employed. Probing techniques have also been explored in the context of optimization under uncertainty. The probing problems are typically NP-hard problems \cite{goel2006asking}. Notably, the work of Golovin and Krause on adaptive submodularity \cite{golovin2011adaptive} has provided theoretical guarantees for greedy algorithms in adaptive settings. These techniques have been effectively applied to a variety of problems, including active learning \cite{guillory2010interactive} and network monitoring \cite{leskovec2007cost}, where probing can be used to gather critical information before making decisions.

Recent studies~\cite{bhaskara2020adaptive}, have considered probing strategies at a certain cost to obtain information about multiple road conditions. However, they primarily focus on the probing itself without considering how to make decisions afterward. \cite{xu2021joint,xu2023online} explored the use of probing to gather additional information before making decisions in the context of a wireless network access point serving users. However, their work is limited to a single-player setting and is only applied in the context of wireless networks. \cite{zuo2020observe} proposed a general framework that incorporates observation (probing) into MP-MABs, but they only considered probing rewards from a Bernoulli distribution and could not apply their approach to sequential user-centric selection problems. In contrast, we introduce the PUCS framework, which jointly models probing and assignment in sequential user‑centric selection problems, and we consider general distributions.  Within this framework we design both offline and online algorithms and provide theoretical guarantees for each. %In contrast, we have developed a joint probing and assignment framework specifically for sequential user-centric selection problems. We have also designed both offline and online algorithms, providing theoretical analysis for each approach.

%In the domain of large language models, recent research has focused on optimizing user interactions and content recommendations. Techniques such as reinforcement learning from human feedback (RLHF) \cite{christiano2017deep} and online learning algorithms for recommendation systems \cite{zheng2018drn} are highly relevant to our work. \red{It seems that only online learning is relevant here, not RLHF or LLMs.} Our approach bridges the gap between these domains by applying probing strategies to sequential decision-making in dynamic environments, including LLM-based systems.

%The concept of sequential user-centric selection problems has also been investigated, where the goal is to optimize the allocation of resources or decisions based on sequential user interactions. Techniques such as those proposed in \cite{wu2023online} provide frameworks for dynamically adapting to user preferences and environmental changes.

%Probing strategies specifically tailored for dynamic and uncertain environments are crucial for improving decision-making efficiency. For example, \cite{bhaskara2020adaptive} explores adaptive probing mechanisms that optimize the balance between exploration and exploitation in the face of uncertainty. Our work extends these ideas by integrating resource and reward probing into a unified framework that handles non-stationary environments.

%\section{Problem Formulation}
\section{Probing‑Augmented User‑Centric Selection (PUCS) Framework}

In this section we formalise the Probing‑augmented User‑Centric Selection (PUCS) problem.  We first describe the sequential user–centric model without probing and then show how probing is integrated on top of it.

%In this section, we define the sequential user-centric decision problem. We start with the vanilla version without probing and then introduce probing on top of that.

\subsection{Arm, Play, and Resource Model}
Consider a sequential user-centric decision problem~\cite{chen2022online} over \( T \in \mathbb{N}^+ \) time slots, which is a variant of the classic multi-play multi-armed bandits problem. There are a set of \( M \in \mathbb{N}^+ \) arms and a set of \( K \in \mathbb{N}^+ \) plays. In each time slot, the decision is to assign the \( K \) plays denoted by \([K]:=\{1, 2, \ldots, K\}\) to the arm set denoted by \([M]:=\{1, 2, \ldots, M\}\). Each play can only be assigned to one arm and different plays can be assigned to the same arm.

Each arm \( m \in [M] \) has $D_{t,m}$ units of resource in time slot $t$, where $D_{t,m}$ is an i.i.d. sample from a distribution with support \(\{1, 2, \ldots, D_{\max}\}\). Let \( \bp_m = [p_{m,d}: \forall d \in \{1, 2, \ldots, D_{\max}\}] \) denote the probability mass function of \( D_{t,m} \), where \( p_{m,d} = \mathbb{P}[D_{t,m} = d] \). We define a probability mass matrix $\bP\in\mathbb{R}^{M\times D_{\max}}$ with $\bP_{m,d}=p_{m,d}$. 

Each play assigned to an arm consumes one unit of resource for that arm. If the play $k$ gets one unit of resource from arm \( m \) in time slot \( t \), it receives a reward \( R_{t,m,k} \), again an i.i.d. sample from an unknown distribution. We denote the expectation of $R_{t,m,k}$ as:
$$
\mu_{m,k}=\mathbb{E}\left[R_{t, m,k}\right], \quad \forall t \in[T], m \in[M], k\in[K] .
$$
We define the reward mean matrix as $\bmu\in\mathbb{R}^{M\times K}$ with $\bmu_{m,k}=\mu_{m,k}$. We denote the $m$-th row of $\bmu$ by $\bmu_m$ and hence $\bmu_m=\begin{bmatrix}
   \mu_{m,1} & ... &  \mu_{m,K} 
\end{bmatrix}.$
We further let \(\{F_{m,k}\}_{m\in[M],k\in[K]}\) denote the cumulative distribution function (CDF) of rewards for each arm-play pair. As shown below, \(\{F_{m,k}\}\) (rather than just \(\{\bmu_{m,k}\}\)) are needed to evaluate the objective functions, because the objective functions depend on the entire distribution of rewards, not just their expected values.

Thus, the resource and reward associated with arm \( m \in \{1, 2, \ldots, M\} \) is characterized by a list of 
the random vectors \((D_m, R_{m,1},...,R_{m,K})\), where \( D_m = [D_{t,m} : \forall t \in \{1, 2, \ldots, T\}] \) and \( R_{m,k} = [R_{t,m,k} : \forall t \in \{1, 2, \ldots, T\}] \), $\forall k\in[K]$. We assume that  \( D_{t,m} \) and \( R_{t,m,k} \), $\forall m \in[M], \forall k\in[K]$,  are  i.i.d. from each other and across time slots. 

Let \( C_{t,m} \subseteq [K] \) denote the collection of plays that pull arm $m$ in time slot $t$.  The action profile of all plays is denoted by \( \bC_t:=\{C_{t,m}:~m\in[M] \}  \). %We note that $|C_{t,m}|$ is the cardinality of the set $C_{t,m}$, and hence 
Let $|C_{t,m}|$ denote the number of plays assigned to arm $m$ in time $t$. 

Below we give two examples of the above model.
\begin{itemize}
\item Ridesharing Services: Plays can be mapped to cars, arms to pickup locations, resources to passengers, and rewards to fares. The decision maker must decide which cars to send to which locations and multiple cars can be sent to the same location.

\item Content Recommendation: Plays can be mapped to recommendation slots, arms to different content pieces, resources to user engagement (e.g., clicks or views), and rewards to the engagement level (e.g., watch time or read depth). The decision maker needs to assign content to recommendation slots and the same piece of content can be assigned to multiple slots.
\end{itemize}

\subsection{Probing Model} 

In the original user-centric selection problem~\cite{chen2022online}, similar to the classic multi-armed bandits setting, the instantaneous reward associated with an arm and the number of resource units available at an arm is observed only when the arm is played (i.e., assigned to a play). A key observation of this paper is that probing can often be utilized to gather additional on‑demand information in practice. For example, in ridesharing, probing can be used to check real-time traffic conditions~\cite{bhaskara2020adaptive} or get more accurate information about passenger availability and fare rates. Similarly, in content recommendation, probing in the form of A/B tests or surveys can be used to gauge user interest before making broader recommendations.

Probing differs fundamentally from prediction: it measures the current state directly (e.g.\ actual queue length, current link quality), avoiding model‑drift errors that accumulate in time‑varying systems.  However, probes consume limited system resources such as driver detours, sensing bandwidth, or user attention.  We therefore assume a per‑round budget that restricts the learner to probe at most \(I\) arms, modelling the realistic “sense‑some‑but‑not‑all” constraint used in prior work on adaptive sensing and opportunistic measurement~\cite{bhaskara2020adaptive,xu2023online}.  Because the platform (e.g.\ TNC backend, recommender engine, WLAN controller) controls where to send drivers, which articles to A/B‑test, or which channels to sense, it can decide which subset \(S_t\) of arms to probe in each round, consistent with existing deployments.

To this end, we assume that in each time slot $t$, the decision maker can probe a subset of arms $S_t \subseteq [M]$ before the arm-play assignment is made. For each probed arm $m \in S_t$, the decision maker immediately observes the realization of both $D_{t,m}$ and $R_{t,m,k}$, for all $k \in [K]$. This is a reasonable assumption in practice. For example, a single probe at a pickup location reveals its current vacancy and provides an estimate of the nearby traffic conditions, which determines the pickup cost.
%\blue{Tianyi: We use one example to show this probing mechanism is reasonable. A scheduler assigns servers (arms) to tasks (plays) in a cloud computing environment. In each time slot \( t \), the scheduler can probe one or more servers to assess their current resource availability \( D_{t,m} \) (e.g., CPU, memory). At the same time, the scheduler can immediately observe the potential performance \( R_{t,m,k} \) for all tasks \( k \) on the probed server \( m \).
%In cloud computing, it’s common to probe servers to evaluate the real-time match between resources and tasks. This allows the scheduler to make optimal resource allocation decisions based on the current system state, thereby optimizing overall performance.} 
%\blue{Examples are needed here to justify why this is reasonable} %We further assume that up to $I$ arms can be probed in a single time round before the play-arm assignment is made.

%\blue{The notations might be confusing. Instead of $N_{t,m}$ and $X_{t,m,k}$, it is better to use $d_{t,m}$ and $r_{t,m}$.}

Let $N_{t,m}$ denote the realization of $D_{t,m}$, and $X_{t,m,k}$ the realization of \( R_{t,m,k} \), and let \( \bX_{t,m} := [X_{t,m,k} : \forall k \in[K] ] \). In each time slot, the decision maker first probes a subset of arms $S_t\subseteq [M]$ and based on the values of $N_{t,m}$ and $\bX_{t,m}$, $\forall~m\in S_t$, the decision maker assigns $K$ plays to $M$ arms and each play gets a reward if it receives one unit of resource from the assigned arm.

%\red{The notations are hard to follow. Why can't we just use $d_{t,m}$ and $r_{t,m,k}$ to denote the realizations? Also, we should use D and R whenever possible. It seems that we don't really need to use N and X in many of the discussions below. }

%We denote the realization of $D_{t,m}$ (units of resource of arm $m$) by $N_{t,m}$ in time slot $t$; the reward realization is denoted by \( \bX_{t,m} = [X_{t,m,k} : \forall k \in[K] ] \), where \( X_{t,m,k} \) is the reward realization of \( R_{t,m,k} \) in round \( t \). In each time slot, the decision maker first probes a subset of arms $S_t\subseteq [M]$ and based on the values of $N_{t,m}$ and $\bX_{t,m}$, $\forall~m\in S_t$, the decision maker assigns $K$ plays to $M$ arms and each play gets some rewards if it receives one unit of resource from the assigned arm.

We then derive the total rewards from an arm $m$ by distinguishing two cases. First, assume that $m$ is in the probing set $S_t$. If play $k$ receives one unit of resource from arm \( m \), the reward received by play $k$ is the same as the probed value, which is $X_{t,m,k}$. That is, we assume that probing is accurate and the random variable $R_{t,m,k}$ stays the same within a time round. Similarly, we assume the random variable $D_{t,m}$ stays the same within a time round. That is, the decision maker observes the actual amount of resources, $N_{t,m}$, through probing if $m\in S_t$. %Recall that $N_{t,m}$ is the realization of $D_{t,m}$ in time slot $t$, so if $m\in S_t$, $D_{t,m}$ is the actual number of resource of arm $m$ in time slot $t$. 
If \( N_{t,m} \geq |C_{t,m}| \), each play receives one unit of resource and the total rewards returned by arm $m$ is $\sum_{k\in C_{t,m}} X_{t,m,k}$. If \( N_{t,m} < |C_{t,m}| \), only \( N_{t,m} \) plays get resource, and the remaining \(  |C_{t,m}| - N_{t,m} \) plays in $C_{t,m}$ do not get resources. %In the case that \( N_{t,m} < |C_{t,m}| \), 
We further assume that the plays associated with larger rewards are given priority. Formally, let  $\bX_{t,m}({C_{t,m}}):=\{\bX_{t,m,k}: k\in C_{t,m}\}$ and let $\bX_{t,m}^{\text{sort}}$ be the sequence generated by sorting $\bX_{t,m}({C_{t,m}})$ from largest to smallest. Then the total rewards returned by arm $m$ is  $\sum_{1 \leq i \leq N_{t,m}}\bX_{t,m,i}^{\text{sort}}$ if  \( N_{t,m} < |C_{t,m}| \). To summarize, if $m\in S_t$, the total rewards returned by arm $m$ is
\begin{align*}    
&\mathcal{R}_{m}^{\text{prob}}(C_{t,m}; \bX_{t,m},N_{t,m}) =\sum_{i=1}^j X_{t,m,i}^{\text{sort}}\\
&~~~~~~~~~~~~~j:=\min\{N_{t,m}, |C_{t,m}| \}
\end{align*}

We then consider the case when arm $m$ is not in the probing set $S_t$. In this case, if play $k$ receives one unit of resource from arm \( m \), the reward received by play $k$ is a sample of $R_{t,m,k}$. Further, the value of $D_{t,m}$ (i.e., the actual number of resources of arm $m$ in time slot $t$) is unknown, although we assume it is fixed %although the random variable $D_{t,m}$ stays the same 
within a time round.  If \( D_{t,m} \geq |C_{t,m}| \), each play receives one unit of resource, and the expected total rewards returned by arm $m$ is $\sum_{k\in C_{t,m}} \mu_{m,k}$. If \( D_{t,m} < |C_{t,m}| \), only \( D_{t,m} \) units of resource are allocated, and \(  |C_{t,m}| - D_{t,m} \) plays in $C_{t,m}$ do not get resources. In this case, we assume that the plays associated with larger \textbf{expected} rewards are given priority. %In the case that \( N_{t,m} < |C_{t,m}| \), we assume that the play associated with a larger expected reward $\mu_{m,k}$ gets the resource earlier. 
Formally,  let $\bmu_{m}({C_{t,m}}):=\{\mu_{m,k}: k\in C_{t,m}\}$ and $\bmu_{m}^{\text{sort}}$ be the sequence generated by sorting $\bmu_{m}({C_{t,m}})$ from largest to smallest. Then the expected total rewards returned by arm $m$ is  $\sum_{1 \leq i \leq D_{t,m}}\bmu_{m,i}^{\text{sort}}$ if  \( D_{t,m} < |C_{t,m}| \).  From the assumed independence between $D_{t,m}$ and $(R_{t,m,1},...,R_{t,m,K})$, %if $m\in S_t$, 
it can be shown that the total expected reward returned by arm $m$ is 
%\blue{This result is not straightforward. A proof is needed.} \blue{YC: I wrote a comment about this before the reference section.}
\begin{align*}    
&\mathcal{R}_m(C_{t,m}; \bmu_{m}, \bp_m)\\
:=&\mathbb{E} \left[\sum_{i=1}^j \bmu_{m,i}^{\text{sort}}\right],~j:=\min\{D_{t,m}, |C_{t,m}| \} \\
= & \sum_{d=1}^{ |C_{t,m}| } \left( p_{m,d}\sum_{i=1}^d \bmu_{m,i}^{\text{sort}}\right) + \sum_{d=|C_{t,m}|+1}^{D_{\max}}  p_{m,d}\left(\sum_{i=1}^{|C_{t,m}|} \bmu_{m,i}^{\text{sort}}\right). 
\end{align*}
by adapting a similar argument in~\cite{chen2022online}. A detailed proof is given in the supplementary material. In the special case that $\mu_{m,k_1}=\mu_{m,k_2}$ for any $k_1,k_2\in [K]$, the above definition of $\mathcal{R}_m(C_{t,m}; \bmu_{m}, \bp_m)$ reduces to the definition in \cite{chen2022online}.

%\subsection{Probing Model}
To model the probing overhead, we further assume that up to $I$ arms can be probed in a single time round and the final total rewards obtained for probing $S_t$ and action profile $\bC_t$ is 
\begin{align*}
    \mathcal{R}_t^{\text{total}}=\left(1 - \alpha( |S_t|) \right)\left(\sum_{m\in S_t} \mathcal{R}_{m}^{\text{prob}} + \sum_{m\in [M]\setminus S_t}\mathcal{R}_{m}\right)
\end{align*}
where $\alpha:\{0,1,...,I\}\mapsto [0,1]$ is a non-decreasing function and $\alpha(0)=0$, $\alpha(I)=1$. Here $\alpha(S)$ captures the probing overhead as the percentage of reward loss, which is monotonic with the size of $S$. %percentage of the final reward as the cost of probing arm set $S$.
%Tianyi: Here we use a reward ratio for probing cost instead of a fixed cost, which 
This is appropriate because the probing impact often scales with the system's overall performance. In many scenarios, the opportunity cost of probing - such as time, resources, or energy - depends on the potential rewards that could have been achieved without probing. A similar approach has been adopted in~\cite{zuo2020observe,xu2023online}.

\subsection{Objective and Regret}
\ignore{
To begin with, We note that the total reward \(\mathcal{R}_t^{\text{total}}\) depends on:
\begin{itemize}
    \item the probing set \( S_t \);
\item Realizations \( X_{t,m,k} \) for \( m \in S \) and \( k \in [K] \);
\item Realizations \( N_{t,m} \);
\item Probability \( \bp_m \) for \( m \notin S \);
\item Expected rewards \(\mu_{m,k}\) for \( m \notin S \) and \( k \in [K] \);
\item The action profile $\bC_t$, which determines the assignment policy \( C_{t,m} \) for each arm $m$.
\end{itemize}

\red{in the following, we omit $t$}

From the ob
Given probing set $S$ and realization of reward $\bX_{m}$ and number of resource $N_{m}$, we define the optimal action profile $\bC^*(S,\bX_{m},N_{m})$ to be such that maximize total reward \(\mathcal{R}_t^{\text{total}}\). Under the optimal action profile $\bC^*(S,\bX_{m},N_{m})$, 
}

We then define the objective of our Probing‑augmented User‑Centric Selection (PUCS) problem. Recall that in each time slot $t$, the decision maker first picks a subset of arms $S_t$ to probe and observes the instantaneous rewards and amount of resources for each arm in the probed set. It then determines the set of plays $C_{t,m}$ assigned to each arm $m$. 

We first consider the offline case where $\bp_m$ and $\bmu_m$ are known a priori. In this case, it suffices to consider each time slot separately and we drop the subscript $t$ to simplify the notation. We first observe that when the probing set $S$ is fixed and the observations from probing are given, the optimal assignment can be obtained. Formally,  let $h_{\text{total}}(S,\{N_m,\bX_m\}_{m}, \{\bp_m,\bmu_m\}_{m\notin S})$ denote the maximum expected reward in a time slot with probing set $S$ and the given realizations of rewards and resources. That is, %denote  subset and then determined The objective is to select an ``optimal'' probing set and then select the action profile that maximizes the total reward \(\mathcal{R}_t^{\text{total}}\) based on the probing set and the values of realizations.

%we denote the total reward by $h_{\text{total}}$. Formally, $h_{\text{total}}$ is defined as follows:

%Note that $h_{\text{total}} (S,\{X_{t,m,k} \}\limits_{\substack{{m\in S}\\{k\in[K]} } },\{N_{t,m}\}_{m\in S})$ is the value of optimal matching, . Formally, $h_{\text{total}}$ is defined as follows:
%\begin{equation}
%\begin{aligned}
%&h_{\text{total}}(S,\{N_m,\bX_m\}_{m\in S}, \{\bp_m,\bmu_m\}_{m\notin S})\\
%:= &          \max\limits_{\substack{ C_{m}\subseteq [K]  \\ m\in [M]} }  
%   \left( \sum_{m\in M\setminus S}\mathcal{R}_m(C_{m}; \bmu_m, \bp_m) + \sum_{m\in S} \mathcal{R}_{m}^{\text{prob}}(C_{m}; \bX_{m},N_{m})\right.\\\
%&\text { s.t. } C_{m_1}\cap C_{m_2}=\emptyset, ~\forall m_1\neq m_2,   m_1,m_2\in [M] 
%\end{aligned}
%\end{equation}
\begin{equation}
\begin{aligned}
&h_{\text{total}}(S,\{N_m,\bX_m\}_{m\in S}, \{\bp_m,\bmu_m\}_{m\notin S}) \\
:= & \max\limits_{\substack{C_{m}\subseteq [K] \\ m\in [M]}} \left( \sum_{m \in M\setminus S} \mathcal{R}_m(C_{m}; \bmu_m, \bp_m) \right. \\
& \phantom{ \max\limits_{\substack{C_{m}\subseteq [K] \\ m\in [M]}} } \left. + \sum_{m \in S} \mathcal{R}_{m}^{\text{prob}}(C_{m}; \bX_{m},N_{m}) \right) \\
&\text{s.t.} \quad C_{m_1} \cap C_{m_2} = \emptyset, \,\forall m_1 \!\neq\! m_2, \, m_1, m_2 \in [M]
\end{aligned}
\end{equation}

\noindent where the constraints ensure that each play is assigned to at most one arm. This problem can be solved by adapting Algorithm 1 in~\cite{chen2022online} that considers the special case when $S$ is empty. This is further elaborated in Observation 1 in the supplementary material.
%and the total number of plays assigned to each arm is equal to the cardinality of \(C_{m}\). 
%Let $\bC^*$ denote the optimal assignment for the above problem.} %that is,
%\begin{align*}
%    \bC^*=\arg\max  \mathcal{R}^{\text{total}}(S,\{N_{m},\bX_{m} \}_{m\in S}, \{\bp_m,\bmu_{m}\}_{m\notin S},\bC)
%\end{align*}

%\subsection{Objective and Regret}
%The objective is to select an ``optimal'' probing set and then select the action profile that maximizes the total reward \(\mathcal{R}_t^{\text{total}}\) based on the probing set and the values of realizations.

%\blue{Tianyi: Should add why we only consider optimal probing set, but not consider matching. Since we already know optimal matching strategy based on AAAI 2020 paper} 

%\blue{ZZ: Is this simply an assignment problem that can be solved using linear programming? }

%\blue{YC: To be precise, it can be written as a 0-1 integer linear programming problem, which is usually NP-hard. The classical linear programming(variables don't have to be integer) can be solved very efficiently.}

From the above discussion, %\blue{and Observation~\ref{optimal_matching}(about optimal assignment policy) in section Evaluation of probing Policy and Joint Policy of Offline Setting}, 
the joint probing and assignment problem simplifies 
to finding an optimal probing set $S$. Let $f(S)$ denote the %determine 
the optimal total expected reward that can be obtained for a given probing set $S$, that is

%\begin{align*}
%f(S):=\mathbb{E}_{\substack{X_{t,m,k}\sim R_{t,m,k} \\ N_{t,m}\sim D_{t,m}\\ m\in S,~i\in[K] }} h_{\text{total}}(S,\{N_m,\bX_m\}_{m}, \{\bp_m,\bmu_m\}_{m\notin S}). 
%\end{align*}

\begin{align*}
f(S):=\mathbb{E}_{ \substack{X_{t,m,k}\sim R_{t,m,k} \\ N_{t,m}\sim D_{t,m}\\ m\in S,~i\in[K] } } 
& \, h_{\text{total}}\left(S, \{N_m,\bX_m\}_{m}, \right.\\
& \, \hspace{1.2cm} \left.\{\bp_m,\bmu_m\}_{m\notin S}\right).
\end{align*}
Then the offline problem is  
to finding a probing set $S$ to maximize $R(S):=(1-\alpha(|S|))f(S)$, that is, 
%\(\mathbb{E}[h_{\text{total}}(S,\bX_{m},N_{m})]\). Strictly speaking, we need to solve the following optimization problem
\begin{equation}\label{eq:optimal_probing}
      S^*:= \arg\max_{S\subseteq [M]} R(S)
\end{equation}
Finding the optimal solution to this problem is hard for general reward and resource distributions. Fortunately, there is an efficient approximation algorithm with a constant approximation factor as we show in the next section by exploring the structure of $f(S)$. Specifically, we say that an offline algorithm is an \(\zeta\)-approximation $(0 < \zeta\ \leq 1)$ if it achieves an expected $R(S_t)$ of at least \(\zeta R(S^*_t)\).
%In the next section, we give an approximation solution to this problem with a constant approximation factor $\zeta=\frac{e-1}{2 e-1}$.}
%where $R(S):=(1-\alpha(|S|))f(S)$. %and
%\begin{align*}
%f(S):=\mathbb{E}_{\substack{X_{t,m,k}\sim R_{t,m,k} \\ N_{t,m}\sim D_{t,m}\\ m\in S,~i\in[K] }} h_{\text{total}}. 
%\end{align*}
%\begin{align*}
%f(S):=\mathbb{E}\limits_{\substack{X_{t,m,k}\sim R_{t,m,k} \\ N_{t,m}\sim D_{t,m}\\ m\in S,~i\in[K] }   } h_{\text{total}}. 
%\end{align*}

% The optimal policy is to make decisions on both probing and matching such that:

% \[
% (a^*, S^*) = \arg\max_{(a, S)} \mathbb{E}_{X_{t,m} \sim R_{t,m}} \mathcal{R}_t'
% \]

In the more challenging online setting with unknown $\bp_m$ and $\bmu_m$, we quantify the performance of a probing policy via the regret \( \mathcal{R}_{\text{regret}}(T) \), which is defined as the difference between the cumulative reward of the optimal probing policy and the cumulative reward of our probing strategy. %\blue{Tianyi: Since for general reward distribution for arm-play pair, finding the optimal solution can be computationally challenging, even when the reward distributions are perfectly known (in the offline setting). However, efficient approximation algorithms may exist. Specifically, we say that an offline algorithm is an \(\zeta\)-approximation $(\(0 < \zeta\ \leq 1)$ if it achieves an expected $R(S_t)$ of at least \(\zeta R(S^*_t)\). 
In this scenario, it is challenging to achieve sublinear regret with respect to the optimal $S^*_t$ as commonly considered in the MAB literature. Instead, our objective is to achieve sublinear \(\zeta\)-approximation regret, defined as follows~\cite{chen2016combinatorial}:
\begin{align}
\mathcal{R}_{\text{regret}}(\zeta,T) = & \sum_{t=1}^{T} \zeta  R(S^*_t) - \sum_{t=1}^{T} R(S_t)
\label{regret}
\end{align}
%}
%\blue{justify why we need a coefficient $\zeta$}

\section{The Offline Setting}\label{sec:offline}

In the offline PUCS setting, the probability mass matrix $\bP$ and the true CDF $\{F_{m,k} \}_{m\in[M],k\in[K]}$ of rewards for each arm-play pair are known. %mean matrix $\bmu$. 
We aim to determine the optimal probing set that maximizes the expected total reward, which is in the equation (\ref{eq:optimal_probing}). 
This maximization problem is computationally challenging; in fact, similar problems in related settings have been shown to be NP-hard in prior work \cite{goel2006asking}. Consequently,
%Since it is hard to find the optimal solution for this problem, 
 we design a surrogate objective function $f_{\text{prob}}(S)$ that captures the marginal value of probing. We further prove that this surrogate is both monotonic and submodular.   Leveraging these properties, a greedy algorithm (Algorithm~\ref{offline_greedy})
provably achieves a constant‑factor approximation to the optimal offline
solution.

\subsection{Offline Greedy Probing}
For small problems, the optimal solution for the offline setting can be obtained by solving Problem~\eqref{eq:optimal_probing} with an exhaustive search. 
For larger instances, however, a more efficient solution is needed. In this section, we derive a greedy algorithm that provides a constant factor approximation. 
One work \cite{nemhauser1978analysis} states that a simple greedy algorithm with an objective function that is monotone and submodular can obtain an approximation factor of an approximation factor of $1-1/e$. Based on this, we construct our objective function $f_{\text{prob}}(S)$ that is monotone and submodular shown in Lemma \ref{lemma3} and Lemma \ref{lemma4} and design an Offline Greedy Probing algorithm to obtain the approximation solution. 

Let $h_{\text{prob}}$ denote the value of the optimal assignment 
for the special case where all the plays can only select the probed arms. 
Formally,  given a probing set $S$ and the realization of rewards $\bX_m$ and number of resources $N_{m}$, $h_{\text{prob}} (S,\{\bX_m,N_{m}\}_{m\in S})$ is defined as

\begin{equation}
\begin{aligned}
&h_{\text{prob}}:= 
          \max\limits_{\substack{ C_{m}\subseteq [K],  \\ m\in S} }     
   \left(\sum_{m\in S} \mathcal{R}_{m}^{\text{prob}}(C_{m}; \bX_{m},N_{m}) \right)\\
&\text { s.t. }~  C_{m_1}\cap C_{m_2}=\emptyset, ~\forall m_1\neq m_2,   m_1,m_2\in S
\end{aligned}
\end{equation}

$h_{\text{prob}}$ has an optimal assignment policy based on Observation 1 in the supplementary material. We then define 
\[ f_{\text{prob}}(S):=\mathbb{E}_{\substack{X_{m,k}\sim R_{m,k} \\ N_{m}\sim D_{m}\\ m\in S,~k\in[K] }   } ~h_{\text{prob}} (S,\{ \bX_m, N_{m}\}_{m\in S}).  \]
$f_{\text{prob}}(S)$ is monotone and submodular shown in Lemma 3 and Lemma 4, which is used for proving Theorem 1.

Similarly, we use $f_{\text{unprobed}}$ to denote the value of the optimal assignment for the case where all the plays can only select the unprobed arms. Formally,  given a probing set $S$ , we have %$f_{\text{unprobed}} (S )$ is defined as
\begin{equation}
\begin{aligned}
&f_{\text{unprobed}}(S):= 
          \max\limits_{\substack{ C_{m}\subseteq [K],  \\ m\in [M]\setminus S}}   
   \left(\sum_{m\in M\setminus S}\mathcal{R}_m(C_{m}; \bmu_m, \bp_m) \right)\\
&\text { s.t. }~  C_{m_1}\cap C_{m_2}=\emptyset, ~\forall m_1\neq m_2,   m_1,m_2\in  [M]\setminus S
\end{aligned}
\end{equation}

We note that if $S=\emptyset$, $R(S)=f(S)=f_{\text{unprobed}}(S)$. With these notations, we are ready to explain our greedy probing algorithm for the offline setting (Algorithm \ref{offline_greedy}). It starts by initializing \( I \) empty sets \( S_i \) for \( i = 0 \) to \( I-1 \), where \( I \) is the probing budget (lines 1-2). In each iteration from 1 to \( I-1 \), the algorithm selects the arm \( m \) that, when added to the current set \( S_{i-1} \), provides the maximum marginal increase in the expected reward function \( f_{\text{prob}} \) (line 4). The algorithm then updates the probing set \( S_i \) to include the newly selected arm (line 5). It determines the optimal probing set \( S^{\rm pr}  \) by selecting the set that maximizes the adjusted reward function \((1-\alpha(i)) f_{\text{prob}}(S_i)\), where \(\alpha(i)\) captures the probing cost as a function of the set size (lines 6-7).
If the adjusted reward for the selected probing set \( S_j \) is less than the reward obtained without probing, \( f_{\text{unprobed}}(\emptyset) \), then the final probing set \( S^{\rm pr} \) is set to be empty (lines 8-9).

To show that the greedy algorithm is nearly optimal, we first establish the following properties. The proofs are given in the supplementary material.

\begin{lemma}
Given a probing set $S$, it holds that 
\[f(S)\leq f_{\text{prob}}(S)+ f_{\text{unprobed}}(S).\]
\label{lemma1}
\end{lemma}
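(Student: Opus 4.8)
The plan is to prove the stated inequality \emph{pointwise}, for each fixed realisation of the probed‑arm observations $\{N_m,\bX_m\}_{m\in S}$, and then pass to the expectation. The structural observation driving the argument is that $h_{\text{total}}$ optimises a single play‑to‑arm assignment $\{C_m\}_{m\in[M]}$ in which a play may be routed either to a probed arm (scored by $\mathcal{R}_m^{\text{prob}}$) or to an unprobed arm (scored by $\mathcal{R}_m$), whereas $h_{\text{prob}}$ and $f_{\text{unprobed}}$ solve the two \emph{restricted} problems in which plays are confined to $S$ and to $[M]\setminus S$, respectively. The natural move is therefore to take an optimal assignment of the joint problem and split it along the partition $\{S,\,[M]\setminus S\}$: each half is feasible for one of the restricted problems, so the joint optimum cannot exceed the sum of the two restricted optima.

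Concretely, I would first fix the realisation $\{N_m,\bX_m\}_{m\in S}$ and let $\{C_m^{*}\}_{m\in[M]}$ be a maximiser attaining $h_{\text{total}}$, so that
\[
h_{\text{total}}=\sum_{m\in[M]\setminus S}\mathcal{R}_m(C_m^{*};\bmu_m,\bp_m)+\sum_{m\in S}\mathcal{R}_{m}^{\text{prob}}(C_m^{*};\bX_m,N_m).
\]
Since the sets $C_m^{*}$ are pairwise disjoint over all of $[M]$, the sub‑collection $\{C_m^{*}\}_{m\in S}$ is itself pairwise disjoint and hence a feasible point of the $h_{\text{prob}}$ programme, and likewise $\{C_m^{*}\}_{m\in[M]\setminus S}$ is feasible for the $f_{\text{unprobed}}$ programme. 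Because the value of any feasible point is at most the optimum of its maximisation, we obtain
\[
\sum_{m\in S}\mathcal{R}_{m}^{\text{prob}}(C_m^{*};\bX_m,N_m)\le h_{\text{prob}}\bigl(S,\{\bX_m,N_m\}_{m\in S}\bigr),
\]
\[
\sum_{m\in[M]\setminus S}\mathcal{R}_m(C_m^{*};\bmu_m,\bp_m)\le f_{\text{unprobed}}(S).
\]
Adding these two bounds gives $h_{\text{total}}\le h_{\text{prob}}\bigl(S,\{\bX_m,N_m\}_{m\in S}\bigr)+f_{\text{unprobed}}(S)$ for every realisation.

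Finally, I would take the expectation over the probed‑arm realisations $\{N_m,\bX_m\}_{m\in S}$ on both sides. By definition the left‑hand side becomes $f(S)$ and the expectation of $h_{\text{prob}}$ becomes $f_{\text{prob}}(S)$. The term $f_{\text{unprobed}}(S)$ depends only on the fixed distributions $\{\bp_m,\bmu_m\}_{m\notin S}$ and not on the random observations, so it is constant under this expectation and passes through unchanged. By linearity of expectation we conclude $f(S)\le f_{\text{prob}}(S)+f_{\text{unprobed}}(S)$, as claimed.

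I expect no genuine difficulty here: the argument is a standard constraint‑splitting (relaxation) bound, in which decoupling the two groups of arms can only enlarge each restricted feasible region and thus inflate each partial objective. The single point that must be stated carefully is that restricting a globally disjoint family of play‑sets to either side of the partition preserves disjointness, so the split assignments are genuinely feasible for the two sub‑programmes; everything else then follows from the definition of a maximum and from linearity of expectation.
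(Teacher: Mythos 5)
Your proposal is correct and follows essentially the same argument as the paper: split the optimal assignment attaining $h_{\text{total}}$ along the partition $\{S,\,[M]\setminus S\}$, observe that each half is feasible for the restricted programme ($h_{\text{prob}}$ resp.\ $f_{\text{unprobed}}$), bound each partial sum by the corresponding optimum, and pass to expectation over the probed realisations. The only cosmetic difference is that the paper first rewrites all three quantities as expectations over a common joint distribution of all realisations, whereas you note directly that $f_{\text{unprobed}}(S)$ is deterministic given the distributions and passes through the expectation unchanged; these are equivalent.
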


\begin{lemma}
$f_{\text{unprobed}}(S)$ is monotonically decreasing, i.e., for any $S\subseteq T\subseteq [M]$,  $f_{\text{unprobed}}(S)\geq  f_{\text{unprobed}}(T) $. 
\label{lemma2}
\end{lemma}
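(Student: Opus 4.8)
The plan is to prove this by a direct feasibility-embedding argument. Since $S\subseteq T$ forces the unprobed arm set $[M]\setminus T$ to be a subset of $[M]\setminus S$, the optimization defining $f_{\text{unprobed}}(T)$ ranges over a smaller collection of arms than the one defining $f_{\text{unprobed}}(S)$. The whole proof then reduces to showing that any assignment feasible for the $T$-problem lifts to a feasible assignment for the $S$-problem with identical objective value, so that maximizing over the (larger) $S$-domain can only increase the optimum.

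First I would record the elementary fact that assigning no plays to an arm yields zero reward, i.e.\ $\mathcal{R}_m(\emptyset;\bmu_m,\bp_m)=0$. This is immediate from the closed form of $\mathcal{R}_m$: when $|C_m|=0$ both outer summations collapse and the inner sums $\sum_{i=1}^{0}\bmu_{m,i}^{\text{sort}}$ are empty, so the entire expression vanishes.

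Next I would take an optimal assignment $\{C_m^\ast\}_{m\in[M]\setminus T}$ attaining $f_{\text{unprobed}}(T)$ and extend it to an assignment over all of $[M]\setminus S$ by setting $C_m:=\emptyset$ for every $m\in T\setminus S$. Because the appended sets are empty, the pairwise-disjointness constraint $C_{m_1}\cap C_{m_2}=\emptyset$ is trivially preserved, so the extended profile is feasible for the $S$-problem. Its objective value is $\sum_{m\in[M]\setminus T}\mathcal{R}_m(C_m^\ast;\bmu_m,\bp_m)+\sum_{m\in T\setminus S}\mathcal{R}_m(\emptyset;\bmu_m,\bp_m)=f_{\text{unprobed}}(T)+0$. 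Since $f_{\text{unprobed}}(S)$ is the maximum over \emph{all} feasible $S$-assignments, it is at least the value of this particular one, which gives $f_{\text{unprobed}}(S)\geq f_{\text{unprobed}}(T)$ as required.

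I do not expect any genuine obstacle: the statement is an instance of the principle that enlarging the feasible set cannot decrease the maximum, and the only thing to check is that the extra arms in $T\setminus S$ can be ``switched off'' at zero reward without violating disjointness, which is exactly what the empty-assignment embedding accomplishes. I would also note that no non-negativity assumption on the rewards is needed, since the comparison is carried out through an explicit value-preserving embedding rather than by discarding terms.
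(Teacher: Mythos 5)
Your proposal is correct and follows essentially the same argument as the paper's own proof: both take an optimal assignment for the $T$-problem, extend it by assigning $C_m=\emptyset$ to the arms in $T\setminus S$ (which contribute zero reward and preserve disjointness), and conclude that this feasible assignment for the $S$-problem witnesses $f_{\text{unprobed}}(S)\geq f_{\text{unprobed}}(T)$. Your explicit verification that $\mathcal{R}_m(\emptyset;\bmu_m,\bp_m)=0$ is a small bonus of rigor that the paper leaves implicit.
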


\begin{lemma}
$f_{\text{prob}}(S)$ is monotonically increasing, i.e., for any $S\subseteq T\subseteq [M]$,  $f_{\text{prob}}(S)\leq  f_{\text{prob}}(T) $. 
\label{lemma3}
\end{lemma}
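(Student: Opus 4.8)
The plan is to establish the inequality pointwise for every realization of rewards and resources on the larger set, and then lift it to the expectations that define $f_{\text{prob}}$. The guiding intuition is that enlarging the probing set only enlarges the feasible region of the inner assignment problem that defines $h_{\text{prob}}$, so its optimal value cannot decrease; the expectation step must then be handled with a little care because the two quantities integrate over coordinates indexed by different sets.

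First I would fix an arbitrary joint realization $\{\bX_m, N_m\}_{m \in T}$ of the rewards and resources on $T$, which in particular induces a realization $\{\bX_m, N_m\}_{m \in S}$ on $S$. Let $\{C_m^*\}_{m \in S}$ be an optimal assignment attaining $h_{\text{prob}}(S, \{\bX_m, N_m\}_{m \in S})$; by definition these are pairwise disjoint subsets of $[K]$. I would then define a candidate assignment for the $T$-problem by setting $C_m = C_m^*$ for $m \in S$ and $C_m = \emptyset$ for $m \in T \setminus S$. This candidate is feasible for the $T$-problem, since padding with empty sets preserves the disjointness constraint $C_{m_1} \cap C_{m_2} = \emptyset$, and each $m \in S$ still draws plays only from the probed arms.

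The substantive step is that the objective value is preserved under this embedding: an arm with no assigned play contributes $\mathcal{R}_m^{\text{prob}}(\emptyset; \bX_m, N_m) = 0$, because the summation runs up to $j = \min\{N_m, 0\} = 0$. Hence the value of the candidate equals $\sum_{m \in S} \mathcal{R}_m^{\text{prob}}(C_m^*; \bX_m, N_m) = h_{\text{prob}}(S, \{\bX_m, N_m\}_{m \in S})$, and since $h_{\text{prob}}(T, \{\bX_m, N_m\}_{m \in T})$ is the maximum over all feasible $T$-assignments it dominates this particular value. This yields the pointwise bound $h_{\text{prob}}(S, \{\bX_m, N_m\}_{m \in S}) \leq h_{\text{prob}}(T, \{\bX_m, N_m\}_{m \in T})$. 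I note that this argument requires no sign assumption on the rewards, precisely because we pad with empty sets rather than discarding plays.

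The main obstacle, and the only genuinely delicate point, is the passage to expectations: $f_{\text{prob}}(S)$ integrates over the coordinates indexed by $S$, whereas $f_{\text{prob}}(T)$ integrates over those indexed by $T$. I would resolve this by observing that $h_{\text{prob}}(S, \{\bX_m, N_m\}_{m \in S})$ does not depend on the extra coordinates $\{\bX_m, N_m\}_{m \in T \setminus S}$, so by the assumed independence of the arm realizations I may equivalently view $f_{\text{prob}}(S)$ as an expectation over the full vector $\{\bX_m, N_m\}_{m \in T}$. Both $f_{\text{prob}}(S)$ and $f_{\text{prob}}(T)$ are then expectations with respect to the same randomness, and applying monotonicity of expectation to the pointwise inequality gives $f_{\text{prob}}(S) \leq f_{\text{prob}}(T)$, as claimed.
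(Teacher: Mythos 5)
Your proposal is correct and matches the paper's own proof essentially step for step: both arguments first use independence to rewrite $f_{\text{prob}}(S)$ as an expectation over the full realization vector, then establish the pointwise bound $h_{\text{prob}}(S,\cdot)\leq h_{\text{prob}}(T,\cdot)$ by extending the optimal $S$-assignment with empty sets $C_m=\emptyset$ for $m\in T\setminus S$, which is feasible for the $T$-problem and contributes zero reward. Your explicit remark that the padded arms contribute $\mathcal{R}_m^{\text{prob}}(\emptyset;\bX_m,N_m)=0$ because $j=\min\{N_m,0\}=0$ is a slightly more careful justification of a step the paper leaves implicit.
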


\begin{lemma}
$f_{\text{prob}}(S)$ is submodular.
\label{lemma4}
\end{lemma}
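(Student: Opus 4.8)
The plan is to prove submodularity first \emph{pointwise}, for every fixed realization of the probed quantities, and then lift it through the expectation. Since $h_{\text{prob}}(S,\{\bX_m,N_m\}_{m\in S})$ depends only on the realizations of the arms inside $S$, extending the expectation to all $M$ arms does not change its value, so we may regard $f_{\text{prob}}(S)=\mathbb{E}\,h_{\text{prob}}$ as the expectation, over a single common realization of all $M$ arms, of the quantity $g(S):=h_{\text{prob}}(S,\{\bX_m,N_m\}_{m\in S})$. Because a nonnegative average (integral) of submodular functions is again submodular, it suffices to show that $g$ is submodular in $S$ for each fixed draw of $\{(\bX_m,N_m)\}_{m\in[M]}$.

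Next I would recast $g(S)$ as the optimal value of a weighted bipartite assignment problem. Recall that $\mathcal{R}_m^{\text{prob}}(C_m;\bX_m,N_m)$ keeps only the $\min\{N_m,|C_m|\}$ largest rewards among the plays in $C_m$; hence in any optimum no arm is assigned more than $N_m$ plays, and $g(S)=\max\sum_{m\in S}\sum_{k\in C_m}X_{m,k}$ over pairwise disjoint $C_m\subseteq[K]$ with $|C_m|\le N_m$ (a play with nonpositive reward is simply never assigned). Splitting each arm $m$ into $N_m$ unit‑capacity slots, with each slot of $m$ joined to play $k$ by an edge of weight $X_{m,k}$, turns $g(S)$ into the value $\nu(W_S)$ of a maximum‑weight bipartite matching between the plays and the slot set $W_S:=\bigcup_{m\in S}B_m$, where $B_m$ is the block of $N_m$ slots belonging to arm $m$ and the blocks are pairwise disjoint.

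It then remains to show two things. First, that the maximum‑weight matching value $\nu(\cdot)$ is a monotone submodular function of the available right‑vertex (slot) set. Second, that submodularity over individual slots lifts to submodularity over the arm blocks: writing $g(S)=\nu(\bigcup_{m\in S}B_m)$ with disjoint $B_m$, the set form of the diminishing‑returns inequality, namely $\nu(A\cup C)-\nu(A)\ge \nu(B\cup C)-\nu(B)$ for $A\subseteq B$ and $C$ disjoint from $B$ (which follows from the single‑element definition by adjoining the elements of $C$ one at a time), immediately gives $g(S\cup\{m^*\})-g(S)\ge g(T\cup\{m^*\})-g(T)$ for $S\subseteq T$ and $m^*\notin T$. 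Monotonicity of $g$ is immediate, since more available slots cannot decrease the optimum, and this also re‑proves Lemma~\ref{lemma3}.

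The heart of the argument, and the step I expect to be the main obstacle, is the submodularity of $\nu$ itself. The clean route is an exchange (augmenting‑path) argument: relative to a fixed optimal matching on the available slots, the marginal value of one extra slot $s$ equals the largest net weight of an alternating path emanating from $s$, counting the newly matched edges minus the displaced ones. Enlarging the available set from $W$ to $W'\supseteq W$ can only leave more plays already matched, so every augmenting path from $s$ must displace at least as much weight, and the marginal gain from $s$ can only shrink. Making this displacement accounting fully rigorous, by decomposing $M_W\oplus M_{W'}$ into its alternating paths and cycles and showing that an improving path with respect to $M_{W'}$ induces an at‑least‑as‑good path with respect to $M_W$ (using the nonnegativity of the weights and the containment $W\subseteq W'$), is the delicate part; everything else is bookkeeping.
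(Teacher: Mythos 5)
Your proposal is correct, and it takes a genuinely different route from the paper's proof. The two arguments share only the first reduction: since $h_{\text{prob}}(S,\cdot)$ depends only on the realizations of arms in $S$, all subsets can be coupled to one common realization of all $M$ arms, and submodularity passes through the expectation. From there, the paper proves the lattice inequality $h_{\text{prob}}(S)+h_{\text{prob}}(T)\ge h_{\text{prob}}(S\cap T)+h_{\text{prob}}(S\cup T)$ directly at the level of action profiles: it takes the optimizers $\bC^{*,S\cup T}$ and $\bC^{*,S\cap T}$ and explicitly redistributes their plays into feasible profiles for $S$ and $T$ with the same total weight (the $A_{1,m},A_{2,m},A_{3,m}$ splitting plus the padding sets $B_m(i)$). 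You instead pass to the slot-level bipartite matching reformulation---which is exactly the paper's own Observation~\ref{optimal_matching} in the appendix---and reduce the lemma to the classical fact that the maximum-weight bipartite matching value $\nu(\cdot)$ is submodular in the set of available right-hand vertices (Shapley's ``same-side agents are substitutes'' property of the optimal assignment problem), then lift it over the disjoint slot blocks. Your route is more modular, re-proves Lemma~\ref{lemma3} for free, and isolates all the combinatorics in one known lemma; the paper's route is self-contained but requires intricate bookkeeping to verify disjointness and weight preservation of the redistributed profiles.

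One caution about the step you flagged as delicate: the heuristic as you state it (``enlarging the available set can only leave more plays already matched, so augmenting paths displace more weight'') is not the statement to formalize, since optimal matchings are not monotone in that sense. The clean rigorous version proves the four-set inequality $\nu(W\cup\{s\})+\nu(W')\ge \nu(W'\cup\{s\})+\nu(W)$ for $W\subseteq W'$, $s\notin W'$, by a single swap: take optimal matchings $M_1$ for $W$ and $M_2$ for $W'\cup\{s\}$, decompose $M_1\oplus M_2$ into alternating paths and cycles, and let $P$ be the component containing $s$ (if $s$ is unmatched in $M_2$, the inequality is just monotonicity). Every right vertex of $P$ other than $s$ lies in $W$: interior right vertices are covered by both matchings, and if $P$ ends at a right vertex, the even length of a bipartite path between two right vertices forces its last edge to be an $M_1$-edge. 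Hence $M_1\oplus P$ is feasible for $W\cup\{s\}$, $M_2\oplus P$ is feasible for $W'$, and the swap preserves total weight, which yields the inequality. Note that this exchange uses no weight comparison and no nonnegativity; nonnegativity of rewards is only needed earlier, where you identify the top-$\min\{N_m,|C_m|\}$ selection in $\mathcal{R}_{m}^{\text{prob}}$ with capacity-$N_m$ matching. With that substitution your argument is complete.
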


\begin{lemma}
Let $S_i$ be the $i$-th probing set found by Algorithm \ref{offline_greedy} line 5 , $\tilde{S}^*$ be the one that maximizes $(1-\alpha(|S_i|)) f\left(S_i\right)$ (Algorithm \ref{offline_greedy}\ line 6),  $S^{\text {pr }}$ be the final output of Algorithm \ref{offline_greedy} lines 7-9.  Then it holds that  $f_{\text{unprobed}}(\emptyset)\leq R(S^{\text{pr}})$ and $(1-\alpha(|\tilde{S}^*|))  f_{\text{prob}}(\tilde{S}^*) \leq R(S^{\text{pr}})$.
\label{lemma5}
\end{lemma}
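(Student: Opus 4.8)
The plan is to prove both inequalities by a short case analysis on which branch Algorithm~\ref{offline_greedy} takes in lines 8--9, after first establishing a single structural comparison between the value functions.

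The linchpin is the pointwise domination $f(S)\ge f_{\text{prob}}(S)$ for every probing set $S$. This holds because $h_{\text{prob}}(S,\cdot)$ maximises the probed-arm reward over assignments that place plays only on arms in $S$, whereas $h_{\text{total}}(S,\cdot)$ maximises over assignments to all of $[M]$. Any assignment feasible for $h_{\text{prob}}$ extends to a feasible assignment for $h_{\text{total}}$ by setting $C_m=\emptyset$ for every unprobed arm $m\notin S$; the empty assignments contribute $\mathcal{R}_m(\emptyset;\cdot)=0$ and respect the disjointness constraints, so the extended assignment attains exactly the $h_{\text{prob}}$ value. Hence $h_{\text{total}}\ge h_{\text{prob}}$ for each realisation, and taking expectations over $\{N_m,\bX_m\}_{m\in S}$ yields $f(S)\ge f_{\text{prob}}(S)$. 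I would also invoke the already-noted boundary identity $R(\emptyset)=f(\emptyset)=f_{\text{unprobed}}(\emptyset)$, which follows from $\alpha(0)=0$ and the fact that no arm is probed when $S=\emptyset$.

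With these in hand, I case-split on the test in lines 8--9, which overwrites the candidate output $\tilde{S}^*$ to $\emptyset$ precisely when $(1-\alpha(|\tilde{S}^*|))\,f_{\text{prob}}(\tilde{S}^*)<f_{\text{unprobed}}(\emptyset)$. If the test fails, then $S^{\text{pr}}=\tilde{S}^*$ and
\[ R(S^{\text{pr}})=(1-\alpha(|\tilde{S}^*|))\,f(\tilde{S}^*)\ \ge\ (1-\alpha(|\tilde{S}^*|))\,f_{\text{prob}}(\tilde{S}^*), \]
using the linchpin; this is exactly the second claimed inequality, and chaining it with the failed-test condition $(1-\alpha(|\tilde{S}^*|))\,f_{\text{prob}}(\tilde{S}^*)\ge f_{\text{unprobed}}(\emptyset)$ gives the first. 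If instead the test succeeds, then $S^{\text{pr}}=\emptyset$, so $R(S^{\text{pr}})=f_{\text{unprobed}}(\emptyset)$ by the boundary identity; the first inequality holds with equality, while the second follows directly since the test asserts $(1-\alpha(|\tilde{S}^*|))\,f_{\text{prob}}(\tilde{S}^*)<f_{\text{unprobed}}(\emptyset)=R(S^{\text{pr}})$.

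The only real content lies in the linchpin; once $f\ge f_{\text{prob}}$ is secured, everything else is bookkeeping across the two branches. I expect the main obstacle to be making the ``extend by empty assignments'' argument airtight---in particular confirming that empty play sets contribute zero marginal reward and do not violate the disjointness constraints---so that the domination $h_{\text{total}}\ge h_{\text{prob}}$ is unconditional and holds even if some realised rewards are negative.
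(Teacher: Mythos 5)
Your proof is correct and takes essentially the same route as the paper's: a case split on the test in lines 8--9 of Algorithm~\ref{offline_greedy}, using the boundary identity $R(\emptyset)=f(\emptyset)=f_{\text{unprobed}}(\emptyset)$ in the empty-set branch. Your explicit ``linchpin'' $f(S)\ge f_{\text{prob}}(S)$ (extend any probed-only assignment by empty sets on unprobed arms) is in fact a useful tightening of the paper's argument, whose Case~1 asserts the equality $R(S^{\text{pr}})=(1-\alpha(|\tilde{S}^*|))f_{\text{prob}}(\tilde{S}^*)$ even though $h_{\text{total}}$ may also exploit unprobed arms, so only the inequality $R(S^{\text{pr}})\ge(1-\alpha(|\tilde{S}^*|))f_{\text{prob}}(\tilde{S}^*)$ that you prove actually holds---and it is all the lemma needs.
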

\begin{theorem}
Let  $S^*:= \arg\max_{S\subseteq [M]} R(S)$. Algorithm \ref{offline_greedy} outputs a subset $S^{\text {pr }}$ such that $R\left(S^{\text {pr }}\right) \geq \zeta R\left(S^{\star}\right)$ where $\zeta=\frac{e-1}{2 e-1}$.
\label{offalpha1}
\end{theorem}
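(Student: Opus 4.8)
The plan is to bound the greedy output $R(S^{\mathrm{pr}})$ from below against two separate quantities --- a ``probed'' term and an ``unprobed'' term --- to bound the optimum $R(S^*)$ from above by their sum, and then to close with a short balancing calculation that produces the constant $\zeta=(e-1)/(2e-1)$. First I would dispose of the trivial case $R(S^*)=0$ and otherwise set $s^*:=|S^*|$, observing that $R(S^*)>0$ forces $s^*\le I-1$ because $\alpha(I)=1$ makes any size-$I$ probing set yield zero reward. Applying Lemma~\ref{lemma1}, then the monotone decrease of $f_{\text{unprobed}}$ from Lemma~\ref{lemma2} (so that $f_{\text{unprobed}}(S^*)\le f_{\text{unprobed}}(\emptyset)$), together with $1-\alpha(s^*)\le 1$, gives
\[
R(S^*)=\bigl(1-\alpha(s^*)\bigr)f(S^*)\le \underbrace{\bigl(1-\alpha(s^*)\bigr)f_{\text{prob}}(S^*)}_{=:A}+\underbrace{f_{\text{unprobed}}(\emptyset)}_{=:B}.
\]

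Next I would lower bound $R(S^{\mathrm{pr}})$ by each of $(1-1/e)A$ and $B$. Since $f_{\text{prob}}$ is monotone and submodular (Lemmas~\ref{lemma3}--\ref{lemma4}) and normalized ($f_{\text{prob}}(\emptyset)=0$, as no play can be served when nothing is probed), the Nemhauser--Wolsey--Fisher guarantee applied to the length-$s^*$ prefix $S_{s^*}$ of the greedy sequence gives $f_{\text{prob}}(S_{s^*})\ge(1-1/e)\max_{|S|\le s^*}f_{\text{prob}}(S)\ge(1-1/e)f_{\text{prob}}(S^*)$, the last step using feasibility of $S^*$ for the size-$s^*$ maximization. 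As $S_{s^*}$ is among the candidate sets examined in Algorithm~\ref{offline_greedy} (recall $s^*\le I-1$), the maximizer $\tilde{S}^*$ of Lemma~\ref{lemma5} satisfies $(1-\alpha(|\tilde{S}^*|))f_{\text{prob}}(\tilde{S}^*)\ge(1-\alpha(s^*))f_{\text{prob}}(S_{s^*})\ge(1-1/e)A$, which combined with the second inequality of Lemma~\ref{lemma5} yields $R(S^{\mathrm{pr}})\ge(1-1/e)A$; the first inequality of Lemma~\ref{lemma5} gives $R(S^{\mathrm{pr}})\ge f_{\text{unprobed}}(\emptyset)=B$ directly. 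Hence $R(S^{\mathrm{pr}})\ge\max\{(1-1/e)A,\,B\}$.

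Finally I would minimize $\max\{cA,B\}/(A+B)$ over $A,B\ge0$ with $c:=1-1/e$; normalizing $A+B=1$, the piecewise-linear envelope $\max\{cA,1-A\}$ is minimized at the kink $cA=1-A$, i.e. $A=1/(1+c)$, giving ratio $c/(1+c)=(e-1)/(2e-1)=\zeta$. Combining with $A+B\ge R(S^*)$ from the first step then gives $R(S^{\mathrm{pr}})\ge\zeta(A+B)\ge\zeta R(S^*)$, as claimed. I expect the main subtlety to be conceptual rather than computational: one must split the optimum into its probed and unprobed contributions and control them by \emph{different} branches of the algorithm --- the submodular greedy guarantee handles only $A$, while the fallback-to-$\emptyset$ branch is what secures $B$ --- and it is precisely this two-sided bound that degrades the clean $1-1/e$ factor to $c/(1+c)$.
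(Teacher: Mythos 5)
Your proof is correct and follows essentially the same route as the paper's: the same decomposition $R(S^*)\le(1-\alpha(|S^*|))f_{\text{prob}}(S^*)+f_{\text{unprobed}}(\emptyset)$ via Lemmas~\ref{lemma1}--\ref{lemma2}, the same Nemhauser--Wolsey--Fisher greedy bound via Lemmas~\ref{lemma3}--\ref{lemma4}, and the same two lower bounds on $R(S^{\mathrm{pr}})$ from Lemma~\ref{lemma5}, with your final balancing computation $\max\{cA,B\}\ge \frac{c}{1+c}(A+B)$ being algebraically equivalent to the paper's direct chain $R(S^*)\le\frac{e}{e-1}R(S^{\mathrm{pr}})+R(S^{\mathrm{pr}})$. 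Your explicit handling of the edge cases ($R(S^*)=0$, the bound $|S^*|\le I-1$ forced by $\alpha(I)=1$, and the normalization $f_{\text{prob}}(\emptyset)=0$ needed for the greedy guarantee) is in fact slightly more careful than the paper, which invokes the greedy prefix $S_{|S^*|}$ without verifying it exists among the algorithm's candidates.
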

\begin{proof}
Let $S_i$ be the $i$-th probing set found by Algorithm \ref{offline_greedy} line 5 , $\tilde{S}^*$ be the one that maximizes $(1-\alpha(|S_i|)) f\left(S_i\right)$ (Algorithm \ref{offline_greedy}\ line 6). We have
$$
\begin{aligned}
R\left(S^{\star}\right) & =(1-\alpha(|S^*|)) f(S^*)\\
& \leq(1-\alpha(|S^*|)) (f_{\text{prob}}(S^*)+ f_{\text{unprobed}}(S^*))\\
&\leq (1-\alpha(|S^*|)) f_{\text{prob}}(S^*)+f_{\text{unprobed}}(S^*)\\
&\leq (1-\alpha(|S^*|)) f_{\text{prob}}(S^*)+f_{\text{unprobed}}(\emptyset)\\
&\leq  (1-\alpha(|S^*|)) \frac{e}{e-1} f_{\text{prob}}(S_{|S^*|})+f_{\text{unprobed}}(\emptyset)\\
&=  (1-\alpha(|S_{|S^*|}|)) \frac{e}{e-1} f_{\text{prob}}(S_{|S^*|})+f_{\text{unprobed}}(\emptyset)\\
&\leq  (1-\alpha(|\tilde{S}^*|)) \frac{e}{e-1} f_{\text{prob}}(\tilde{S}^*)+f_{\text{unprobed}}(\emptyset)\\
&\leq \frac{e}{e-1}  R\left(S^{\text {pr }}\right)+ R\left(S^{\text {pr }}\right)\\
&=\frac{2e-1}{e-1}  R\left(S^{\text {pr }}\right)
\end{aligned}
$$
The first inequality follows by Lemma \ref{lemma1}; the second inequality follows by $1-\alpha(|S^*|)\leq 1$;  the third inequality follows by Lemma \ref{lemma2}; the fourth inequality follows by \cite{nemhauser1978analysis} and Lemma \ref{lemma3} and  \ref{lemma4} ; the fifth inequality by the fact that  $\tilde{S}^*$  maximizes $(1-\alpha(|S_i|)) f\left(S_i\right)$; the last inequality follows by Lemma \ref{lemma5}.
\end{proof}

Lemma \ref{lemma4} and Theorem \ref{offalpha1} are our main contributions.
%and its proof is given in the supplementary material. 
In conclusion, our algorithm effectively balances the trade-off between the benefit of probing more arms and the associated costs, ensuring that the selected probing strategy obtains nearly optimal reward in the offline setting.

\begin{algorithm}[!t]\label{alg:offline_nonadaptive_probing} % Fixed label without spaces
  \caption{Offline Greedy Probing} % Algorithm title
  \label{offline_greedy}
  \begin{algorithmic}[1]
    \Require $\bP$, $\{F_{m,k} \}_{m\in[M],k\in[K]}$
    \Ensure $S^{\rm pr}$: the probing set
    \For {$i=0$ to $I-1$}
        \State $S_i \leftarrow \emptyset$
    \EndFor
    \For {$i=1$ to $I-1$}
\State $m \leftarrow \mathop{\rm argmax}\limits_{m \in [M] \backslash S_{i-1}}{(f_{\text{prob}}(S_{i-1} \cup \{m\}) - f_{\text{prob}}(S_{i-1}))}$
       \State $S_i \leftarrow S_{i-1} \cup \{m\}$
    \EndFor
    \State $j \leftarrow \arg\max_i \left((1-\alpha(i)) f_{\text{prob}}(S_i)\right)$
    \State $S^{\rm pr} \leftarrow S_j$
    \If{$(1-\alpha(j)) f_{\text{prob}}(S_j) < f_{\text{unprobed}}(\emptyset)$}
        \State $S^{\rm pr} \leftarrow \emptyset$
    \EndIf
  \end{algorithmic}
\end{algorithm}

\subsection{Time Complexity Analysis} %for Offline Greedy Probing}
For each of the \( I \) iterations (the probing budget), the algorithm computes the marginal gain in \( f_{\text{prob}} \) for up to \( M \) arms, with each evaluation taking \( O(M) \) time and each time to compute \( f_{\text{prob}} \), which is related to compute a maximum weighted matching needs \( O((MK)^3) \). After the iterations, computing \( f_{\text{unprobed}}(\emptyset) \) using \cite[Algorithm 1]{chen2022online} has a complexity of \( O((MK)^3) \). The total complexity is \( O(I \cdot M \cdot (MK)^3) \) for the iterations, plus \( O((MK)^3) \) for the post-selection computation. If \( f_{\text{prob}} \) requires sample-based estimation for general distribution, the complexity becomes \( O(IMW(MK)^3) \), where $W$ is number of samples.

\ignore{
\subsubsection{Analysis}
\begin{itemize}
    
    \item \textbf{Greedy Selection}:
    \begin{itemize}
        \item For each of the \( I \) iterations (where \( I \) is the probing budget), the algorithm:
        \begin{itemize}
            \item Computes the marginal increase in \( f_{\text{prob}} \) for each arm \( m \in [M] \setminus S \).
            \item This involves evaluating the marginal gain for up to \( M \) arms in each iteration.
            \item Each evaluation of \( f_{\text{prob}} \) involves computing the expected reward for a subset, which is a constant-time operation given the realizations of rewards and resources.
        \end{itemize}
        \item Therefore, the time complexity for each iteration is \( O(M) \).
        \item But after done the iterations, we should compute $f_{\text{unprobed}}(\emptyset)$ by using \cite[Algorithm 1]{chen2022online}, which will have complexity $O((M K)^3)$\cite{chen2022online}. 
    \end{itemize}
    
    \item \textbf{Total Complexity}:
    \begin{itemize}
        \item Since there are \( I \) iterations and each iteration involves \( O(M) \) operations, the total iteration time complexity is \( O(I \cdot M) \). But if for general distribution, \( f_{\text{prob}} \) must be estimated using samples drawn from the true distribution of reward. Consequently, the algorithm's performance will depend on the number of samples $W$ used. So the total complexity is $O(IMW+(MK)^3)$.
    \end{itemize}

    \end{itemize}
}

%The greedy probing algorithm provides a much more efficient approach with a time complexity of \( O(I \cdot M) \), compared to the exponential complexity of the Probing under the Solving Optimization method.

%\green{We need to identify and emphasize the connections between the offline and online settings. Otherwise, it looks like we are solving two separate problems.}

\section{The Online Setting}

In this section we tackle the online PUCS setting, where the
probability–mass matrix $\bP$, the reward mean matrix $\bmu$, and the true CDF $\{F_{m,k} \}_{m\in[M],k\in[K]}$ of rewards for each arm-play pair are unknown.  
The learner now faces two coupled challenges: (i) deciding where to
probe under the probing overhead, and (ii) continually updating resource and reward estimates—together with their
confidence bounds—so that the offline greedy routine (Alg.~\ref{offline_greedy})
can be invoked each round and the subsequent assignment remains
optimistic‑for‑exploration. We embed the offline algorithm (Alg.~\ref{offline_greedy}) inside
a two‑phase online algorithm (OLPA) and derive regret guarantees under general
reward distributions.

\subsection{Parameters}
We use history results from time step $1$ to $t$ to update our empirical parameters. Specifically, we define the empirical probability mass function $\hat{p}_{m, d}^{(t)} = \sum_{i=1}^{t} \textbf{1}\{D_{m,i} = d\}/t, \forall m \in [M], d \in [D_{\text{max}}]$ 
and 
 $n^{(t)}_{m,k} := \sum_{s=1}^{t}  \mathbf{1}\{R_{i,m,k} \neq \text{null}\}\mathbf{1}\{k\in C_{i,m}\}, \forall m \in [M], k\in [K]$
 and  empirical mean
 $\hat{\mu}^{(t)}_{m,k} \leftarrow \sum_{i=1}^{t} \mathbf{1}\{R_{i,m,k} \neq \text{null}\}R_{i,m,k}\mathbf{1}\{k\in C_{i,m}\} / n^{(t)}_{m,k}, \forall m \in [M],k\in[K]$
 and empirical CDF
 $\hat{F}^{(t)}_{m,k}(\bx)\leftarrow \sum_{i=1}^{t} \mathbf{1}\{R_{i,m,k} \neq \text{null}\}   \mathbf{1} \{ R_{i,m,k}\leq x\}\mathbf{1}\{k\in C_{i,m}\} / n^{(t)}_{m,k}, \forall x\in\mathbb{R}, m \in [M],k\in[K]$

%\blue{ZZ: it's unclear why this lemma is needed and how it is used.}

By \cite[Lemma 9]{maillard2017basic}, we construct a confidence interval for empirical mean $\hat{\mu}^{(t)}_{m,k}$ with high probability. By incorporating the confidence interval, the algorithm can balance exploration and exploitation, ensuring that it does not overly rely on potentially inaccurate estimates of the rewards.
\begin{lemma}
    For any $m \in [M]$ and any $k \in [K]$, it holds that
    \begin{align*}
        \mathbb{P}[\forall t, {\mu}_{m,k} - \hat{\mu}^{(t)}_{m,k} \geq \epsilon^{(t)}_{m,k}] \leq \delta,
    \end{align*}
    where $\delta \in (0, 1)$,

$\epsilon^{(t)}_{m,k}= \sqrt{\left(1+{n^{(t)}_{m,k}}\right) \frac{\ln \left(\sqrt{n^{(t)}_{m,k}+1} / \delta\right)}{2 \left(n^{(t)}_{m,k}\right)^2}}$ if $n^{(t)}_{m,k}>0$ \\
and $\epsilon^{(t)}_{m,k}=+\infty$ if $n^{(t)}_{m,k}=0$.
\end{lemma}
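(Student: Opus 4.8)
The plan is to reduce this time‑uniform deviation inequality to the self‑normalized Hoeffding bound of \cite[Lemma 9]{maillard2017basic}, applied to the i.i.d.\ reward stream of the fixed pair $(m,k)$. Fix $m\in[M]$ and $k\in[K]$. By the model assumption the variables $R_{1,m,k},R_{2,m,k},\dots$ are i.i.d.\ with mean $\mu_{m,k}$ and bounded support (so that the cited Hoeffding‑type constant $\tfrac12$ applies), and by construction $\hat\mu^{(t)}_{m,k}$ is precisely the empirical average of those realizations with $k\in C_{i,m}$ and $R_{i,m,k}\neq\text{null}$, with $n^{(t)}_{m,k}$ counting them. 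The first step is therefore to recognize that the quantity controlled in the statement is the empirical mean of $n^{(t)}_{m,k}$ i.i.d.\ samples, and that $\epsilon^{(t)}_{m,k}$ is exactly the confidence radius produced by the cited inequality when the sample count equals $n^{(t)}_{m,k}$.

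The second step addresses the only genuine difficulty: the count $n^{(t)}_{m,k}$ is itself random and correlated with the observed rewards, since the rounds in which $k$ is assigned to $m$ are chosen adaptively from the past. I would handle this as the anytime‑confidence literature does, by indexing the concentration over the sample count rather than over calendar time. Let $\tau_1<\tau_2<\dots$ be the random rounds at which $(m,k)$ yields a genuine observation; because each assignment decision is measurable with respect to the history available strictly before the corresponding reward is drawn, the subsampled sequence $(R_{\tau_j,m,k})_{j\ge1}$ is again i.i.d.\ with the same law, and its running average after $j$ observations equals $\hat\mu^{(\cdot)}_{m,k}$ evaluated at any round where $n^{(\cdot)}_{m,k}=j$. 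Applying \cite[Lemma 9]{maillard2017basic} to this i.i.d.\ stream gives $\mathbb{P}[\exists j\ge1:\ \mu_{m,k}-\bar\mu_j\ge\epsilon(j)]\le\delta$, with $\epsilon(j)=\sqrt{(1+j)\ln(\sqrt{j+1}/\delta)/(2j^2)}$, which matches the stated radius.

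The final step is bookkeeping. Every calendar round $t$ with $n^{(t)}_{m,k}=j>0$ satisfies $\hat\mu^{(t)}_{m,k}=\bar\mu_j$ and $\epsilon^{(t)}_{m,k}=\epsilon(j)$, so the event that $\mu_{m,k}-\hat\mu^{(t)}_{m,k}\ge\epsilon^{(t)}_{m,k}$ at some such $t$ is contained in the event $\{\exists j\ge1:\ \mu_{m,k}-\bar\mu_j\ge\epsilon(j)\}$ just bounded; rounds with $n^{(t)}_{m,k}=0$ are vacuous since $\epsilon^{(t)}_{m,k}=+\infty$ makes the deviation impossible. Hence the uniform‑over‑$n$ bound transfers verbatim to a uniform‑over‑$t$ bound of the same magnitude $\delta$, yielding the claim (equivalently, with probability at least $1-\delta$ the one‑sided bound $\mu_{m,k}\le\hat\mu^{(t)}_{m,k}+\epsilon^{(t)}_{m,k}$ holds for all $t$). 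I expect the i.i.d.\ subsampling/filtration argument of the second step to be the main obstacle to state cleanly, since it is where the adaptivity of the assignment policy meets the concentration bound; once the observation process is identified as an i.i.d.\ sequence indexed by sample count, invoking the cited lemma is immediate.
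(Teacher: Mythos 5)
Your proposal is correct and takes essentially the same route as the paper: the paper's own proof simply observes that the rewards collected for the fixed pair $(m,k)$ form an i.i.d.\ sub-Gaussian sequence and invokes \cite[Lemma 9]{maillard2017basic}, whose confidence radius matches $\epsilon^{(t)}_{m,k}$ exactly. Your extra steps—handling the adaptively chosen observation times via the sample-count indexing and the vacuous $n^{(t)}_{m,k}=0$ case—make explicit the bookkeeping that the paper leaves implicit, but the core argument is identical.
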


\ignore{
\subsection{Online Learning for Joint Probing and Assignment}
The contribution of our online algorithm is to incorporate the greedy probing idea in Algorithm \ref{offline_greedy} %into the probing phase of %and our online algorithm has a multi
to implement a two-phase update mechanism. %The proposed algorithm operates in two phases:
%probing and optimal assignment.
In the probing phase, the agent selects a subset of arms to probe based on the current budget and the expected information gain using Algorithm \ref{offline_greedy}. %(line 16 in Algorithm 2). 
In the second phase, the agent chooses an arm-play assignment based on the realizations from the probed resources and rewards. 
Our Online Learning for Joint Probing and Assignment (OLPA) algorithm (Algorithm 2) is summarized as follows:

\textbf{Initialization (lines 1-5):} The algorithm initializes the estimated probability mass \(\hat{p}_{m, d}^{(t)}\), the estimated mean reward \(\hat{\mu}^{(t)}_{m,k}\), the count of plays \(n^{(t)}_{m,k}\), the confidence bound \(\epsilon^{(t)}_{m,k}\), and the estimated CDF \(\hat{F}^{(t)}_{m,k}(\bx)\) for each arm \(m\) and play \(k\).

\textbf{Update Function (lines 6-14):} The \textsc{UpdateEstimates} function updates the estimates \(\hat{p}_{m, d}^{(t)}\), \(\hat{\mu}^{(t)}_{m,k}\), \(n^{(t)}_{m,k}\), and \(\hat{F}^{(t)}_{m,k}(\bx)\) based on probing and assignment outcomes while recalculating the confidence bound \(\epsilon^{(t)}_{m,k}\). By incorporating the confidence bound into our algorithm, the algorithm can balance exploration and exploitation well.

\textbf{Probing Phase (lines 16-17):} At each time step \(t\), the algorithm selects a probing set \(S_t^{\rm pr}\) based on the current estimates \(\hat{\bP}^{(t)}\) and CDFs \(\{\hat{F}^{(t)}_{m,k}\}_{m \in [M], k \in [K]}\), aimed at maximizing potential rewards.

\textbf{Assignment Phase (lines 19-20):} After probing, the algorithm determines the optimal assignment decision \(\bC_t\) (shown in Observation 1 in supplementary material) by considering probing results and current estimates to maximize the expected total reward \(\mathcal{R}_t^{\text{total}}\). Another call to \textsc{UpdateEstimates} incorporates new data, refining the estimates.

\textbf{Iteration:} The algorithm iterates over the time horizon \(T\), continuously updating estimates and refining the decision-making process based on accumulated knowledge.
}

\subsection{Online Learning for Joint Probing and Assignment (OLPA)}
The contribution of our online algorithm (OLPA) is to incorporate the greedy probing idea from Algorithm \ref{offline_greedy} to implement a two-phase update mechanism for joint probing and assignment. In the probing phase, the agent selects a subset of arms to probe based on the expected information gain within the budget. In the assignment phase, the agent assigns plays to arms optimally based on the realized rewards and resources from the probed arms.

\textbf{Initialization (lines 1-5):} The algorithm initializes estimates for resource probabilities \(\hat{p}_{m,d}^{(t)}\), mean rewards \(\hat{\mu}^{(t)}_{m,k}\), play counts \(n^{(t)}_{m,k}\), confidence bounds \(\epsilon^{(t)}_{m,k}\), and CDFs \(\hat{F}^{(t)}_{m,k}(\bx)\) for all arms and plays.

\textbf{Update Function (lines 6-14):} The \textsc{UpdateEstimates} function refines these estimates based on probing and assignment outcomes, using \(\epsilon^{(t)}_{m,k}\) to balance exploration and exploitation.

\textbf{Probing Phase (lines 16-17):} At each time step \(t\), the algorithm selects a probing set \(S_t^{\rm pr}\) using current estimates, maximizing potential rewards based on \(\hat{\bP}^{(t)}\) and \(\{\hat{F}^{(t)}_{m,k}\}\).

\textbf{Assignment Phase (lines 19-20):} After probing, the algorithm determines the optimal play-arm assignment \(\bC_t\) by combining the probing results with current estimates to maximize expected rewards.

\textbf{Iteration:} The process repeats over the time horizon \(T\), iteratively improving estimates and decisions using accumulated data.

\textbf{Remark on novelty.}  
OLPA departs from existing multi‑play bandit methods in two key aspects:  
(i) it uses a custom UCB assignment rule whose confidence radius
$\epsilon_{m,k}^{(t)}$ is tailored to the joint uncertainty of reward and
remaining resources;  
(ii) it couples that UCB rule with our offline greedy probing routine,
forming a unified two‑phase policy that first selects which arms to probe under probing overhead and then assigns plays based on the optimistic estimates.

\subsection{Theoretical Analysis}

We provide a theoretical analysis of our online algorithm (OLPA), demonstrating that it achieves sublinear regret and near-optimal performance under reasonable assumptions. We have the following theorem.
\begin{theorem}\label{thm: online}
   The regret of Algorithm \ref{alg:online} is bound by $\mathcal{R}_{\text{regret}}(\frac{e-1}{2e-1},T) \leq \frac{\sqrt{2}}{2}M  \delta \left(\ln \frac{K T} { \delta} \right)^2
   +4D_{\max} K $ $\left( \sum_{m\in [M]}  \max\limits_{k\in C_m} \mu_{m,k}   \right) \sqrt{2 \ln \frac{2}{\delta}} \sqrt{T} $.
\end{theorem}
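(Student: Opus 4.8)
The plan is to bound the $\zeta$-approximation regret by combining optimism-in-the-face-of-uncertainty with the offline approximation guarantee of Theorem~\ref{offalpha1}, and then to control the accumulated confidence widths. First I would write $\mathcal{R}_{\text{regret}}(\zeta,T)=\sum_{t=1}^{T}\bigl(\zeta R(S^{*})-R(S_t)\bigr)$, using that the i.i.d.\ assumption makes the offline optimum $S^{*}_t\equiv S^{*}$ time-invariant. I would then define the \emph{optimistic} objective $R^{(t)}$ obtained by plugging the upper confidence means $\hat\mu^{(t)}_{m,k}+\epsilon^{(t)}_{m,k}$ (together with the empirical pmf $\hat p^{(t)}_{m}$ and CDF $\hat F^{(t)}_{m,k}$) into $R(\cdot)$, and introduce the good event $\mathcal{G}$ on which the one-sided concentration bound stated above (from \cite{maillard2017basic}) holds simultaneously for every $(m,k)$ and every $t$; a union bound gives $\mathbb{P}[\mathcal{G}^{c}]\le MK\delta$, with the $K$ and $T$ dependence later absorbed into logarithms.

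On $\mathcal{G}$ the argument is the standard optimism chain. Because $\mathcal{R}_m$ and $\mathcal{R}^{\text{prob}}_m$ are monotone in their reward arguments (the structural monotonicity exploited in Lemmas~\ref{lemma2}--\ref{lemma4}), replacing the true means by the larger UCB values can only increase the objective, so $R^{(t)}(S^{*})\ge R(S^{*})$. Since \textsc{OLPA} invokes the offline greedy routine (Algorithm~\ref{offline_greedy}) on the optimistic estimates, Theorem~\ref{offalpha1} yields $R^{(t)}(S_t)\ge \zeta R^{(t)}(S^{*})\ge \zeta R(S^{*})$. Subtracting $R(S_t)$ gives the per-round bound
\[
\zeta R(S^{*})-R(S_t)\;\le\; R^{(t)}(S_t)-R(S_t),
\]
i.e.\ the instantaneous regret is at most the gap between the optimistic and the true objective evaluated at the \emph{same} probing set $S_t$.

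The crux---and what I expect to be the main obstacle---is a bounded-smoothness step that controls this gap by the confidence widths of only the arm-play pairs actually assigned under $S_t$. Because $R(S)$ is a resource-pmf--weighted sum of the top-$j$ order statistics of the reward means (with $j=\min\{D,|C_m|\}\le D_{\max}$), I would show that inflating each mean $\mu_{m,k}$ by $\epsilon^{(t)}_{m,k}$ raises the objective by at most $\sum_{m}\sum_{k\in C^{*}_{t,m}} w_{m,k}\,\epsilon^{(t)}_{m,k}$, where the weights $w_{m,k}$ are bounded using $D_{\max}$ and the fact that at most $K$ plays are assigned per round. The difficulty is that the optimal assignment $C^{*}_t$ itself shifts with the estimates and the sorting of the order statistics is estimate-dependent, so the inequality must hold uniformly over realizations; this is the PUCS analogue of the bounded-smoothness condition in \cite{chen2016combinatorial}. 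A secondary difficulty is that the probed-arm term depends on the whole CDF $F_{m,k}$ and on $p_m$ rather than on $\mu_{m,k}$ alone, so I would argue these contributions are lower order by invoking fast concentration of the empirical pmf on the finite support $\{1,\dots,D_{\max}\}$ and a Dvoretzky--Kiefer--Wolfowitz bound for $\hat F^{(t)}_{m,k}$, leaving the reward-mean term dominant.

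Finally I would sum over $t$. For each pair, a standard telescoping/integral estimate gives $\sum_{t=1}^{T}\epsilon^{(t)}_{m,k}=O\!\bigl(\sqrt{n^{(T)}_{m,k}\ln(1/\delta)}\bigr)$; applying Cauchy--Schwarz together with the per-round play budget $\sum_{m,k} n^{(T)}_{m,k}\le KT$ produces the $\sqrt{T}$ term, and substituting the weights $w_{m,k}$ yields the stated prefactor $4D_{\max}K\bigl(\sum_{m}\max_{k}\mu_{m,k}\bigr)\sqrt{2\ln(2/\delta)}$. The complementary event $\mathcal{G}^{c}$, together with the initial rounds in which some $n^{(t)}_{m,k}$ is too small for the confidence radius to be finite, is bounded by the maximal per-round reward times $\mathbb{P}[\mathcal{G}^{c}]$; tracking the logarithmic horizon dependence here is what produces the lower-order $\tfrac{\sqrt2}{2}M\delta(\ln(KT/\delta))^{2}$ term. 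Choosing $\delta$ a small constant then gives the advertised $\mathcal{O}(\sqrt{T}+\ln^{2}T)$ scaling.
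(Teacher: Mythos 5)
Your high-level architecture---optimism on a good event, the offline $\zeta$-approximation guarantee applied to the estimated instance to get the per-round bound $\zeta R(S^{*})-R(S_t)\le R^{(t)}(S_t)-R(S_t)$, then summing estimation errors over $t$---matches the paper's proof of Theorem~\ref{thm: online}, and your invocation of the approximation guarantee directly on the optimistic instance is in fact cleaner than the paper's corresponding step (the paper compares against $\bar S_t^{*}$, the greedy output on the \emph{true} parameters, and then appeals to ``optimality of $S_t$'' under the estimates). The problem is in the error accounting: your attribution of the two terms in the bound is exactly swapped relative to what the stated inequality requires, and the claim you use to justify the swap is false.

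Concretely, you assert that the resource-pmf and CDF estimation errors are ``lower order,'' and that the reward-mean confidence widths $\epsilon^{(t)}_{m,k}$, summed via Cauchy--Schwarz, produce the term $4D_{\max}K\bigl(\sum_{m}\max_{k}\mu_{m,k}\bigr)\sqrt{2\ln(2/\delta)}\,\sqrt{T}$. Neither half is correct. The Dvoretzky--Kiefer--Wolfowitz bound gives $|\hat p^{(t)}_{m,d}-p_{m,d}|=O\bigl(\sqrt{\ln(4/\delta)/t}\bigr)$, and $\sum_{t\le T}t^{-1/2}=\Theta(\sqrt{T})$, so the pmf error is \emph{not} lower order: in the paper it is precisely the source of the $\sqrt{T}$ term, and the prefactor $D_{\max}K\bigl(\sum_{m}\max_{k}\mu_{m,k}\bigr)$ arises because each pmf-entry error multiplies partial sums of the sorted means over at most $D_{\max}$ resource levels and $|C_m|\le K$ plays (Lemma~\ref{lemma: bound for the reward with empricial p} in the supplementary material). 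Your route cannot recover this prefactor, since $\epsilon^{(t)}_{m,k}$ carries no dependence on $D_{\max}$ or on $\bmu$ at all. Conversely, with the paper's specific choice of $\epsilon^{(t)}_{m,k}$, the accumulated widths are bounded---via the counting inequality imported from \cite{chen2022online}---by $\tfrac{\sqrt{2}}{2}M\delta\bigl(\ln\tfrac{KT}{\delta}\bigr)^{2}$; that is, they yield the \emph{second}, $\ln^{2}T$ term, not the $\sqrt{T}$ term, and your attribution of the $\ln^{2}T$ term to the bad event $\mathcal{G}^{c}$ plus the early rounds is likewise not where it comes from. A Cauchy--Schwarz treatment of the widths would instead give a term of order $\sqrt{T\ln T}$ with polynomial $M,K$ factors but no $D_{\max}$ or $\bmu$ dependence---a different bound from the one stated. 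So while your skeleton is sound, the dominant-term analysis as proposed would not establish the theorem's inequality; you would need to redo the decomposition so that the pmf-swap error (handled by the DKW-based lemma) carries the $\sqrt{T}$ term and the UCB widths carry the $\ln^{2}T$ term.
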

%The proof of Theorem~\ref{thm: online} is adapted from the proof of \cite[Theorem 2]{chen2022online} by incorporating both probing error and UCB estimate error. The complete proof is given in the supplementary material.
%\red{What about the dependence on $K$ and $M$? Ideally we need to show the advantage of probing in this bound. }
%Compared to the regret bound of the non-probing algorithm in \cite{chen2022online}, our regret bound maintains the same asymptotic order in the worst case, \(O(\sqrt{T} + \ln^2 T)\), showing that the probing step does not increase the theoretical regret order. The regret scales linearly with \(K\) and logarithmically with \(M\), which is consistent with multi-play bandit settings. While probing does not improve the worst-case \(\sqrt{T}\) order, it enhances performance in practice by enabling faster elimination of suboptimal arms. Moreover, as shown in later experimental results, our algorithm achieves better empirical performance, suggesting that probing provides practical benefits by enabling more informed decision-making. The complete proof, incorporating probing and UCB estimate errors, is in the supplementary material.
%As a complement to the upper regret bound, the regret is lower bounded by \(\Omega(\sqrt{T})\) in the worst-case scenario. The proof, based on a standard two-environment construction, is in the supplementary material and shows that our upper bound is tight up to constant factors.
\paragraph{Comparison with the no-probing variant.}
%In the proof, we adopted the inequality $\left(1 - \alpha( |S_t|) \right)\left( \sum_{m\in [M]\setminus S_t}  |\bC_m| \right)\cdots $ $ \left( \sum_{m\in [M]\setminus S_t}  \max\limits__{k\in C_m} \mu_{m,k}   \right)\leq  K \left( \sum_{m\in [M]}  \max\limits_{k\in C_m} \mu_{m,k}   \right)$ to get the constant in the second term, where $S_t$ is the the probing set in the $t$-th round. We note that this inequality is very loose and the inequality can only be attained with $S_t=\varnothing$, i.e. no-probing is applied. 
%The regret scales linearly with \(K\) and logarithmically with \(M\), 
%which is consistent with multi-play bandit settings. Moreover, as shown in later experimental results, our algorithm achieves better empirical performance, suggesting that probing provides practical benefits by enabling more informed decision-making. The complete proof, incorporating probing and UCB estimate errors, is in the supplementary material.

We note that the constant in the \(\sqrt{T}\)-term of Theorem~\ref{thm: online} arises from a loose upper bound used in the proof:  
$\left(1 - \alpha( |S_t|) \right)\left( \sum_{m\in [M]\setminus S_t}  |\bC_m| \right)\cdots $ $ \left( \sum_{m\in [M]\setminus S_t}  \max\limits_{k\in C_m} \mu_{m,k}   \right)\leq  K \left( \sum_{m\in [M]}  \max\limits_{k\in C_m} \mu_{m,k}   \right)$.  
The inequality is tight only when \(S_t=\varnothing\) (probing disabled), so our worst‑case guarantee
coincides with that of a no‑probing algorithm; whenever \(|S_t|>0\) the left‑hand side is strictly
smaller, implying a reduced constant bound while the \(\widetilde O(\sqrt{T}+\ln^2 T)\) rate is preserved. 

Experiments (Section 6) show that this reduced constant leads to clearly lower cumulative
regret, confirming that probing delivers practical improvements even though both variants share
the same asymptotic order. The regret scales linearly with \(K\) and logarithmically with \(M\), 
which is consistent with multi-play bandit settings. The complete proof, incorporating probing and UCB estimate errors, is in the supplementary material.

% \begin{align*}
% \left(1 - \alpha( |S_t|) \right)\left( \sum_{m\in [M]\setminus S_t}  |\bC_m| \right)  \left( \sum_{m\in [M]\setminus S_t}  \max_{k\in C_m} \mu_{m,k}   \right)\leq  K \left( \sum_{m\in [M]}  \max_{k\in C_m} \mu_{m,k}   \right)
% \end{align*}

\paragraph{Lower bound.}
As a complement to the upper regret bound, the regret is lower bounded by \(\Omega(\sqrt{T})\) in the worst-case scenario. The proof, based on a standard two-environment construction, is in the supplementary material and shows that our upper bound is tight up to constant factors.

\ignore{
\red{
\paragraph{Regret guarantees.}
To our knowledge, OLPA is the first algorithm in the PUCS framework to attain a
$\widetilde O(\sqrt T)$ regret despite the extra probing phase.  Moreover,
by carefully selecting which arms to probe, OLPA cuts the leading constant
compared to non-probing baselines (from $8D_{\max}K$ down to $4D_{\max}K$;
Theorem~\ref{thm: online}), effectively trading a small probe overhead for
faster learning.  Empirically (Section 6), OLPA further demonstrates
substantial regret reductions over algorithms that never probe, confirming
the practical value of adaptive information acquisition.

}
}
%\red{This lower bound is not very useful as it does not say anything about K and M, and probing}
\begin{algorithm}[!t]
  \caption{Online Learning for Joint Probing and Assignment (OLPA)} 
  \label{alg:online}

  \begin{algorithmic}[1]
    % Initializations
    \State $\hat{p}_{m, d}^{(t)} \leftarrow 0, \forall m \in [M], d \in [D_{\text{max}}]$
    \State $\hat{\mu}^{(t)}_{m,k} \leftarrow 0, \forall m \in [M], k \in [K]$
    \State $n^{(t)}_{m,k} \leftarrow 0, \forall m \in [M], k \in [K]$
    \State $\epsilon^{(t)}_{m,k} \leftarrow +\infty, \forall m \in [M], k \in [K]$
    \State $\hat{F}^{(t)}_{m,k}(\bx) \equiv 1, \forall x \in \mathbb{R}, m \in [M], k \in [K]$
    
    % Define the update function with parameters S, \bX_{t,m}, and N_{t,m}
    \Function{UpdateEstimates}{$M$, $\bX_{t,m}$, $N_{t,m}$}
      \State $\hat{p}_{m, d}^{(t)} \leftarrow \frac{(t-1)\hat{p}_{m, d}^{(t-1)} + \textbf{1}\{D_{m,t} = N_{t,m}\}}{t},$
      \Statex \hspace{\algorithmicindent} $\forall m \in [M], d \in [D_{\text{max}}]$
      
      \State $n^{(t)}_{m,k} \leftarrow n^{(t-1)}_{m,k} + \mathbf{1}\{k \in C_{t,m}\},$
      \Statex \hspace{\algorithmicindent} $\forall m \in [M], k \in [K]$
      
      \State $\hat{\mu}^{(t)}_{m,k} \leftarrow \frac{(n^{(t-1)}_{m,k}\hat{\mu}^{(t-1)}_{m,k} + \bX_{t,m} \mathbf{1}\{k \in C_{t,m}\})}{n^{(t)}_{m,k}},$
      \Statex \hspace{\algorithmicindent} $\forall m \in [M], k \in [K]$
      
      \State $\hat{F}^{(t)}_{m,k}(\bx) \leftarrow \frac{(n^{(t-1)}_{m,k}\hat{F}^{(t-1)}_{m,k}(\bx) + \mathbf{1} \{ R_{t,m,k} \leq \bX_{t,m} \} \mathbf{1}\{k \in C_{t,m}\})}{n^{(t)}_{m,k}},$
      \Statex \hspace{\algorithmicindent} $\forall x \in \mathbb{R}, m \in [M], k \in [K]$
      
      \For{$m \in [M]$}
        \For{$k \in [K]$}
          \If{$n^{(t)}_{m,k} \neq 0$}
              \State $\epsilon^{(t)}_{m,k} \leftarrow \sqrt{\left(1 + n^{(t)}_{m,k}\right) \frac{\ln \left(\sqrt{n^{(t)}_{m,k} + 1} / \delta\right)}{2 \left(n^{(t)}_{m,k}\right)^2}}$
          \EndIf
        \EndFor
      \EndFor
    \EndFunction

    \For {$t = 1, 2, \ldots, T$}
        \State $S_t^{\rm pr} \leftarrow$ Algorithm \ref{offline_greedy} $\left(\hat{\bP}^{(t)},\{\hat{F}^{(t)}_{m,k}\}_{m \in [M], k \in [K]}  \right)$
        \State Probe each arm $m \in S_t^{\rm pr}$ and get $(\bX_{t,m}, N_{t,m})$
        
        % Call the update function after probing
        \State \textsc{UpdateEstimates}($S_t^{\rm pr}$, $\bX_{t,m}$, $N_{t,m}$)
        
        \State Determine the optimal action profile $\bC_t = \arg\max$
        \Statex \hspace{\algorithmicindent} $\mathcal{R}_t^{\text{total}}(S_t^{\rm pr}, \{N_{t,m}, \bX_{t,m}\}_{m \in S},$
        \Statex \hspace{\algorithmicindent} $\{\hat{\bp}^{(t)}_m, \hat{\bmu}^{(t)}_{m} + \boldsymbol{\epsilon}^{(t)}_{m}\}_{m \notin S}, \bC)$
        
        % Call the update function again after matching
        \State Do assignment by the optimal action profile,
        \Statex \hspace{\algorithmicindent} get arm set $S_{\rm mat}$, $(\bX_{t,m}, N_{t,m})$
        \State \textsc{UpdateEstimates}($S_{\rm mat}$, $\bX_{t,m}$, $N_{t,m}$)
        
    \EndFor
  \end{algorithmic}
\end{algorithm}

\begin{figure*}[t]
  \centering
  \captionsetup{width=\textwidth,justification=raggedright,
                singlelinecheck=false}

  \subfloat[]{%
    \includegraphics[width=0.245\textwidth]{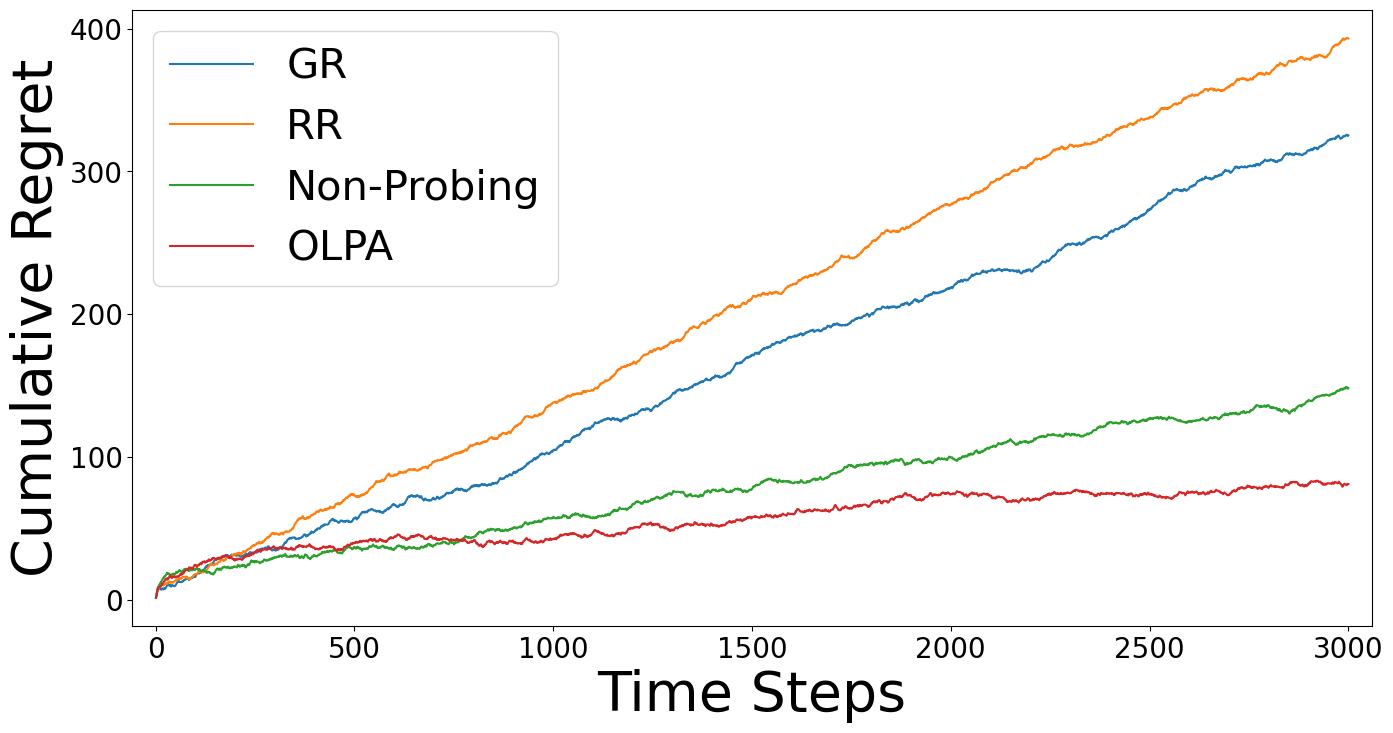}%
    \label{fig1}}%
  \subfloat[]{%
    \includegraphics[width=0.245\textwidth]{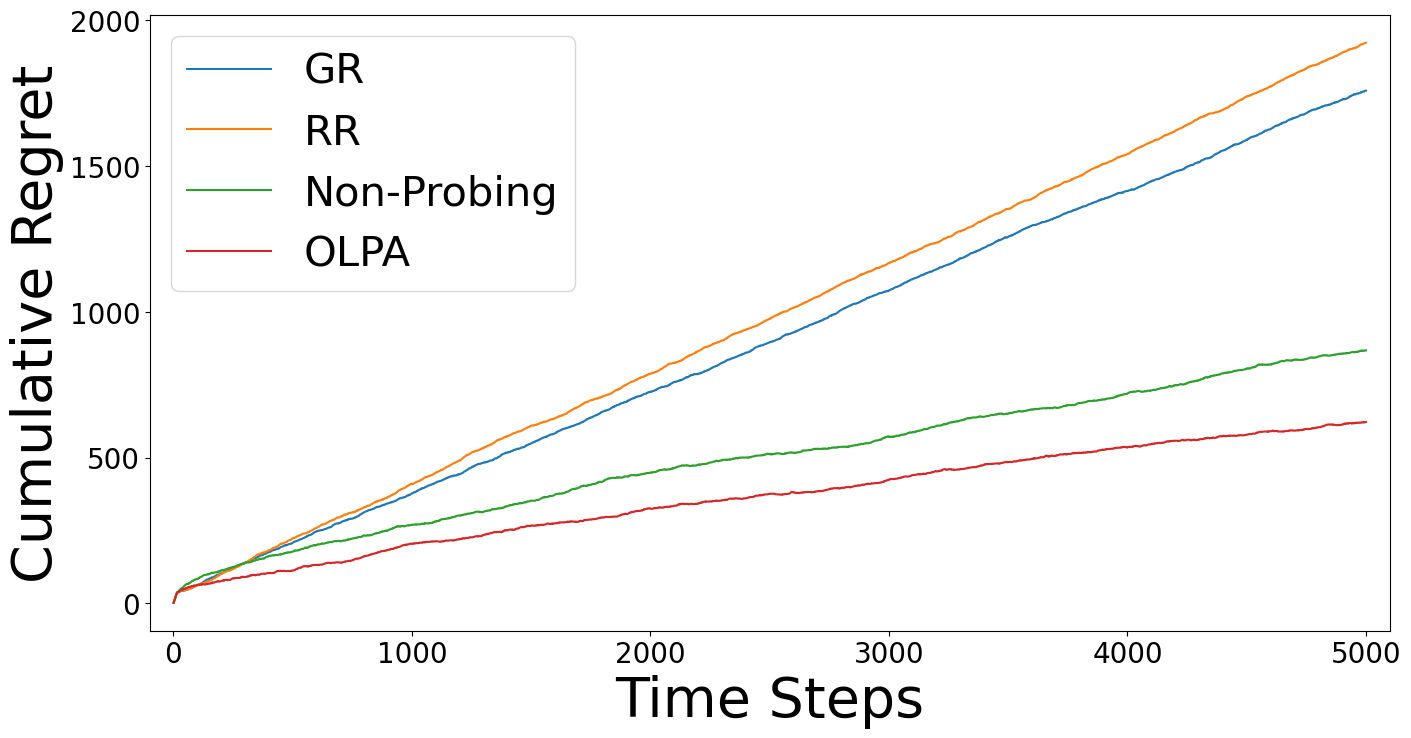}%
    \label{fig2}}%
  \subfloat[]{%
    \includegraphics[width=0.245\textwidth]{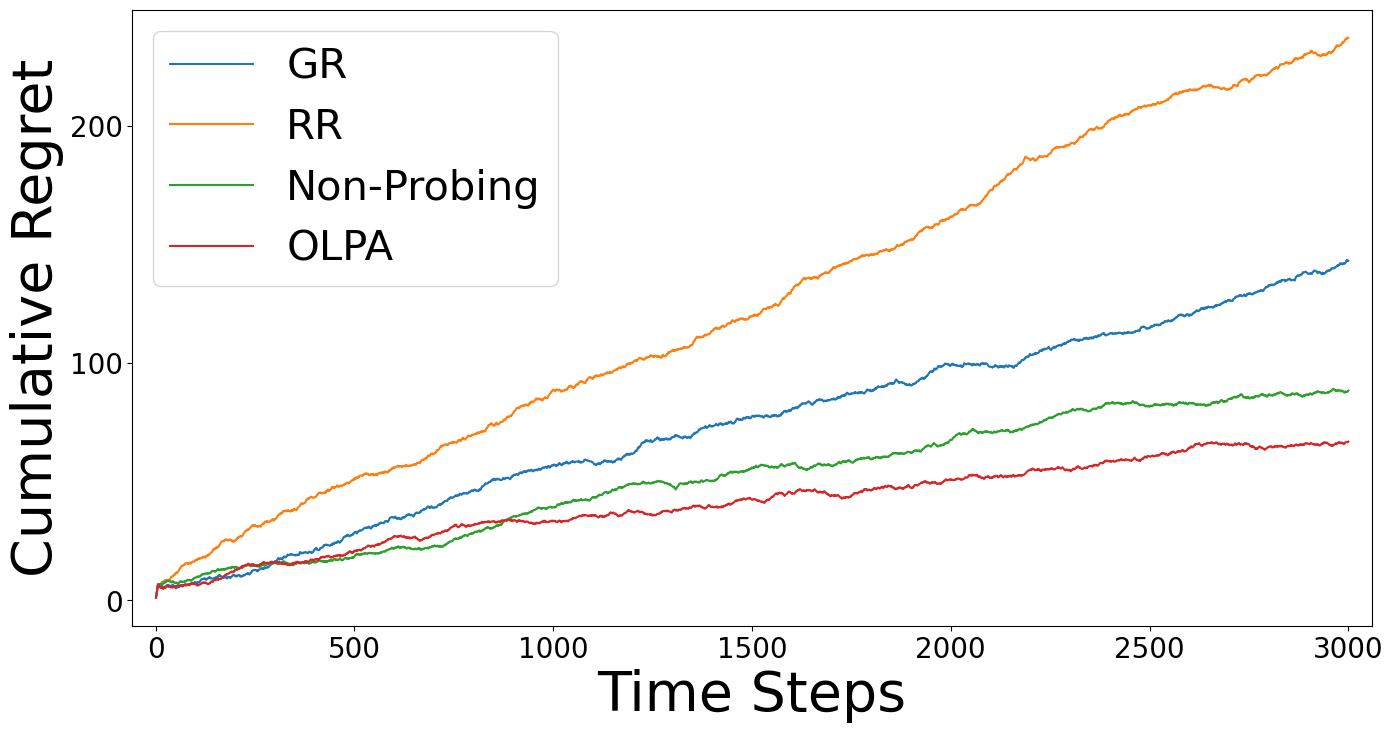}%
    \label{fig3}}%
  \subfloat[]{%
    \includegraphics[width=0.245\textwidth]{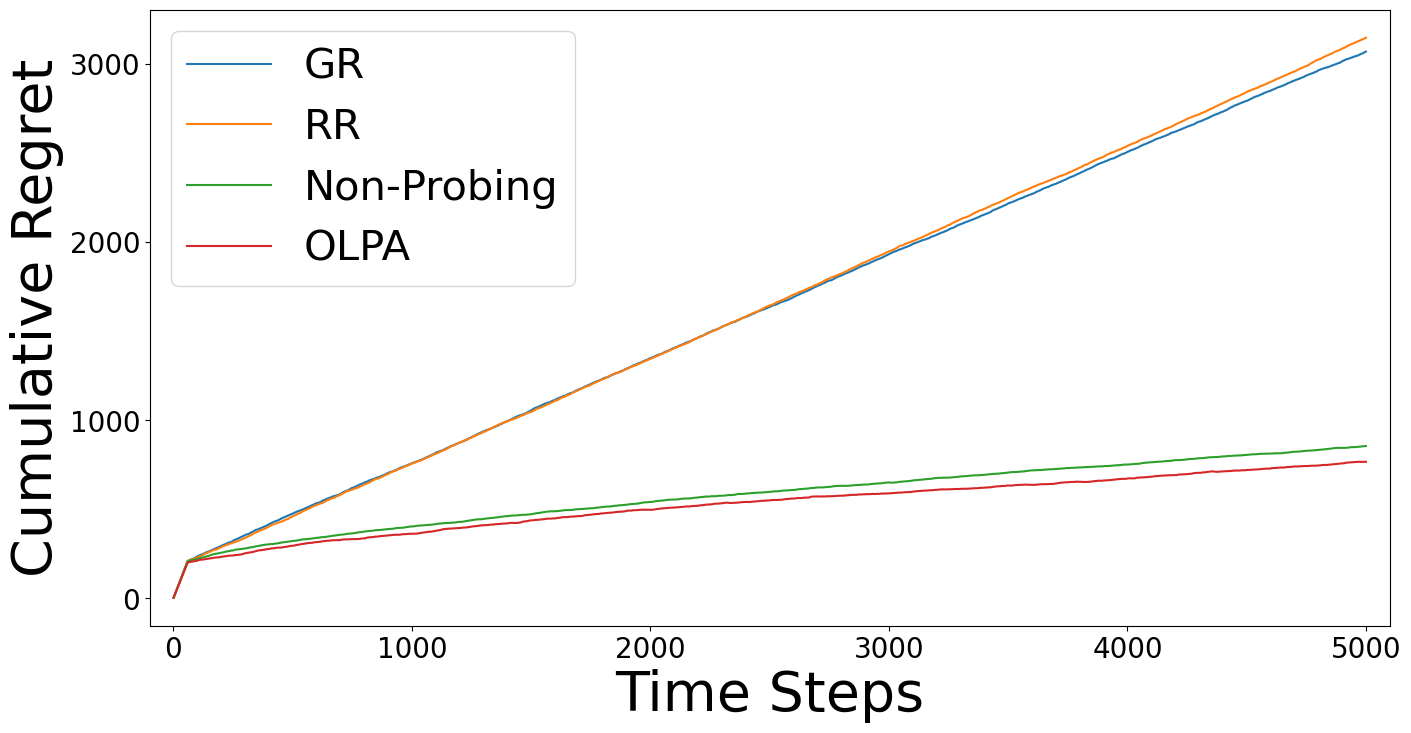}%
    \label{fig4}}%

  \caption{
\small (a): Arms number $M=3$, plays number $K=2$, Bernoulli distribution for reward, $D_{\text{max}}=5$. 
     (b): Arms number $M=5$, plays number $K=3$, Bernoulli distribution for reward, $D_{\text{max}}=7$. 
     (c): Arms number $M=3$, plays number $K=2$, General distribution for reward, $D_{\text{max}}=5$. 
     (d): Arms number $M=10$, plays number $K=6$, General distribution for reward, $D_{\text{max}}=7$. Data from NYYellowTaxi 2016.}
  \label{fig:ablation}
  % \vspace{-0.3cm}
\end{figure*}

\ignore{
\section{Experiments}
%\blue{Can we add a map that shows the real-world ridesharing setting we consider here? Also, some figures/statistics on resource distribution would help. We should also provide more details about the experiment.}

%\blue{Tianyi: Currently, we don't have a clear map for the real-world ridesharing setting.}

%\blue{how about a map of (part of) NYC that shows the distribution of locations and cars, e.g., for the setting in (d) with M=10, K=6? That will be more interesting than just showing the regret.}

In this section, we evaluate our algorithm and compare it with baselines for the ridesharing problem. We utilize a real-world dataset (NYYellowTaxi 2016)~\cite{shah2020neural}, which contains trip records, including information such as pickup and dropoff locations, passenger counts, and trip durations in New York City from 3/22/2016 to 3/31/2016. We discretize the geographical coordinates of pickup locations into bins of 0.01 degrees for both latitude and longitude. We then calculate the frequency of the passenger count within each of these bins %on the test dataset of 
using the data %the date on 
on 3/22/2016, providing us with a detailed distribution of passenger resources across New York City. Vehicle locations are randomly sampled. For each sampled vehicle location, 
%we generate a reward based on the distance between the vehicle location and each pickup location recorded in the dataset, 
The reward is based on the distance between the vehicle and the pickup location. We use the negative of the distance where closer distances result in higher rewards. This ensures that 
 %The reward is computed by considering the distance, ensuring 
closer pickups yield higher rewards. After that, we normalize these rewards to the [0, 1] range. %To ensure consistency across experiments, we normalize these rewards to the [0, 1] range. 
For reward distribution, we first consider the Bernoulli distribution and test with two different settings shown in Fig. 1. Then we consider a more complex setting where the rewards are sampled from a discrete distribution with finite support \([0.1, 0.4, 0.7, 1.0]\) as general distribution setting, which offers more variety than the Bernoulli distribution. 
%We consider a more general setting where the rewards are sampled from a general distribution supported with finite support [0.1, 0.4, 0.7, 1.0].
We consider Equation (\ref{regret}) to compute the cumulative regret, where $R\left(S^{\star}\right)$ is computed by exhaustive search.
%Algorithm \ref{optimal_probing_alg1}. 
}

\section{Experiments}

In this section, we evaluate our algorithm and compare it with baselines for the ridesharing problem. We utilize two real-world datasets: the NYYellowTaxi 2016 dataset~\cite{shah2020neural} and the Chicago Taxi Trips 2016 dataset~\cite{chauhan2023understanding}. 
The NYYellowTaxi dataset contains trip records, including pickup and dropoff locations, passenger counts, and trip durations in New York City from 3/22/2016 to 3/31/2016. For the Chicago Taxi Trips 2016 dataset, we randomly select a subset of the data spanning from 1/9/2016 to 9/29/2016. These two datasets allow us to evaluate our approach across different urban environments.

Given the similarity between the two datasets, we apply the same data processing method for both. The geographical coordinates of pickup locations are discretized into bins of 0.01 degrees for both latitude and longitude. The frequency of passenger counts within these bins is normalized to derive the probability mass function (PMF), representing the distribution of passengers in each grid cell. Vehicle locations are randomly pre-sampled within the dataset bounds and fixed throughout the experiments. Figure~\ref{fig:ny_environment} visualizes the resulting Manhattan street network (left) and one example environment with pre‑sampled vehicles, pickup requests, and the discretization grid (right).

\begin{figure}[t]
  \centering
\includegraphics[width=0.35\textwidth]{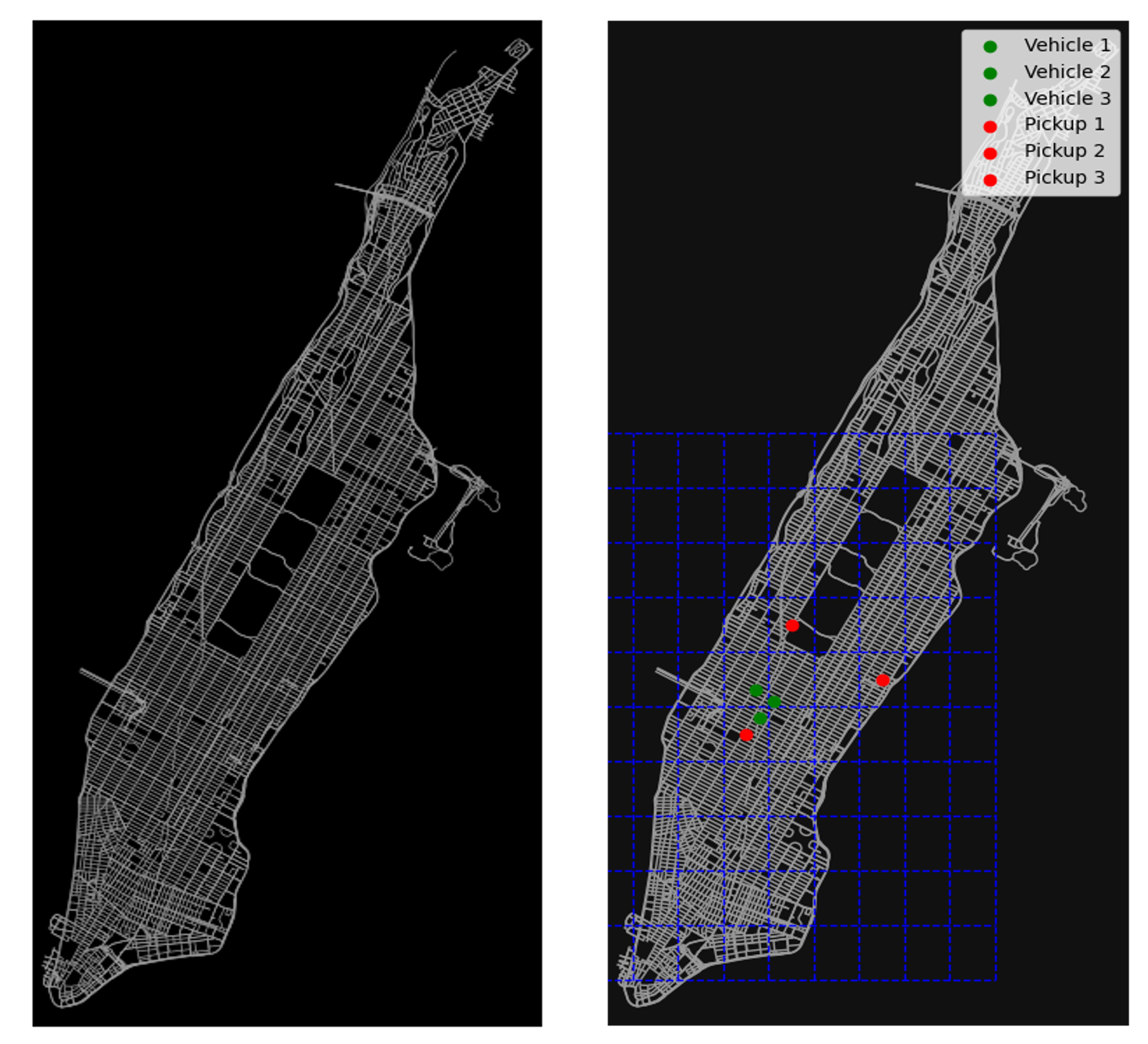}
    \caption{
    Visualization of the NYYellowTaxi dataset and the environment setting.
    Left: Manhattan road network used to sample vehicle and pickup locations.
    Right: an example of environment configuration.
    }
  \label{fig:ny_environment}
\end{figure}

The reward is based on the Manhattan distance between a vehicle and a pickup location. The distances are normalized to the [0, 1] range, where closer distances correspond to higher rewards. We consider two types of reward distributions: (1) a Bernoulli distribution with mean $\mu_{m,k}$, representing the normalized reward for arm $m$ and play $k$, assumed i.i.d. across $t$, $m$, and $k$; and (2) a discrete distribution with support \([0.1, 0.4, 0.7, 1.0]\), where probabilities for each level are derived from the distribution of normalized rewards. Cumulative regret is computed using Equation (\ref{regret}), where \( R(S^{\star}) \) is obtained through exhaustive search. Specific details are provided in the supplementary material.

\begin{table}[t]
  \centering
  \captionsetup{width=\columnwidth,justification=centering}

  \caption{Cumulative regret at 1000/2000/3000 steps on the Chicago Taxi Trips dataset.}
  \label{tab:regret_results}

  \resizebox{0.95\columnwidth}{!}{%
    \begin{tabular}{|c|c|c|c|c|c|c|c|}
      \hline
      \textbf{Set.} & \textbf{M} & \textbf{K} & \textbf{Distr.} & \textbf{Alg.} 
                    & \textbf{1000} & \textbf{2000} & \textbf{3000} \\ \hline
      (a) & 3  & 2 & Bernoulli & OLPA      &  67.67 & 112.09 & 151.40 \\ 
          &    &   &           & GR        & 117.79 & 220.55 & 309.40 \\ 
          &    &   &           & Non–Probing &  87.84 & 122.22 & 170.62 \\ 
          &    &   &           & RR        & 120.16 & 223.61 & 327.28 \\ \hline
      (b) & 5  & 3 & Bernoulli & OLPA      & 133.15 & 225.79 & 292.60 \\ 
          &    &   &           & GR        & 330.30 & 655.82 &1002.47 \\ 
          &    &   &           & Non–Probing & 178.82 & 288.53 & 357.26 \\ 
          &    &   &           & RR        & 292.88 & 580.72 & 831.95 \\ \hline
      (c) & 3  & 2 & General   & OLPA      &  37.71 &  72.86 &  94.27 \\ 
          &    &   &           & GR        & 184.84 & 361.42 & 538.00 \\ 
          &    &   &           & Non–Probing &  54.01 &  89.28 & 118.10 \\ 
          &    &   &           & RR        & 183.80 & 359.63 & 537.37 \\ \hline
      (d) & 10 & 6 & General   & OLPA      &1961.19 &3323.28 &4357.97 \\ 
          &    &   &           & GR        &1865.01 &3546.49 &5254.05 \\ 
          &    &   &           & Non–Probing &1967.15 &3635.66 &5275.73 \\ 
          &    &   &           & RR        &1853.63 &3545.97 &5226.82 \\ \hline
    \end{tabular}
  }
\end{table}

%\subsection{Baselines}
%We consider the following algorithms as baselines:
%1. Non-Probing (OnLinActPrf Algorithm): This baseline is the OnLinActPrf Algorithm proposed in \cite{chen2022online}, and does not involve any probing. It only executes an optimal assignment strategy based on the available information.
%2. RR (Random Probing, Random assignment): In this baseline, we randomly select a number $i$ of arms to probe, perform random probing on these $i$ arms, and then apply random assignment to the outcomes.
%3. GR (Greedy Probing, Random assignment): This method uses the same greedy probing strategy as Algorithm \ref{alg:online} in our approach. However, instead of performing optimal assignment, it randomly selects arm-play pairs for assignment after probing. 

\subsection{Baselines}
We compare OLPA with three baselines.  
\textbf{Non‑Probing} (OnLinActPrf~\cite{chen2022online}) disables probing altogether and solves the
assignment optimally from current estimates.  
\textbf{RR} (Random Probing, Random assignment) first samples a random number $i\!\le\!I$ of arms, probes them uniformly at random,
and then allocates plays by an independent uniform draw—capturing an uninformed exploration
strategy.  
\textbf{GR} (Greedy Probing, Random assignment) uses the same greedy probing strategy as Algorithm \ref{alg:online} in our approach. However, instead of performing optimal assignment, it randomly selects arm-play pairs for assignment after probing.

\subsection{Results}

We compare all algorithms across different settings using the datasets described above.  All four panels of Fig.~\ref{fig:ablation} are generated from the NYYellowTaxi 2016 data. Fig.~\ref{fig1} gives the results when there are 3 locations and 2 vehicles, and the reward distribution is i.i.d.\ Bernoulli. After 800 time steps, our OLPA algorithm significantly outperforms the others, particularly GR and RR, which accumulate much higher regret. In Fig.~\ref{fig2}, we consider a slightly larger setting with 5 locations and 3 vehicles. As the horizon grows, the gap between RR and the other strategies widens, yet OLPA still maintains the lowest regret. Besides the Bernoulli distribution, we also evaluate under the general four‐level reward distribution (Fig.~\ref{fig3}), where OLPA continues to lead, followed by Non-Probing and GR, with RR worst of all. Finally, in the largest setting (Fig.~\ref{fig4}) the performance gap between GR, RR, and the other methods widens even further, yet OLPA still achieves the lowest cumulative regret, demonstrating its robustness across problem scales and reward models. 

Table~\ref{tab:regret_results} reports cumulative regret at 1 k, 2 k, and 3 k steps on the Chicago Taxi Trips dataset. In every configuration OLPA attains the smallest regret—e.g.\ in setting (b) it reduces regret by over 30\%  compared to Non-Probing and by a factor of 3$+$ relative to GR/RR—confirming that probing yields large constant-level gains in practice.

%\section{Conclusion}
%In this paper, we introduce a novel framework that integrates probing strategies into sequential user-centric selection problems to optimize resource allocation and reward maximization in environments with initially unknown parameters. For the offline setting with known resource and reward distributions, we develop a greedy probing algorithm that achieves constant-factor approximation. %with performance guarantees. 
%Furthermore, we incorporate the offline greedy probing algorithm into the online setting, develop a multi-phase stochastic combinatorial bandit algorithm, and derive its regret bounds. Our approach is validated through extensive experiments, demonstrating superior performance over baseline methods. The applicability of our method extends to real-world scenarios such as personalized recommendations and dynamic resource allocation in ridesharing, highlighting its relevance and effectiveness in practical applications.

\section{Conclusion}
%In this paper, we introduce Probing-augmented User-Centric Selection (PUCS), a novel framework that integrates probing strategies into sequential user-centric selection problems to optimize resource allocation and reward maximization in environments with initially unknown parameters. For the offline setting with known resource and reward distributions, we develop a greedy probing algorithm that achieves constant-factor approximation. In the online setting we develop OLPA, a two‑phase algorithm that combines the offline routine with a custom UCB assignment rule. OLPA attains a $\widetilde O(\sqrt{T})$ regret bound whose leading constant improves on the no‑probing variant while keeping the same asymptotic rate. Our approach is validated through extensive experiments, demonstrating superior performance over baseline methods. The applicability of our method extends to real-world scenarios such as personalized recommendations and dynamic resource allocation in ridesharing, highlighting its relevance and effectiveness in practical applications.

We introduce PUCS, a probing‑augmented framework for sequential user‑centric selection. In the offline case (known distributions) we give a greedy probing algorithm with a constant‑factor guarantee; in the online case we couple this routine with a UCB‑style assignment to obtain \textsc{OLPA}, whose regret is $\widetilde{O}(\sqrt{T})$ with a strictly smaller leading constant than the no‑probing variant. Experiments on two large taxi‑trip datasets show consistent gains over strong baselines, indicating that PUCS can benefit practical systems such as content recommendation and ridesharing dispatch.

%\bibliography{mybibfile}
\bibliographystyle{plain}
\bibliography{ecai}

\clearpage
\onecolumn

\appendix

\renewcommand{\thesection}{\arabic{section}}
\setcounter{section}{0}

\setcounter{lemma}{0}
\renewcommand{\thelemma}{\arabic{lemma}}

\setcounter{theorem}{0}
\renewcommand{\thetheorem}{\arabic{theorem}}

\section*{Appendix: Supplementary Material (Proofs and Experiment Details)}
\addcontentsline{toc}{section}{Appendix: Supplementary Material (Proofs)}

\section*{Overview}

In this section, we present the technical proofs that support the results of our paper. The structure is as follows:

\begin{itemize}
    %\item \textbf{Section 1}: We prove the total expected reward for arm \( m \), as defined in the Probing‑Augmented User‑Centric Selection (PUCS) Framework of our main paper.
    \item \textbf{Section 1}: This section details the Optimal Assignment Policy, which is used to compute \( h_{\text{prob}} \), \( f_{\text{unprobed}} \), and \( h_{\text{total}} \).
    
    \item \textbf{Section 2}: We provide proofs for the offline case. Our \textbf{key contributions} here include Lemma 4, where we demonstrate that \( f_{\text{prob}}(S) \) is submodular, and Theorem \ref{thm: offline}, where we derive an approximation solution for the offline problem.
    
    \item \textbf{Section 3}: This section covers the proofs for the online case. Our \textbf{key contribution} to the online case is the derivation of a regret bound, as established in Theorem \ref{thm:online}.
\end{itemize}

\section{Optimal Assignment with or without Probing}

In this section, we illustrate how to compute \( h_{\text{prob}} \), \( f_{\text{unprobed}} \), and \( h_{\text{total}} \), which are necessary for implementing Algorithms 1 and 2. These computations involve finding the optimal assignment policy based on the probing set and its realizations.

We can conclude the following Observation 1:
\begin{observation}
For computing  \( f_{\text{unprobed}} \), the optimal assignment policy is to find a \(\mathcal{U}\)-saturated and \(\mathcal{V}\)-monotone assignment, where \(\mathcal{U}\) denotes the set of all plays and \(\mathcal{V}\) represents the set of all arms. Given a probing set \( S \) and its realizations, the optimal assignment policy to compute \( h_{\text{prob}} \) is to perform maximum weighted matching. The optimal assignment policy to compute \( h_{\text{total}} \) is to construct a \(\mathcal{U}\)-saturated and \(\mathcal{V}\)-monotone assignment. \label{optimal_matching}
\end{observation}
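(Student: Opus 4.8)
The plan is to treat the three assignment subproblems through a single unifying observation: for each arm $m$, after sorting the assigned plays by their (expected or realized) per-play values in decreasing order, the reward decomposes as a sum over priority levels weighted by a \emph{non-increasing} sequence. Concretely, I would first rewrite the given closed form as
$\mathcal{R}_m(C_m;\bmu_m,\bp_m)=\sum_{i=1}^{|C_m|}\bmu_{m,i}^{\text{sort}}\,q_{m,i}$,
where $q_{m,i}:=\mathbb{P}[D_{t,m}\ge i]$ is the resource survival probability, obtained by exchanging the order of summation (the coefficient of $\bmu_{m,i}^{\text{sort}}$ collects $\sum_{d\ge i}p_{m,d}=q_{m,i}$). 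Since $q_{m,i}$ is non-increasing in $i$, this exposes the key monotone structure. The probed reward $\mathcal{R}_m^{\text{prob}}(C_m;\bX_m,N_m)=\sum_{i=1}^{\min(N_m,|C_m|)}X_{m,i}^{\text{sort}}$ is then exactly the degenerate case $q_{m,i}=\mathbf{1}\{i\le N_m\}$ with the realizations $X_{m,k}$ playing the role of the means. Establishing these two forms is the first step and makes the structure of all three objectives transparent.

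For $h_{\text{prob}}$, every play admitted to a probed arm $m$ (up to its realized capacity $N_m$) receives its full realized reward $X_{m,k}$ with no priority discount, so I would construct a bipartite graph between the plays $\mathcal{U}$ and arm-slots (arm $m$ contributing $\min(N_m,K)$ identical slots of weight $X_{m,k}$ for play $k$) and argue that $h_{\text{prob}}$ equals the value of a maximum weighted matching: any disjoint assignment $\{C_m\}$ maps to a feasible matching of the same value (keeping only the top-$N_m$ plays per arm, which is precisely what $\mathcal{R}_m^{\text{prob}}$ rewards) and conversely, and because the weights are nonnegative the max-weight matching automatically retains the highest-reward plays, matching the ``larger rewards get priority'' rule. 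Note that here one does \emph{not} in general saturate $\mathcal{U}$, since the probed capacity $\sum_{m\in S}N_m$ may fall below $K$; this is exactly why $h_{\text{prob}}$ is phrased as a matching rather than a saturated–monotone assignment.

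For $f_{\text{unprobed}}$, I would invoke that this is precisely the assignment problem of \cite{chen2022online} (the $S=\emptyset$ case of the full model), and adapt its argument to our per-pair means $\mu_{m,k}$. Saturation of $\mathcal{U}$ holds because the marginal value of adding any play to an arm is a product of a nonnegative value and a nonnegative survival weight, so assigning every play never decreases the objective, and a saturated assignment is always feasible since each arm can host all $K$ plays. $\mathcal{V}$-monotonicity follows from a rearrangement/exchange argument using the step-one decomposition: within an arm the reward couples the $i$-th largest assigned value with the $i$-th survival weight $q_{m,i}$, and since $q_{m,\cdot}$ is non-increasing, placing higher-value plays at higher-priority levels can only increase the weighted sum, so any violation is removed by a pairwise swap without loss. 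Finally, for $h_{\text{total}}$ I would apply the degenerate reinterpretation: each probed arm $m\in S$ with realization $(N_m,\bX_m)$ is identical to an unprobed arm whose resource is the point mass $\delta_{N_m}$ and whose means equal $X_{m,k}$, turning $h_{\text{total}}$ into an instance of the $f_{\text{unprobed}}$ problem over all $M$ arms, so the $\mathcal{U}$-saturated and $\mathcal{V}$-monotone characterization transfers verbatim.

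The main obstacle is the $\mathcal{V}$-monotone exchange argument for $f_{\text{unprobed}}$ and $h_{\text{total}}$: one must verify that the pairwise swaps respect disjointness across arms and that the sorted decomposition is preserved after each swap, i.e.\ that the rearrangement inequality can be applied simultaneously within every arm while the play-to-arm matching is held fixed. Once the closed-form decomposition of $\mathcal{R}_m$ and the monotonicity of $q_{m,\cdot}$ are in hand, the reduction of $h_{\text{prob}}$ to matching, the saturation step, and the degenerate-distribution identification are all routine.
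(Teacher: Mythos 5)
Your proposal is correct and follows essentially the same route as the paper: you reduce $h_{\text{prob}}$ to a maximum weighted matching on a bipartite graph whose right side consists of per-unit resource slots of the probed arms, and you reduce $h_{\text{total}}$ to the unprobed assignment problem of \cite{chen2022online} by replacing each probed arm's resource distribution with the point mass at $N_m$ and its means with the realizations $X_{m,k}$ — exactly the two constructions the paper uses, with $f_{\text{unprobed}}$ handled by direct appeal to \cite{chen2022online} in both cases. Your explicit survival-probability decomposition $\mathcal{R}_m(C_m)=\sum_{i=1}^{|C_m|}\bmu_{m,i}^{\text{sort}}\,q_{m,i}$ and the sketched exchange argument for $\mathcal{V}$-monotonicity are a more self-contained fleshing-out of what the paper simply cites, but they do not change the underlying approach.
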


\subsection{Optimal Assignment without Probing}
When there is no probing, like computing \( f_{\text{unprobed}} \), it is shown in~\cite{chen2022online} that there is a polynomial time algorithm to obtain the optimal assignment in user-centric selection in each time round (using either the accurate reward and resource distributions in the offline case or the estimated distributions in the online setting). 
%The optimal assignment policy is to find a \(\mathcal{U}\)-saturated and \(\mathcal{V}\)-monotone assignment~\cite{chen2022online}. \(f_{\text{unprobed}} \) is used in our Algorithm 1 in line 8.
The optimal assignment policy is to find a \(\mathcal{U}\)-saturated and \(\mathcal{V}\)-monotone assignment~\cite{chen2022online}, where \(\mathcal{U}\) denotes the set of all plays and \(\mathcal{V}\) represents the set of all arms. A \(\mathcal{U}\)-saturated assignment ensures that each play is matched to an arm, while a \(\mathcal{V}\)-monotone assignment ensures that the matching within each arm satisfies the monotonicity property as described in~\cite{chen2022online}.

\subsection{Optimal Assignment with Probing}
When the probing set \( S \) and its realizations \(\{\bX_m,N_{m}\}_{m\in S}\) are fixed, the optimal assignment differs slightly. We now detail how to compute \( h_{\text{prob}} \), \( f_{\text{unprobed}} \), and \( h_{\text{total}} \) under this setting.

\subsubsection{Computing \( h_{\text{prob}} \)}
To evaluate \( h_{\text{prob}}(S, \{\bX_m, N_{m}\}_{m\in S}) \), we construct a bipartite graph \( G = [A \cup B, E, W] \), where:
\begin{itemize}
    \item \( A = \{a_1, \dots, a_K\} \) represents the set of plays.
    \item \( B = \bigcup_{m\in S} \{b_{m,1}, \dots, b_{m,N_m}\} \) represents the set of available resources, where each unit of resource is treated as an independent node.
    \item The edge set \( E = A \times B \) contains edges \((a_k, b_{m,j_m})\) for all \( k \in [K] \), \( m \in S \), and \( j_m \in [N_m] \).
    \item The weight of edge \((a_k, b_{m,j_m})\) is defined as \( w_{(a_k, b_{m,j_m})} = X_{m,k} \), which represents the reward realizations associated with assigning play \( a_k \) to resource \( b_{m,j_m} \).
\end{itemize}

Given this construction, computing \( h_{\text{prob}} \) , which is the value of $\sum_{m\in S} \mathcal{R}_{m}^{\text{prob}}(C_{m}; \bX_{m},N_{m})$, is equivalent to finding the maximum weighted matching in the bipartite graph \( G \). The value of \( h_{\text{prob}}(S, \{\bX_m, N_{m}\}_{m\in S}) \) corresponds to the sum of the weights of the edges in this maximum matching. \( h_{\text{prob}} \) is mainly used in Algorithm 1. By computing the expected value of \( h_{\text{prob}} \), we can get the value of \( f_{\text{prob}} \), which is used in Algorithm 1.

\subsubsection{Computing \( h_{\text{total}} \)}
To evaluate \( h_{\text{total}} \), we adjust the reward expectations and probability distributions based on the probing set \( S \) and the realizations \(\{\bX_m, N_{m}\}_{m\in S}\):
\begin{itemize}
    \item Set \( \bar{p}_{m,d} = 1 \) if \( m \in S \) and \( d = N_m \), and \( \bar{p}_{m,d} = 0 \) if \( m \in S \) and \( d \neq N_m \).
    \item For all \( m \in S \), set \( \bar{\mu}_{m,k} = X_{m,k} \) for all \( k \in [K] \).
\end{itemize}

The optimization problem defining \( h_{\text{total}} \) can now be viewed as the assignment problem defined in \cite[Section 3]{chen2022online}, with:
\begin{itemize}
    \item Reward expectations \( \{\bmu_m\}_{m\in [M]} \) replaced by the adjusted reward expectations \( \{\bar{\bmu}_m\}_{m\in S} \cup \{\bmu_m\}_{m\in [M] \setminus S} \).
    \item Probability mass matrices \( \{\bp_m\}_{m\in [M]} \) replaced by the adjusted matrices \( \{\bar{\bp}_m\}_{m\in S} \cup \{\bp_m\}_{m\in [M] \setminus S} \).
\end{itemize}

By \cite[Theorem 1]{chen2022online}, to get the optimal assignment, one can first construct a \(\mathcal{U}\)-saturated maximum weighted assignment and then transform it to be \(\mathcal{V}\)-monotone. The details of this algorithm are provided in \cite[Algorithm 1]{chen2022online}. Finally, the optimal assignment can be mapped to the optimal action profile as described in \cite[Lemma 3]{chen2022online}. Therefore, computing \( h_{\text{total}} \) is equivalent to evaluating this \(\mathcal{U}\)-saturated and \(\mathcal{V}\)-monotone assignment.

\( h_{\text{total}} \) is used in Algorithm 2 and is mainly used to compute $f(S)$ since $f(S)$ is the expected value of \( h_{\text{total}} \). Then we use $f(S)$ to evaluate $R(S):=(1-\alpha(|S|))f(S)$, and finally compute regret.

\section{Proofs for the offline case}
In this section, we provide detailed proofs related to the offline case of our problem. We start with Proof of Lemma 1 until Theorem \ref{thm: offline}. Lemma 4 and Theorem \ref{thm: offline} are our \textbf{key contributions}. For Lemma 4, we show $f_{\text{prob}}(S)$ is submodular. And in Theorem \ref{thm: offline}, we further obtain an approximation solution to our offline problem.

\begin{lemma}\label{lemma: upper bound for f}
Given probing set $S$, it holds that 
\[f(S)\leq f_{\text{prob}}(S)+ f_{\text{unprobed}}(S).\]
\end{lemma}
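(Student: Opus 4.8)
The plan is to prove the bound pointwise in the probed realizations and only take expectations at the very end. First I would fix an arbitrary realization $\{N_m,\bX_m\}_{m\in S}$ of the probed resources and rewards, and let $\{C_m^\star\}_{m\in[M]}$ be an optimal assignment attaining $h_{\text{total}}(S,\{N_m,\bX_m\}_m,\{\bp_m,\bmu_m\}_{m\notin S})$. Because the disjointness constraint $C_{m_1}\cap C_{m_2}=\emptyset$ in the definition of $h_{\text{total}}$ ranges over all arms $m\in[M]$, the subcollection $\{C_m^\star\}_{m\in S}$ is automatically a feasible assignment for the probed-only problem defining $h_{\text{prob}}$, and likewise $\{C_m^\star\}_{m\in[M]\setminus S}$ is feasible for the unprobed-only problem defining $f_{\text{unprobed}}$. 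The point is that restricting one joint optimum to each block preserves feasibility for free.

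Next I would split the $h_{\text{total}}$ objective along the partition of arms into $S$ and $[M]\setminus S$, exactly as it is written:
\[
h_{\text{total}} = \sum_{m\in S}\mathcal{R}_m^{\text{prob}}(C_m^\star;\bX_m,N_m) + \sum_{m\in[M]\setminus S}\mathcal{R}_m(C_m^\star;\bmu_m,\bp_m).
\]
By feasibility of $\{C_m^\star\}_{m\in S}$ and the definition of $h_{\text{prob}}$ as a maximum, the first sum is at most $h_{\text{prob}}(S,\{\bX_m,N_m\}_{m\in S})$; by feasibility of $\{C_m^\star\}_{m\in[M]\setminus S}$ and the definition of $f_{\text{unprobed}}$ as a maximum, the second sum is at most $f_{\text{unprobed}}(S)$. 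This yields the pointwise inequality
\[
h_{\text{total}}(S,\{N_m,\bX_m\}_m,\{\bp_m,\bmu_m\}_{m\notin S}) \le h_{\text{prob}}(S,\{\bX_m,N_m\}_{m\in S}) + f_{\text{unprobed}}(S).
\]

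Finally I would take the expectation over the probed realizations $\{N_m,\bX_m\}_{m\in S}$. The left side becomes $f(S)$ and the expectation of $h_{\text{prob}}$ becomes $f_{\text{prob}}(S)$ by definition; crucially $f_{\text{unprobed}}(S)$ is deterministic (its per-arm rewards $\mathcal{R}_m$ are themselves expectations over the resource $D_m$, and it involves no probed realization), so it passes through the expectation unchanged. Combining gives $f(S)\le f_{\text{prob}}(S)+f_{\text{unprobed}}(S)$.

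The one point that needs care—and essentially the only place the argument could go wrong—is the bookkeeping of where the expectation sits. The unprobed term inside $h_{\text{total}}$ already uses the expected-reward functional $\mathcal{R}_m$, so that term is constant in the probed randomness, whereas the probed term uses the realized rewards $\mathcal{R}_m^{\text{prob}}$ and must be kept inside the expectation until the last step. The structural reason the separate optima dominate the joint optimum is that I restrict a single joint assignment $\{C_m^\star\}$ rather than independently optimizing the two blocks; independent optimization could reuse the same play across the probed and unprobed subproblems and thereby violate the global disjointness constraint, which is exactly why the inequality is one-directional.
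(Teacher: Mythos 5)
Your proposal is correct and takes essentially the same route as the paper's own proof: restrict the optimizer of $h_{\text{total}}$ to the probed block and the unprobed block, observe that each restriction remains feasible for the subproblem defining $h_{\text{prob}}$ and $f_{\text{unprobed}}$ respectively, deduce the pointwise inequality $h_{\text{total}} \le h_{\text{prob}} + f_{\text{unprobed}}$, and conclude by taking expectations over the probed realizations. Your remark that $f_{\text{unprobed}}(S)$ is deterministic in the probed randomness is exactly the bookkeeping the paper handles by writing all three quantities as expectations over a common probability space, so the two arguments coincide.
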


\begin{proof}
We first note that
\begin{align*}
 f(S)=&\mathbb{E}_{\substack{X_{t,m,k}\sim R_{t,m,k} \\ N_{t,m}\sim D_{t,m}\\ m\in [M]\setminus S,~i\in[K] }}  \, f(S)\\
 =&\mathbb{E}_{\substack{X_{t,m,k}\sim R_{t,m,k} \\ N_{t,m}\sim D_{t,m}\\ m\in [M]\setminus S,~i\in[K] }}\mathbb{E}_{\substack{X_{t,m,k}\sim R_{t,m,k} \\ N_{t,m}\sim D_{t,m}\\ m\in S,~i\in[K] }}  \, h_{\text{total}}\left(S, \{N_m,\bX_m\}_{m\in S}, \{\bp_m,\bmu_m\}_{m\notin S}\right)\\
 =&\mathbb{E}_{\substack{X_{t,m,k}\sim R_{t,m,k} \\ N_{t,m}\sim D_{t,m}\\ m\in [M],~i\in[K] }}  \, h_{\text{total}}\left(S, \{N_m,\bX_m\}_{m\in S}, \{\bp_m,\bmu_m\}_{m\notin S}\right)
 \end{align*}
where the  first equality holds due to  $f(S)$ doesn't depend on the $\{N_m,\bX_m\}_{m\in [M]\setminus S}$ and the we use the fact that $\{N_m,\bX_m\}_{m\in [M]}$ are independent for the last equality.

Similarly, it holds that
\begin{align*}
    &f_{\text{prob}}(S)=\mathbb{E}_{\substack{X_{t,m,k}\sim R_{t,m,k} \\ N_{t,m}\sim D_{t,m}\\ m\in [M],~i\in[K] }}  \, h_{\text{prob}}\left(S, \{N_m,\bX_m\}_{m\in S}\right),\\
    &f_{\text{unprobed}}(S)=\mathbb{E}_{\substack{X_{t,m,k}\sim R_{t,m,k} \\ N_{t,m}\sim D_{t,m}\\ m\in [M],~i\in[K] }} ~f_{\text{unprobed}}(S).
\end{align*}

%\blue
{Since above $f(S), f_{\text{unprobed}}(S), f_{\text{prob}}(S)$ share the same expectation, we can focus on the terms that appear afterward.} Then it suffices to show that for any $S$ and any $\{N_m,\bX_m\}_{m\in S}$,  $h_{\text{total }}\leq h_{\text{prob}}+ f_{\text{unprobed}}$.

Let $\bC^{*,\text{total}}=\{C^{*,\text{total}}_m\}_{m\in[M]}$ be the optimizer of the problem defining $h_{\text{total}}$, then 
\begin{align*}
    h_{\text{total}}=  \sum_{m\in M\setminus S}\mathcal{R}_m(C^{*,\text{total}}_m; \bmu_m, \bp_m) +\sum_{m\in S} \mathcal{R}_{m}^{\text{prob}}(C^{*,\text{total}}_m; \bX_{m},N_{m}) 
\end{align*}

Define action profile $\bC^{(1)}=\{C^{(1)}_m\}_{m\in[M]}$ and  $\bC^{(2)}=\{C^{(2)}_m\}_{m\in[M]}$ as follows
\begin{align*}
    C^{(1)}_m:=\begin{cases}
        C^{*,\text{total}}_m,~\text{ if $m\in S$}\\
        \emptyset, ~\text{ if $m\notin S$}
    \end{cases},~~C^{(2)}_m:=\begin{cases}
        C^{*,\text{total}}_m,~\text{ if $m\in [M]\setminus S$}\\
        \emptyset, ~\text{ if $m\in S$}
    \end{cases},
\end{align*}
then $\bC^{(1)}$ and $\bC^{(2)}$ satisfying the constraint of optimization problems defining $h_{\text{prob}}$ and $f_{\text{unprobed}}$, respectively.

% \blue{Tianyi: Is there a skip step for first line? }

Thus we conclude the proof by noting that
\begin{align*}
    h_{\text{prob}}+ f_{\text{unprobed}}\geq& \sum_{m\in S} \mathcal{R}_{m}^{\text{prob}}( C^{(1)}_m; \bX_{m},N_{m})+ \sum_{m\in M\setminus S}\mathcal{R}_m( C^{(2)}_m; \bmu_m, \bp_m) \\
    =& \sum_{m\in S} \mathcal{R}_{m}^{\text{prob}}(C^{*,\text{total}}_m; \bX_{m},N_{m})+\sum_{m\in M\setminus S}\mathcal{R}_m(C^{*,\text{total}}_m; \bmu_m, \bp_m) \\
    =& h_{\text{total}}.
\end{align*}
% \blue{Tianyi: should we add a conclusion sentence to say the final Lemma 1 equation?}

% \blue
{The first line of equation
\begin{align*}
    h_{\text{prob}}+ f_{\text{unprobed}} \geq \sum_{m\in S} \mathcal{R}_{m}^{\text{prob}}( C^{(1)}_m; \bX_{m},N_{m})+ \sum_{m\in M\setminus S}\mathcal{R}_m( C^{(2)}_m; \bmu_m, \bp_m).
\end{align*}
holds because $h_{\text{prob}}$ and $f_{\text{unprobed}}$ consider their respective optimal assignments.}

\end{proof}

\begin{lemma}\label{lemma: monotonely decreasing}
$f_{\text{unprobed}}(S)$ is monotonely decreasing, i.e. for any $S\subseteq T\subseteq [M]$,  $f_{\text{unprobed}}(S)\geq  f_{\text{unprobed}}(T) $. 
\end{lemma}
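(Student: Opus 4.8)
The plan is to prove the claim by a standard ``shrinking the feasible region'' argument, exploiting the fact that a larger probing set leaves fewer arms available for assignment. Concretely, I would first observe that $S\subseteq T$ implies $[M]\setminus T\subseteq [M]\setminus S$, so every arm over which the inner maximization defining $f_{\text{unprobed}}(T)$ ranges is also available when forming $f_{\text{unprobed}}(S)$. Thus the program defining $f_{\text{unprobed}}(T)$ is effectively a constrained version of the one defining $f_{\text{unprobed}}(S)$ (the extra arms in $T\setminus S$ are simply forbidden), and maximizing the same objective over a smaller index set cannot produce a larger value.

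To make this precise, I would take an optimal assignment $\{C_m^\star\}_{m\in[M]\setminus T}$ achieving $f_{\text{unprobed}}(T)$ and lift it to a feasible assignment for $f_{\text{unprobed}}(S)$ by setting $\tilde C_m = C_m^\star$ for $m\in[M]\setminus T$ and $\tilde C_m=\emptyset$ for each extra arm $m\in T\setminus S$. The pairwise-disjointness constraint is preserved because we only adjoin empty play-sets, and the objective value is unchanged because an arm with no assigned plays contributes $\mathcal{R}_m(\emptyset;\bmu_m,\bp_m)=0$ (the defining sum over $C_m$ is empty). Hence $\{\tilde C_m\}_{m\in[M]\setminus S}$ is feasible for the $f_{\text{unprobed}}(S)$ program and attains value exactly $f_{\text{unprobed}}(T)$.

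Since $f_{\text{unprobed}}(S)$ is the maximum over all feasible assignments on $[M]\setminus S$, it is at least the value of this particular lifted assignment, which yields $f_{\text{unprobed}}(S)\geq f_{\text{unprobed}}(T)$, as claimed. I do not expect any genuine obstacle here: the only points requiring care are verifying that adjoining empty play-sets keeps the assignment feasible and contributes zero reward, both of which follow immediately from the definition of $\mathcal{R}_m$. The argument is purely combinatorial and uses no distributional structure of $\bmu_m$ or $\bp_m$ beyond the fact that the empty assignment earns zero reward; it therefore applies verbatim whether the reward functions are evaluated on true or estimated distributions.
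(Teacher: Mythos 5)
Your proposal is correct and follows essentially the same argument as the paper's proof: take the optimizer achieving $f_{\text{unprobed}}(T)$, extend it to the larger arm set $[M]\setminus S$ by assigning empty play-sets to the arms in $T\setminus S$, and observe that this extension remains feasible and attains the same value, so the maximum over $[M]\setminus S$ can only be larger. The only (cosmetic) difference is that the paper defines the extended profile over all of $[M]$ rather than just $[M]\setminus S$; the substance is identical.
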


\begin{proof}
    Given $S\subseteq T\subseteq [M]$, we let $\bC^{*,T}=\{C^{*,T}_m\}_{m\in[M]}$ be the optimizer of the problem defining $f_{\text{unprobed}}$, then 
\begin{align*}
    f_{\text{unprobed}}(T)=  \sum_{m\in M\setminus T}\mathcal{R}_m(C^{*,T}_m; \bmu_m, \bp_m).  
\end{align*}

Define action profile $\bC^{(3)}=\{C^{(3)}_m\}_{m\in[M]}$ as follows
\begin{align*}
 C^{(3)}_m:=\begin{cases}
        C^{*,T}_m,~\text{ if $m\in [M]\setminus T$}\\
        \emptyset, ~\text{ if $m\in T$}
    \end{cases},
\end{align*}
then $\bC^{(3)}$  satisfying the constraint of optimization problems defining $ f_{\text{unprobed}}(S,\{\bmu_m,\bp_m\}_{m \in [M]})$, since $[M]\setminus T\subseteq [M]\setminus S$. It follows that
\begin{align*}
    f_{\text{unprobed}}(S)&\geq \sum_{m\in M\setminus S}\mathcal{R}_m(C^{(3)}_m; \bmu_m, \bp_m)\\
    &= \sum_{m\in M\setminus T}\mathcal{R}_m(C^{(3)}_m; \bmu_m, \bp_m)+\sum_{m\in ([M]\setminus S)\cap T}\mathcal{R}_m(C^{(3)}_m; \bmu_m, \bp_m)\\
    &=\sum_{m\in M\setminus T}\mathcal{R}_m(C^{*,T}_m; \bmu_m, \bp_m)+\sum_{m\in ([M]\setminus S)\cap T}\mathcal{R}_m(\emptyset; \bmu_m, \bp_m)\\
    &= h_{\text{unprobed}}(T)
\end{align*}

\end{proof}

\begin{lemma}\label{lemma: monotonely increasing}
$f_{\text{prob}}(S)$ is monotonely increasing, i.e. for any $S\subseteq T\subseteq [M]$,  $f_{\text{prob}}(S)\leq  f_{\text{prob}}(T) $. 
\end{lemma}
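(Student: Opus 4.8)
The plan is to mirror the argument used for Lemma~\ref{lemma: monotonely decreasing}, but in the reverse monotonicity direction, reducing the claim to a deterministic statement about the maximum weighted matching that defines $h_{\text{prob}}$. First I would remove the randomness. Exactly as in the proof of Lemma~\ref{lemma: upper bound for f}, I observe that $h_{\text{prob}}(S,\{N_m,\bX_m\}_{m\in S})$ depends only on the realizations of arms inside $S$, so I may enlarge the expectation to range over the realizations of \emph{all} arms $m\in[M]$ without changing its value, and likewise for $T$:
\begin{align*}
f_{\text{prob}}(S)&=\mathbb{E}_{\substack{X_{t,m,k}\sim R_{t,m,k}\\ N_{t,m}\sim D_{t,m}\\ m\in[M],~k\in[K]}} h_{\text{prob}}\bigl(S,\{N_m,\bX_m\}_{m\in S}\bigr),\\
f_{\text{prob}}(T)&=\mathbb{E}_{\substack{X_{t,m,k}\sim R_{t,m,k}\\ N_{t,m}\sim D_{t,m}\\ m\in[M],~k\in[K]}} h_{\text{prob}}\bigl(T,\{N_m,\bX_m\}_{m\in T}\bigr).
\end{align*}
Since both right-hand sides are now expectations against the same product measure, it suffices to prove the pointwise inequality $h_{\text{prob}}(S,\cdot)\le h_{\text{prob}}(T,\cdot)$ for every fixed realization $\{N_m,\bX_m\}_{m\in[M]}$; monotonicity of the integrand then transfers directly to the expectations.

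For the pointwise step I would argue by exhibiting a feasible assignment, in the style of the feasibility constructions in Lemmas~\ref{lemma: upper bound for f} and~\ref{lemma: monotonely decreasing}. Let $\bC^{*,S}=\{C^{*,S}_m\}_{m\in S}$ be an optimizer of the problem defining $h_{\text{prob}}(S,\cdot)$, and extend it to all of $T$ by setting $C_m=\emptyset$ for $m\in T\setminus S$. The disjointness constraints $C_{m_1}\cap C_{m_2}=\emptyset$ remain satisfied because the added sets are empty, so this extended profile is feasible for the maximization defining $h_{\text{prob}}(T,\cdot)$. Its objective value is $\sum_{m\in S}\mathcal{R}^{\text{prob}}_m(C^{*,S}_m;\bX_m,N_m)+\sum_{m\in T\setminus S}\mathcal{R}^{\text{prob}}_m(\emptyset;\bX_m,N_m)$, and since an empty play set contributes $\mathcal{R}^{\text{prob}}_m(\emptyset;\cdot)=0$, this equals exactly $h_{\text{prob}}(S,\cdot)$. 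As $h_{\text{prob}}(T,\cdot)$ is the maximum over a strictly larger feasible set, we conclude $h_{\text{prob}}(T,\cdot)\ge h_{\text{prob}}(S,\cdot)$, which finishes the proof. Equivalently, in the bipartite matching picture of Observation~\ref{optimal_matching}, passing from $S$ to $T$ only adds resource nodes $\{b_{m,j}\}_{m\in T\setminus S}$ together with their incident edges to the graph $G$, and enlarging a maximum-weight matching instance in this way can never decrease its optimal value.

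I do not expect a genuine obstacle here. The only point requiring care is the measure-theoretic bookkeeping in the first step, namely confirming that after enlarging the expectations both $f_{\text{prob}}(S)$ and $f_{\text{prob}}(T)$ are integrals of their respective matching values against one and the same product measure, so that the pointwise comparison lifts to the expectations. Everything else is the standard fact that enlarging the feasible set of a maximization cannot decrease the optimum, and no sign assumption on the rewards is needed since the newly available resources may simply be left unmatched.
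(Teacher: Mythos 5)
Your proposal is correct and follows essentially the same route as the paper's own proof: reduce to a pointwise comparison of $h_{\text{prob}}$ under a common product measure (exactly as in the proof of Lemma~\ref{lemma: upper bound for f}), then extend the optimizer $\bC^{*,S}$ by empty play sets on $T\setminus S$ to obtain a feasible profile for the $T$-problem with the same objective value. The bipartite-matching remark is a nice additional sanity check but is not needed; the paper stops at the feasibility argument, just as you do.
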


\begin{proof}
Using a similar argument in the proof of Lemma \ref{lemma: upper bound for f}, we get that for any $S\subseteq [M]$,
    \begin{align*}
        f_{\text{prob}}(S)=\mathbb{E}_{\substack{X_{t,m,k}\sim R_{t,m,k} \\ N_{t,m}\sim D_{t,m}\\ m\in [M],~i\in[K] }}  \, h_{\text{prob}}\left(S, \{N_m,\bX_m\}_{m\in S}\right).
    \end{align*}
Then it suffices to show that for any $S\subseteq T\subseteq [M]$ and any fixed $\{N_m,\bX_m\}_{m\in [M]}$,  $$ h_{\text{prob}}\left(S, \{N_m,\bX_m\}_{m\in S}\right)\leq h_{\text{prob}}\left(T, \{N_m,\bX_m\}_{m\in T}\right).$$

% \blue{Tianyi: Is $h_{\text{probed}}$  a typo should be  $h_{\text{prob}}$?}

To begin with, we let $\bC^{*,S}=\{C^{*,S}_m\}_{m\in[M]}$ be the optimizer of the problem defining $h_{\text{prob}}\left(S, \{N_m,\bX_m\}_{m\in S}\right)$, then 
\begin{align*}
    h_{\text{prob}}\left(S, \{N_m,\bX_m\}_{m\in S}\right)=  \sum_{m\in S} \mathcal{R}_{m}^{\text{prob}}(C^{*,S}_m; \bX_{m},N_{m}) .  
\end{align*}

Next, we define action profile $\bC^{(4)}=\{C^{(4)}_m\}_{m\in[M]}$ as follows
\begin{align*}
 C^{(4)}_m:=\begin{cases}
        C^{*,S}_m,~\text{ if $m\in S$}\\
        \emptyset, ~\text{ if $m\notin S$}
    \end{cases},
\end{align*}
then $\bC^{(4)}$  satisfying the constraint of optimization problems defining $ h_{\text{prob}}(T,\{\bmu_m,\bp_m\}_{m \in T})$, since $S\subseteq T$. It follows that
\begin{align*}
   h_{\text{prob}}\left(T, \{N_m,\bX_m\}_{m\in T}\right)&\geq \sum_{m\in T}\mathcal{R}_m(C^{(4)}_m; \bmu_m, \bp_m)\\
    &= \sum_{m\in S}\mathcal{R}_m(C^{(4)}_m; \bmu_m, \bp_m)+\sum_{m\in T\setminus S}\mathcal{R}_m(C^{(4)}_m; \bmu_m, \bp_m)\\
    &=\sum_{m\in S}\mathcal{R}_m(C^{*,S}_m; \bmu_m, \bp_m)+\sum_{m\in T\setminus S}\mathcal{R}_m(\emptyset; \bmu_m, \bp_m)\\
    &= h_{\text{prob}}\left(S, \{N_m,\bX_m\}_{m\in S}\right).
\end{align*}

\end{proof}

\begin{lemma}\label{lemma: submodular}
$f_{\text{prob}}(S)$ is submodular.
\end{lemma}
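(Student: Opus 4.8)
The plan is to peel off the randomness and the block structure, reducing everything to a classical submodularity property of maximum–weight bipartite matchings. First I would invoke the representation already derived in the proof of Lemma~\ref{lemma: monotonely increasing}, namely $f_{\text{prob}}(S)=\mathbb{E}\,h_{\text{prob}}\!\left(S,\{N_m,\bX_m\}_{m\in S}\right)$, where the expectation is taken over all realizations but $h_{\text{prob}}$ depends only on the probed coordinates. Since submodularity is preserved under nonnegative linear combinations, and hence under expectation, it suffices to prove that for every fixed realization $\{N_m,\bX_m\}_{m\in[M]}$ the set function $S\mapsto h_{\text{prob}}\!\left(S,\{N_m,\bX_m\}_{m\in S}\right)$ is submodular on $[M]$.

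Next I would bring in the matching model of Observation~\ref{optimal_matching}: $h_{\text{prob}}(S,\cdot)$ is the value of a maximum–weight matching in the bipartite graph with left vertices $A=\{a_1,\dots,a_K\}$ (the plays) and right vertices $\bigcup_{m\in S}\{b_{m,1},\dots,b_{m,N_m}\}$ (the resource slots), every edge $(a_k,b_{m,j})$ carrying weight $X_{m,k}$. Let $U=\bigcup_{m\in[M]}\{b_{m,1},\dots,b_{m,N_m}\}$ be the ground set of all slots, and for $Y\subseteq U$ let $g(Y)$ be the maximum–weight matching value when the right side is restricted to $Y$ and all plays remain available; then $h_{\text{prob}}(S,\cdot)=g(U_S)$ with $U_S:=\bigcup_{m\in S}\{b_{m,1},\dots,b_{m,N_m}\}$. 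Because $\{U_m\}_{m\in[M]}$ is a \emph{partition} of $U$, we have $U_{S\cup T}=U_S\cup U_T$ and $U_{S\cap T}=U_S\cap U_T$, so submodularity of $g$ on $U$ transfers verbatim to $S\mapsto g(U_S)$ on $[M]$. The whole claim therefore reduces to showing that $g$ is submodular on $U$.

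For the core step I would use the single-element formulation: fix $Y\subseteq Z\subseteq U$ and $b\in U\setminus Z$, and aim at $g(Y\cup\{b\})+g(Z)\ge g(Y)+g(Z\cup\{b\})$. Take an optimal matching $\mathcal M_Y$ for $Y$ and an optimal matching $\mathcal M_Z$ for $Z\cup\{b\}$, and consider the union $\mathcal M_Y\cup\mathcal M_Z$, which decomposes into alternating paths and even cycles. The goal is to repartition exactly these edges into two matchings $\mathcal M'$ and $\mathcal M''$ such that $\mathcal M'$ touches only right vertices in $Y\cup\{b\}$ and $\mathcal M''$ only right vertices in $Z$; since the underlying edge set is unchanged, $w(\mathcal M')+w(\mathcal M'')=w(\mathcal M_Y)+w(\mathcal M_Z)$, and $g(Y\cup\{b\})+g(Z)\ge w(\mathcal M')+w(\mathcal M'')$ then yields the inequality. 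Each right vertex of $Z\setminus Y$ and the vertex $b$ has degree one in the union (it is matched only by $\mathcal M_Z$), so each is an endpoint of some component, and within a component all $\mathcal M_Z$-edges form a single color class; assigning $b$'s edge to $\mathcal M'$ and every $Z\setminus Y$ edge to $\mathcal M''$ is therefore the forced choice, while the $\mathcal M_Y$-edges (whose right endpoints lie in $Y$) can take whichever color remains.

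The routine parts here are the two reductions (expectation and partition-coarsening); the step I expect to be the main obstacle is the exchange argument for $g$, specifically showing that the forced color assignments never clash inside a single component. The key technical observation is a parity count: a bipartite alternating path joining two right vertices has an even number of edges, so its two endpoint edges belong to the same matching only when both endpoints are matched by $\mathcal M_Y$ and hence lie in $Y$. This rules out a component having both $b$ and a $Z\setminus Y$ vertex as endpoints, which is exactly the consistency needed for the two-coloring to exist. I would write this parity lemma out carefully and then check the three endpoint cases (a component containing $b$, one containing a $Z\setminus Y$ vertex, or one containing neither), completing the proof that $g$, and therefore $f_{\text{prob}}$, is submodular.
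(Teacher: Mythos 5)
Your proposal is correct, but it takes a genuinely different route from the paper's proof. Both arguments share the same opening reduction: since expectation preserves submodularity, it suffices to show that for every fixed realization $\{N_m,\bX_m\}_{m\in[M]}$ the map $S\mapsto h_{\text{prob}}(S,\{N_m,\bX_m\}_{m\in S})$ is submodular. From there the paper works with the union--intersection form of submodularity directly at the level of arms: it takes optimal action profiles $\bC^{*,S\cup T}$ and $\bC^{*,S\cap T}$, and explicitly reconstructs from them two feasible profiles $\bC^{S}$ and $\bC^{T}$ (splitting each $C^{*,S\cap T}_m$ into pieces $A_{1,m},A_{2,m},A_{3,m}$ according to where the union optimizer places plays, with an overflow set $B_m$ to respect the capacity $N_m$), then verifies disjointness and exact weight preservation case by case. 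You instead pass to the finer ground set of resource slots: since the slot sets $\{U_m\}$ partition the ground set, $U_{S\cup T}=U_S\cup U_T$ and $U_{S\cap T}=U_S\cap U_T$, so the claim reduces to the classical fact that the maximum-weight bipartite matching value, as a function of the admissible vertex set on one side, is submodular; you then prove that fact in diminishing-returns form by decomposing $\mathcal{M}_Y\cup\mathcal{M}_Z$ into alternating paths and cycles and recoloring, with the parity observation (an alternating path between two degree-one right endpoints has endpoint edges of different colors) ruling out the only possible conflict, namely $b$ and a vertex of $Z\setminus Y$ forcing opposite assignments in the same component. Your parity argument is sound, and the recoloring does yield matchings feasible for $Y\cup\{b\}$ and $Z$ with total weight preserved (edges in $\mathcal{M}_Y\cap\mathcal{M}_Z$ sit in $Y$ and pose no difficulty). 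What each approach buys: yours is more modular and conceptual---it isolates a reusable classical lemma and works one element at a time, making the exchange argument short; the paper's is self-contained at the arm level and never needs the slot-level reformulation (though it records that matching equivalence anyway in Observation~1 of the supplementary material), at the price of considerably heavier set-theoretic bookkeeping.
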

\begin{proof}
% \blue{Tianyi: confused at what similar argument?}

% Using a similar argument in the proof of Lemma \ref{lemma: upper bound for f}, we get that for any $S\subseteq [M]$,
%     \begin{align*}
%         f_{\text{prob}}(S)=\mathbb{E}_{\substack{X_{t,m,k}\sim R_{t,m,k} \\ N_{t,m}\sim D_{t,m}\\ m\in [M],~i\in[K] }}  \, h_{\text{prob}}\left(S, \{N_m,\bX_m\}_{m\in S}\right).
%     \end{align*}

To show $f_{\text{prob}}(S)$ is submodular \cite{nemhauser1978analysis}, we need to show that  for any $S, T\subseteq [M]$,  $f_{\text{prob}}(S)+f_{\text{prob}}(T)\geq f_{\text{prob}}(S\cap T)+f_{\text{prob}}(S\cup T)$ . 
% \blue{ should add a citation to show this is to demonstrate submodular}

To begin with, we recall that in the proof of Lemma \ref{lemma: upper bound for f}, we have that for any $S\subseteq [M]$,
    \begin{align*}
        f_{\text{prob}}(S)=\mathbb{E}_{\substack{X_{t,m,k}\sim R_{t,m,k} \\ N_{t,m}\sim D_{t,m}\\ m\in [M],~i\in[K] }}  \, h_{\text{prob}}\left(S, \{N_m,\bX_m\}_{m\in S}\right).
    \end{align*}

Since $f_{\text{prob}}$ share the same expectation, we can focus on $h_{\text{prob}}$. 
Then it suffices to show that for any $S, T\subseteq [M]$ and any fixed $\{N_m,\bX_m\}_{m\in [M]}$, 
\begin{align*}
&h_{\text{prob}}\left(S, \{N_m,\bX_m\}_{m\in S}\right)+h_{\text{prob}}\left(T, \{N_m,\bX_m\}_{m\in T}\right)\\
\geq &h_{\text{prob}}\left(S\cap T, \{N_m,\bX_m\}_{m\in S\cap T}\right)+h_{\text{prob}}\left(S\cup T, \{N_m,\bX_m\}_{m\in S\cup T}\right).
\end{align*}

To begin with, we let $\bC^{*,S\cup T}=\{C^{*,S\cup T}_m\}_{m\in[M]}$ and $\bC^{*,S\cap T}=\{C^{*,S\cap T}_m\}_{m\in[M]}$ be the optimizer associated with $h_{\text{probed}}\left(S\cup T, \{N_m,\bX_m\}_{m\in S\cup T}\right)$ and $h_{\text{probed}}\left(S\cap T, \{N_m,\bX_m\}_{m\in S\cap T}\right)$, respectively. Without loss of generality, we assume that $|C^{*,S\cup T}_m|\leq N_m$ and $|C^{*,S\cap T}_m|\leq N_m$. That's because we can always reconstruct such action profile: take optimizer ${\bC}^{*,S\cup T}_m$ as an example, let 
$$\tilde{C}^{*,S\cup T}_m={C}^{*,S\cup T}_m\setminus \{i\in {C}^{*,S\cup T}_m: \text{play $i$ doesn't get resource}\},$$
then $\mathcal{R}_{m}^{\text{prob}}(\tilde{C}^{*,S\cup T}_m; \bX_{m},N_{m})=\mathcal{R}_{m}^{\text{prob}}({C}^{*,S\cup T}_m; \bX_{m},N_{m})$ for all $m$, which implies that $\tilde{\bC}^{*,S\cup T}$ is also a optimizer and $|\tilde{C}^{*,S\cup T}_m|\leq N_m$ for all $m$.

In the rest of the proof, we always assume that $|C^{*,S\cup T}_m|\leq N_m$ and $|C^{*,S\cap T}_m|\leq N_m$, for all $m$.

Next, we construct two action profiles $\bC^S=\{C^S_m\}$ and $\bC^T=\{C^T_m\}$ satisfying the following properties: 
\begin{itemize}
    \item[(i)] $\bC^S$ and $\bC^T$ satisfy the constraint of optimization problems defining $ h_{\text{prob}}(S,\{\bmu_m,\bp_m\}_{m \in S})$ and $ h_{\text{prob}}(T,\{\bmu_m,\bp_m\}_{m \in T})$, respectively;
    \item[(ii)]
    \begin{equation}\label{eq:property 2-formulation 1}      
\begin{aligned}
    &\sum_{m\in S} \mathcal{R}_{m}^{\text{prob}}(C^{S}_m; \bX_{m},N_{m}) +\sum_{m\in T} \mathcal{R}_{m}^{\text{prob}}(C^{T}_m; \bX_{m},N_{m})\\
    = &\sum_{m\in S\cup T} \mathcal{R}_{m}^{\text{prob}}(C^{*,S\cup T}_m; \bX_{m},N_{m}) +\sum_{m\in S\cap T} \mathcal{R}_{m}^{\text{prob}}(C^{*,S\cap T}_m; \bX_{m},N_{m}).
\end{aligned} 
    \end{equation}

\end{itemize}

To begin with, we let
\begin{align*}
    C^S_m=C^{*,S\cup T}_m,~m\in S\setminus T,\\
    C^T_m=C^{*,S\cup T}_m,~m\in T\setminus S,
\end{align*}
then it remains to construct $ C^S_m$ and $ C^T_m$ for $m\in S\cap T$. 

For any $m\in S\cap T$, we split $C^{*,S\cap T}_m$ into three disjoint set:
\begin{align*}
    A_{1,m}=C^{*,S\cap T}_m \cap \left(\bigcup_{m\in S\setminus T}  C^{*,S\cup T}_m\right),\\
    A_{2,m}=C^{*,S\cap T}_m \cap \left(\bigcup_{m\in T\setminus S}  C^{*,S\cup T}_m\right),\\
    A_{3,m}=C^{*,S\cap T}_m \cap \left(\bigcup_{m\in S\cap T}  C^{*,S\cup T}_m\right),\\
    C^{*,S\cap T}_m=A_{1,m} \cup A_{2,m} \cup A_{3,m}.
\end{align*}
Next, for $m\in S\cap T$,  we construct  $ C^S_m$ and $ C^T_m$ as follows:
\begin{align*}
     C^S_m = A_{2,m}\cup C^{*,S\cup T}_m \setminus B_m(|A_{2,m}|+|C^{*,S\cup T}_m|-N_m),\\
     C^T_m = A_{1,m}\cup A_{3,m}\cup B_m(|A_{2,m}|+|C^{*,S\cup T}_m|-N_m).
\end{align*}
where $B_m(i)$ is defined as follows: given any integer $i$,  if $i\geq 0$, we let $B_m(i)$ be any subset of $C^{*,S\cup T}_m\setminus C^{*,S\cap T}_m$ with $|B_m(i)|=i$;  if $i\leq 0$, we let $B_m(i)=\emptyset$. We note that if $|A_{2,m}|+|C^{*,S\cup T}_m|-N_m>0$, it holds that
     \begin{align*}
     | A_{2,m}|+ |C^{*,S\cup T}_m|-N_m\leq & | A_{2,m}|+|C^{*,S\cup T}_m\cap C^{*,S\cap T}_m|+| C^{*,S\cup T}_m\setminus C^{*,S\cap T}_m |-N_m\\
     \leq & | A_{2,m}|+| A_{3,m}|+| C^{*,S\cup T}_m\setminus C^{*,S\cap T}_m |-N_m\\
     \leq & | C^{*,S\cup T}_m\setminus C^{*,S\cap T}_m |,
     \end{align*}
     i.e. $C^{*,S\cup T}_m\setminus C^{*,S\cap T}_m$ contains at least $| A_{2,m}|+ |C^{*,S\cup T}_m|-N_m$ elements, which implies the existence of $B_m(|A_{2,m}|+|C^{*,S\cup T}_m|-N_m)$.

Finally, we show that action profiles $\bC^S=\{C^S_m\}$ and $\bC^T=\{C^T_m\}$ satisfy the desired properties (i) and (ii). 

We begin with property (i). For $C^{S}_m$, we need to show that $C^{S}_{m_1}\cap C^{S}_{m_2}=\emptyset$ for all $m_1,m_2\in S$ and $m_1\neq m_2$. 
\begin{itemize}
    \item if  $m_1,m_2\in S\setminus T$, we have that $C^{S}_{m_1}\cap C^{S}_{m_2}=C^{*,S\cup T}_{m_1}\cap C^{*,S\cup T}_{m_2}=\emptyset$;
    \item if  $m_1,m_2\in S\cap T$, we have that 
    \begin{align*}
    C^{S}_{m_1}\cap C^{S}_{m_2}\subseteq & (A_{2,m_1}\cup C^{*,S\cup T}_{m_1})\cap (A_{2,m_2}\cup C^{*,S\cup T}_{m_2})\\
    =&(A_{2,m_1}\cap A_{2,m_2}) \cup (C^{*,S\cup T}_{m_1}\cap C^{*,S\cup T}_{m_2})\\
    &\cup (A_{2,m_1}\cap C^{*,S\cup T}_{m_2}) \cup (C^{*,S\cup T}_{m_1}\cap A_{2,m_2})\\
    =&  (A_{2,m_1}\cap C^{*,S\cup T}_{m_2}) \cup (C^{*,S\cup T}_{m_1}\cap A_{2,m_2})
    \end{align*}
and
\begin{align*}
     A_{2,m_1}\cap C^{*,S\cup T}_{m_2}=&C^{*,S\cap T}_{m_1} \cap \left(\bigcup_{m\in T\setminus S}  C^{*,S\cup T}_m\right)\cap C^{*,S\cup T}_{m_2}\\
     \subseteq & \left(\bigcup_{m\in T\setminus S}  C^{*,S\cup T}_m\right)\cap C^{*,S\cup T}_{m_2}\\
     =&  \bigcup_{m\in T\setminus S} \left(C^{*,S\cup T}_m\cap C^{*,S\cup T}_{m_2}\right)\\
     =& \emptyset, \text{(since $m_2\in S\cap T$ and $m\in T\setminus S$)},
\end{align*}

Similarly, we have $C^{*,S\cup T}_{m_1}\cap A_{2,m_2}=\emptyset$. Hence $C^{S}_{m_1}\cap C^{S}_{m_2}=\emptyset$.

\item if  $m_1\in S\setminus T$ and  $m_2\in S\cup T$, we have that 
  \begin{align*}
    C^{S}_{m_1}\cap C^{S}_{m_2}\subseteq & 
C^{*,S\cup T}_{m_1} \cap(A_{2,m_2}\cup C^{*,S\cup T}_{m_2})\\
    =& (C^{*,S\cup T}_{m_1}\cap C^{*,S\cup T}_{m_2})\cup (C^{*,S\cup T}_{m_1}\cap A_{2,m_2})\\
    =& C^{*,S\cup T}_{m_1}\cap A_{2,m_2}\\
    =&C^{*,S\cup T}_{m_1} \cap  C^{*,S\cup T}_{m_2}\cap \left(\bigcup_{m\in T\setminus S}  C^{*,S\cup T}_m\right)\\
     \subseteq &C^{*,S\cup T}_{m_1} \cap \left(\bigcup_{m\in T\setminus S}  C^{*,S\cup T}_m\right)\\
     =&  \bigcup_{m\in T\setminus S} \left(C^{*,S\cup T}_{m_1}\cap C^{*,S\cup T}_{m}\right)\\
     =& \emptyset, \text{(since $m_1\in S\cap T$ and $m\in T\setminus S$)}.
    \end{align*}
\end{itemize}

Using a similar argument, we can also verify that for   $C^{T}_{m_1}\cap C^{T}_{m_2}=\emptyset$ for all $m_1,m_2\in T$ and $m_1\neq m_2$.

   To verify the property (ii), we rewrite \eqref{eq:property 2-formulation 1} as follows
\begin{equation}\label{eq: property 2-formulation-2}
\begin{aligned}
    &\sum_{m\in S \setminus T} \mathcal{R}_{m}^{\text{prob}}(C^{S}_m; \bX_{m},N_{m}) +\sum_{m\in  T \setminus S} \mathcal{R}_{m}^{\text{prob}}(C^{T}_m; \bX_{m},N_{m})\\
    &+\sum_{m\in S \cap T} \left(\mathcal{R}_{m}^{\text{prob}}(C^{S}_m; \bX_{m},N_{m}) + \mathcal{R}_{m}^{\text{prob}}(C^{T}_m; \bX_{m},N_{m}) \right)\\
    = &\sum_{m\in S\setminus T} \mathcal{R}_{m}^{\text{prob}}(C^{*,S\cup T}_m; \bX_{m},N_{m}) +\sum_{m\in T\setminus S} \mathcal{R}_{m}^{\text{prob}}(C^{*,S\cup T}_m; \bX_{m},N_{m})\\
    &+\sum_{m\in S\cap T} \left(\mathcal{R}_{m}^{\text{prob}}(C^{*,S\cap T}_m ;\bX_{m},N_{m})+\mathcal{R}_{m}^{\text{prob}}(C^{*,S\cup T}_m; \bX_{m},N_{m})\right)
\end{aligned} 
\end{equation}

By the following relationship, we verify \eqref{eq: property 2-formulation-2}. 
\begin{itemize}
    \item for $m\in S\setminus T$, it holds that $ \mathcal{R}_{m}^{\text{prob}}(C^{S}_m; \bX_{m},N_{m}) = \mathcal{R}_{m}^{\text{prob}}(C^{*,S\cup T}_m; \bX_{m},N_{m})$;
    \item for $m\in T\setminus S$, it holds that $ \mathcal{R}_{m}^{\text{prob}}(C^{T}_m; \bX_{m},N_{m}) = \mathcal{R}_{m}^{\text{prob}}(C^{*,S\cup T}_m; \bX_{m},N_{m})$;
    \item  for $m\in  S\cap T$, we note that $|C^S_m|+|C^T_m|=|C^{*,S\cup T}_m|+|C^{*,S\cap T}_m| $.  If $|A_{2,m}|+|C^{*,S\cup T}_m|-N_m\leq 0$, then $|C^S_m|\leq N_m$ and $|C^T_m|\leq |A_{1,m}\cup A_{3,m} |\leq |C^{*,S\cap T}_m|\leq N_m$. If $|A_{2,m}|+|C^{*,S\cup T}_m|-N_m> 0$,  then $|C^S_m|= N_m$. By the fact that $|C^{*,S\cup T}_m|\leq N_m$, $|C^{*,S\cap T}_m|\leq N_m$, it follows that $|C^T_m|\leq N_m$. 
    \begin{align*}
        &\mathcal{R}_{m}^{\text{prob}}(C^{S}_m; \bX_{m},N_{m}) + \mathcal{R}_{m}^{\text{prob}}(C^{T}_m; \bX_{m},N_{m}) \\
        =& \sum_{i\in C^{S}_m} X_{m,i}+\sum_{i\in C^{T}_m} X_{m,i}\\
        =& \left(\sum_{i\in A_{2,m}} +\sum_{ C^{*,S\cup T}_m \setminus B_m } + \sum_{i\in A_{1,m}\cup A_{3,m}}+\sum_{i\in B_m}\right)  X_{m,i}\\
        =& \sum_{i\in C^{*,S\cup T}_m} X_{m,i}+\sum_{i\in C^{*,S\cap T}_m} X_{m,i}\\
        =&\mathcal{R}_{m}^{\text{prob}}(C^{*,S\cap T}_m; \bX_{m},N_{m})+\mathcal{R}_{m}^{\text{prob}}(C^{*,S\cup T}_m; \bX_{m},N_{m}),
    \end{align*}
where in the third line we use the fact that
\begin{align*}
    &A_{2,m}\cap C^{*,S\cup T}_m \setminus B_m(|A_{2,m}|+|C^{*,S\cup T}_m|-N_m)\\
    \subseteq & A_{2,m}\cap C^{*,S\cup T}_m\\
    =&C^{*,S\cap T}_m \cap \left(\bigcup_{j\in T\setminus S}  C^{*,S\cup T}_j\right)\cap C^{*,S\cup T}_m\\
    \subseteq & \left(\bigcup_{j\in T\setminus S}  C^{*,S\cup T}_j\right)\cap C^{*,S\cup T}_m\\
    =& \emptyset,\text{ (Since $m\in S\cap T$)},
\end{align*}
    and 
    \begin{align*}
       &( A_{1,m}\cup A_{3,m})\cap B_m(|A_{2,m}|+|C^{*,S\cup T}_m|-N_m)\\
       =& C^{*,S\cap T}_m\cap\left(\bigcup_{j\in S}  C^{*,S\cup T}_j\right)\cap  B_m(|A_{2,m}|+|C^{*,S\cup T}_m|-N_m)\\
       \subseteq &   C^{*,S\cap T}_m \cap (C^{*,S\cup T}_m\setminus C^{*,S\cap T}_m)=\emptyset,
    \end{align*}
    where in the last line we recall that $B_m(|A_{2,m}|+|C^{*,S\cup T}_m|-N_m)$ is an subset of $C^{*,S\cup T}_m\setminus C^{*,S\cap T}_m$.
\end{itemize}

Since action profiles $\bC^S=\{C^S_m\}$ and $\bC^T=\{C^T_m\}$ satisfy the desired properties (i) and (ii), it follows that 
\begin{align*}
    &h_{\text{prob}}\left(S, \{N_m,\bX_m\}_{m\in S}\right)+h_{\text{prob}}\left(T, \{N_m,\bX_m\}_{m\in T}\right)\\
    \geq&\sum_{m\in S} \mathcal{R}_{m}^{\text{prob}}(C^{S}_m; \bX_{m},N_{m}) +\sum_{m\in T} \mathcal{R}_{m}^{\text{prob}}(C^{T}_m; \bX_{m},N_{m})\\
    = &\sum_{m\in S\cup T} \mathcal{R}_{m}^{\text{prob}}(C^{*,S\cup T}_m; \bX_{m},N_{m}) +\sum_{m\in S\cap T} \mathcal{R}_{m}^{\text{prob}}(C^{*,S\cap T}_m; \bX_{m},N_{m})\\
   = &h_{\text{prob}}\left(S\cap T, \{N_m,\bX_m\}_{m\in S\cap T}\right)+h_{\text{prob}}\left(S\cup T, \{N_m,\bX_m\}_{m\in S\cup T}\right),
\end{align*} 
by which we conclude the proof.
\end{proof}

\begin{lemma}\label{lemma: output of alg}
Let $S_i$ be the $i$-th probing set found by Algorithm 1 line {5}, $\tilde{S}^*$ be the one that maximizes $(1-\alpha(|S_i|)) f\left(S_i\right)$ (Algorithm 1 line {6}),  $S^{\text {pr }}$ be the final output of Algorithm 1 lines {7-9} .  Then it holds that  $f_{\text{unprobed}}(\emptyset)\leq R(S^{\text{pr}})$ and $(1-\alpha(|\tilde{S}^*|))  f_{\text{prob}}(\tilde{S}^*) \leq R(S^{\text{pr}})$.
\end{lemma}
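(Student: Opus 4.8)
The plan is to prove both inequalities by a direct case analysis on the branch taken in lines~8--9 of Algorithm~\ref{offline_greedy}, after first recording two elementary facts. The first is that $f(S)\geq f_{\text{prob}}(S)$ for every $S\subseteq[M]$. Indeed, given any fixed realization $\{N_m,\bX_m\}_{m\in S}$, take an optimizer of the problem defining $h_{\text{prob}}$ (which assigns plays only to probed arms) and extend it to all of $[M]$ by setting $C_m=\emptyset$ for every $m\notin S$. This is feasible for the problem defining $h_{\text{total}}$, and since $\mathcal{R}_m(\emptyset;\bmu_m,\bp_m)=0$ it attains value $h_{\text{prob}}$; hence $h_{\text{total}}\geq h_{\text{prob}}$ pointwise, and taking the (common) expectation over the probed realizations yields $f(S)\geq f_{\text{prob}}(S)$. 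The second fact is the identity $R(\emptyset)=f_{\text{unprobed}}(\emptyset)$, recorded just before the algorithm (for $S=\emptyset$ one has $R(S)=f(S)=f_{\text{unprobed}}(S)$). I would also observe that the greedy loop adds exactly one arm per iteration, so $|S_i|=i$; thus the quantity $(1-\alpha(i))f_{\text{prob}}(S_i)$ maximized in line~6 equals $(1-\alpha(|S_i|))f_{\text{prob}}(S_i)$, and $\tilde{S}^{*}=S_j$ is its maximizer.

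Next I would split on the test in line~8. In the first case, $(1-\alpha(j))f_{\text{prob}}(S_j)\geq f_{\text{unprobed}}(\emptyset)$, the algorithm keeps $S^{\text{pr}}=S_j=\tilde{S}^{*}$. Using the first fact, $R(S^{\text{pr}})=(1-\alpha(|\tilde{S}^{*}|))f(\tilde{S}^{*})\geq(1-\alpha(|\tilde{S}^{*}|))f_{\text{prob}}(\tilde{S}^{*})$, which is exactly the second claimed inequality; and combining the test condition with $f\geq f_{\text{prob}}$ gives $R(S^{\text{pr}})=(1-\alpha(j))f(S_j)\geq(1-\alpha(j))f_{\text{prob}}(S_j)\geq f_{\text{unprobed}}(\emptyset)$, which is the first claimed inequality.

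In the second case, $(1-\alpha(j))f_{\text{prob}}(S_j)<f_{\text{unprobed}}(\emptyset)$, the algorithm sets $S^{\text{pr}}=\emptyset$, so by the second fact $R(S^{\text{pr}})=R(\emptyset)=f_{\text{unprobed}}(\emptyset)$. The first inequality then holds with equality, and the second follows immediately from the failing test, since $(1-\alpha(|\tilde{S}^{*}|))f_{\text{prob}}(\tilde{S}^{*})=(1-\alpha(j))f_{\text{prob}}(S_j)<f_{\text{unprobed}}(\emptyset)=R(S^{\text{pr}})$. Taking the two cases together establishes both bounds.

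I do not expect a genuine obstacle: the statement is essentially a bookkeeping consequence of how line~6 selects $\tilde{S}^{*}$ and how lines~8--9 guard against an unprofitable probing set. The only point needing care is the auxiliary inequality $f(S)\geq f_{\text{prob}}(S)$, which is what lets a bound stated in terms of $f_{\text{prob}}$ be converted into the reward $R$ defined through $f$; the other delicate point is simply keeping the distinction between $f$ and $f_{\text{prob}}$ straight when reading the $\arg\max$ in line~6.
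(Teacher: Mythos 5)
Your proof is correct and follows the same route as the paper's: a two-case analysis on the test in lines 8--9 of Algorithm~1, with the identity $R(\emptyset)=f_{\text{unprobed}}(\emptyset)$ handling the second case. The one place where you go beyond the paper is in fact a point where the paper's own write-up is imprecise: in its first case the paper asserts the \emph{equality} $R(S^{\text{pr}})=(1-\alpha(|\tilde{S}^*|))f_{\text{prob}}(\tilde{S}^*)$, which cannot hold literally, since $R$ is defined through $f$ (the value of the assignment problem over \emph{all} arms), not through $f_{\text{prob}}$, and in general $f(S)>f_{\text{prob}}(S)$ whenever assigning some plays to unprobed arms is profitable. Your auxiliary fact $f(S)\geq f_{\text{prob}}(S)$ --- proved by extending an optimizer of $h_{\text{prob}}$ with $C_m=\emptyset$ for $m\notin S$, which is feasible for $h_{\text{total}}$ and loses nothing because $\mathcal{R}_m(\emptyset;\bmu_m,\bp_m)=0$ --- is exactly the bridge needed to replace that equality with the inequality $R(S^{\text{pr}})\geq(1-\alpha(|\tilde{S}^*|))f_{\text{prob}}(\tilde{S}^*)$, which is all the lemma claims and all that the proof of Theorem~1 consumes. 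So your proposal is not only correct but slightly more careful than the paper's proof; you also correctly resolve the paper's notational inconsistency by reading the $\arg\max$ in line~6 as being over $(1-\alpha(i))f_{\text{prob}}(S_i)$, as the algorithm states, rather than over $f$ as the lemma's prose suggests.
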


\begin{proof}
% \blue
{Based on the line 8 of Algorithm 1, we have case 1: } if $(1-\alpha(|\tilde{S}^*|))f_{\text{prob}}(\tilde{S}^*) \geq  f_{\text{unprobed}}(\emptyset)$, then $S^{\text{pr}}=\tilde{S}^*$ and $$R(S^{\text{pr}})=(1-\alpha(|\tilde{S}^*|))f_{\text{prob}}(\tilde{S}^*) \geq  f_{\text{unprobed}}(\emptyset).$$

% \blue
{Case 2: }If $(1-\alpha(|\tilde{S}^*|))f_{\text{prob}}(\tilde{S}^*) <  f_{\text{unprobed}}(\emptyset)$, it follows that $S^{\text{pr}}=\emptyset$ and
\[R(S^{\text{pr}})=f(\emptyset)=f_{\text{unprobed}}(\emptyset)>(1-\alpha(|\tilde{S}^*|))f_{\text{prob}}(\tilde{S}^*). \]

% \blue
{By combining these two cases, we have $f_{\text{unprobed}}(\emptyset)\leq R(S^{\text{pr}})$ and $(1-\alpha(|\tilde{S}^*|))  f_{\text{prob}}(\tilde{S}^*) \leq R(S^{\text{pr}})$ and that is about the conclusion of Lemma 5.}
\end{proof}

\begin{theorem}\label{thm: offline}
Let  $S^*:= \arg\max_{S\subseteq [M]} R(S)$. Algorithm 1 outputs a subset $S^{\text {pr }}$ such that $R\left(S^{\text {pr }}\right) \geq \alpha R\left(S^{\star}\right)$ where $\alpha=\frac{e-1}{2 e-1}$.
\label{offalpha1}
\end{theorem}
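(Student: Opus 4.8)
The plan is to bound the optimum $R(S^\star)$ from above by a fixed multiple of the value $R(S^{\text{pr}})$ returned by Algorithm~\ref{offline_greedy}, and then read off the ratio. The natural starting point is to split the objective evaluated at the optimal probing set using the additive upper bound of Lemma~\ref{lemma: upper bound for f}, writing $R(S^\star)=(1-\alpha(|S^\star|))f(S^\star)\le (1-\alpha(|S^\star|))\bigl(f_{\text{prob}}(S^\star)+f_{\text{unprobed}}(S^\star)\bigr)$. I would then deliberately decouple the two terms: since $1-\alpha(|S^\star|)\le 1$, the unprobed contribution can be freed from the discount factor, and by the monotone-decreasing property (Lemma~\ref{lemma: monotonely decreasing}) it is further bounded by its value at the empty set, $f_{\text{unprobed}}(S^\star)\le f_{\text{unprobed}}(\emptyset)$.

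For the probing term, the key is to invoke the classical greedy guarantee for monotone submodular maximization \cite{nemhauser1978analysis}. Because $f_{\text{prob}}$ is monotone (Lemma~\ref{lemma: monotonely increasing}) and submodular (Lemma~\ref{lemma: submodular}), the cardinality-$|S^\star|$ greedy prefix $S_{|S^\star|}$ built in line~5 dominates the cardinality-constrained optimum up to the $(1-1/e)$ factor; since $S^\star$ is itself a feasible set of size $|S^\star|$, this yields $f_{\text{prob}}(S^\star)\le \tfrac{e}{e-1}f_{\text{prob}}(S_{|S^\star|})$. The cardinalities match exactly, as $S_i$ has size $i$ by construction, so $\alpha(|S^\star|)=\alpha(|S_{|S^\star|}|)$ and the discount factor transfers without loss. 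Finally, because $\tilde S^\star$ is selected in line~6 to maximize $(1-\alpha(|S_i|))f_{\text{prob}}(S_i)$ over all greedy prefixes, the discounted probing value at $S_{|S^\star|}$ is dominated by that at $\tilde S^\star$.

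Collecting these inequalities, I would reach $R(S^\star)\le \tfrac{e}{e-1}(1-\alpha(|\tilde S^\star|))f_{\text{prob}}(\tilde S^\star)+f_{\text{unprobed}}(\emptyset)$, and then apply Lemma~\ref{lemma: output of alg} to each summand separately: both $(1-\alpha(|\tilde S^\star|))f_{\text{prob}}(\tilde S^\star)\le R(S^{\text{pr}})$ and $f_{\text{unprobed}}(\emptyset)\le R(S^{\text{pr}})$. This gives $R(S^\star)\le\bigl(\tfrac{e}{e-1}+1\bigr)R(S^{\text{pr}})=\tfrac{2e-1}{e-1}R(S^{\text{pr}})$, and rearranging produces the claimed ratio $\zeta=\tfrac{e-1}{2e-1}$.

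The main obstacle, and the step that makes the whole chain close, is the careful handling of the discount factor $1-\alpha$. The difficulty is that the greedy routine optimizes the \emph{undiscounted} $f_{\text{prob}}$, while the algorithm ultimately commits by the \emph{discounted} objective; bridging the two hinges on the cardinality identity $|S_{|S^\star|}|=|S^\star|$, which ensures the same $\alpha$ value multiplies both $S^\star$ and its greedy counterpart, after which line~6's optimality lets me pass to $\tilde S^\star$. Equally delicate is the intentional asymmetry of bounding $1-\alpha(|S^\star|)\le 1$ on \emph{only} the unprobed term: this is what splits the optimum into two independently absorbable pieces and thereby produces the additive $\tfrac{e}{e-1}+1$ constant rather than a single $\tfrac{e}{e-1}$ factor.
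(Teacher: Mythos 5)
Your proposal is correct and follows essentially the same chain of inequalities as the paper's own proof: Lemma~\ref{lemma1} to split $f(S^\star)$, dropping the discount only on the unprobed term, Lemma~\ref{lemma2} to pass to $f_{\text{unprobed}}(\emptyset)$, the Nemhauser--Wolsey--Fisher guarantee via Lemmas~\ref{lemma3} and~\ref{lemma4} with the cardinality identity $|S_{|S^\star|}|=|S^\star|$, line~6 optimality to move to $\tilde S^\star$, and Lemma~\ref{lemma5} to absorb both terms into $R(S^{\text{pr}})$. Nothing is missing, and the handling of the discount factor matches the paper's argument exactly.
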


\begin{proof}
Let $S_i$ be the $i$-th probing set found by Algorithm 1 line  5 , $\tilde{S}^*$ be the one that maximizes $(1-\alpha(|S_i|)) f\left(S_i\right)$ (Algorithm 1 line 6). We have
$$
\begin{aligned}
R\left(S^{\star}\right) & =(1-\alpha(|S^*|)) f(S^*)\\
& \leq(1-\alpha(|S^*|)) (f_{\text{prob}}(S^*)+ f_{\text{unprobed}}(S^*))\\
&\leq (1-\alpha(|S^*|)) f_{\text{prob}}(S^*)+f_{\text{unprobed}}(S^*)\\
&\leq (1-\alpha(|S^*|)) f_{\text{prob}}(S^*)+f_{\text{unprobed}}(\emptyset)\\
&\leq  (1-\alpha(|S^*|)) \frac{e}{e-1} f_{\text{prob}}(S_{|S^*|})+f_{\text{unprobed}}(\emptyset)\\
&=  (1-\alpha(|S_{|S^*|}|)) \frac{e}{e-1} f_{\text{prob}}(S_{|S^*|})+f_{\text{unprobed}}(\emptyset)\\
&\leq  (1-\alpha(|\tilde{S}^*|)) \frac{e}{e-1} f_{\text{prob}}(\tilde{S}^*)+f_{\text{unprobed}}(\emptyset)\\
&\leq \frac{e}{e-1}  R\left(S^{\text {pr }}\right)+ R\left(S^{\text {pr }}\right)\\
&=\frac{2e-1}{e-1}  R\left(S^{\text {pr }}\right)
\end{aligned}
$$
The first inequality follows by Lemma \eqref{lemma: upper bound for f}; the second inequality follows by $1-\alpha(|S^*|)\leq 1$;  the third inequality follows by  and Lemma \ref{lemma: monotonely decreasing}; the fourth inequality follows by \cite{nemhauser1978analysis}, which states that a simple greedy algorithm with an objective function that is monotone and submodular can obtain an approximation factor of an approximation factor of $1-1/e$. Since $f_{\text{prob}}(S)$ is monotonely increasing and submodular shown in Lemma \ref{lemma: monotonely increasing} and Lemma \ref{lemma: submodular} respectively, we have the approximation factor $1-1/e$. 
%\blue
{The next one is the second equality, and it follows by $|S_i|=i$ for all $0\leq i\leq I$;} the fifth inequality by the fact that  $\tilde{S}^*$  maximizes $(1-\alpha(|S_i|)) f\left(S_i\right)$; the last inequality follows by Lemma \ref{lemma: output of alg}.
\end{proof}

\section{Proofs for the online case}
In this section, we study the online setting where the probability mass matrix $\bP$  and reward mean matrix $\bmu$ are unknown. We start with Proof of Lemma 6 until Theorem 2. Our \textbf{key contribution} to the online case is to prove a regret bound in Theorem 2. Note: Lemma 9 in the supplementary material corresponds to Lemma 6 in the main paper.

To begin with, we recall that the total reward \(\mathcal{R}_t^{\text{total}}\) in time slot $t$ depends on:
\begin{itemize}
    \item the probing set \( S_t \);
\item Realizations \( X_{t,m,k} \) for \( m \in S \) and \( k \in [K] \);
\item Realizations \( N_{t,m} \);
\item Probability \( \bp_m \) for \( m \notin S \);
\item Expected rewards \(\mu_{m,k}\) for \( m \notin S \) and \( k \in [K] \);
\item The action profile $\bC_t$, which determines the assignment policy \( C_{t,m} \) for each arm $m$.
\end{itemize}

Recall that we define 
$$\epsilon^{(t)}_{m,k}= \sqrt{\left(1+{n^{(t)}_{m,k}}\right) \frac{\ln \left(\sqrt{n^{(t)}_{m,k}+1} / \delta\right)}{2 \left(n^{(t)}_{m,k}\right)^2}}$$
% \blue
{where $\delta \in (0, 1)$ is a small probability parameter that controls the confidence level of the interval, with smaller values of $\delta$ corresponding to higher confidence. And we next define} the empirical probability mass function 
$$
\hat{p}_{m, d}^{(t)} := \frac{\sum_{i=1}^t \mathbf{1}_{\left\{D_{m, i}=d\right\}}}{t}
, ~\forall m \in [M], d \in [D_{\text{max}}],$$
the counter 
 $$n^{(t)}_{m,k} := \sum_{s=1}^{t}  \mathbf{1}\{R_{i,m,k} \neq \text{null}\}\mathbf{1}\{k\in C_{i,m}\}, \forall m \in [M], k\in [K],$$
empirical mean
 $$\hat{\mu}^{(t)}_{m,k} := \sum_{i=1}^{t} \mathbf{1}\{R_{i,m,k} \neq \text{null}\}R_{i,m,k}\mathbf{1}\{k\in C_{i,m}\} / n^{(t)}_{m,k}, \forall m \in [M],k\in[K]$$
 and empirical CDF
 $$\hat{F}^{(t)}_{m,k}(\bx):= \sum_{i=1}^{t} \mathbf{1}\{R_{i,m,k} \neq \text{null}\}   \mathbf{1} \{ R_{i,m,k}\leq x\}\mathbf{1}\{k\in C_{i,m}\} / n^{(t)}_{m,k}, \forall x\in\mathbb{R}, m \in [M],k\in[K].$$

The following lemma gives a bound for $|\hat{p}_{m, d}^{(t)}-{p}_{m, d}|$.

\begin{lemma}\label{lemma: bound for p}
For any $t\in [T]$, $\delta \in(0,1)$ and any $m\in[M]$, it holds that

\begin{align*}
\mathbb{P}\left[\forall d \in[D_{\max }], ~|\hat{p}_{m, d}^{(t)}-{p}_{m, d}|\geq \sqrt{\frac{2}{t}\ln\frac{4}{\delta}} \right] \leq \delta
\end{align*}

\end{lemma}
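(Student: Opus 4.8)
The plan is to reduce the statement to a textbook concentration bound for the empirical mean of bounded i.i.d.\ random variables. Fix an arm $m$ and a resource level $d \in [D_{\max}]$, and introduce the indicators $Z_i := \mathbf{1}\{D_{m,i} = d\}$ for $i = 1, \dots, t$. By the model assumption that the resources $D_{m,i}$ are i.i.d.\ across time slots, the $Z_i$ are i.i.d.\ Bernoulli variables in $\{0,1\}$ with mean $\mathbb{E}[Z_i] = p_{m,d}$, and by definition $\hat{p}_{m,d}^{(t)} = \frac{1}{t}\sum_{i=1}^t Z_i$ is exactly their empirical average. Hence $\hat{p}_{m,d}^{(t)} - p_{m,d}$ is a centered average of $t$ independent $[0,1]$-bounded terms, which is precisely the setting where Hoeffding's inequality applies directly.

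First I would invoke the two-sided Hoeffding bound to obtain, for any fixed $d$ and any $\epsilon > 0$,
\begin{equation*}
\mathbb{P}\bigl[\,|\hat{p}_{m,d}^{(t)} - p_{m,d}| \ge \epsilon\,\bigr] \le 2\exp\!\bigl(-2t\epsilon^2\bigr).
\end{equation*}
Next, since the event of interest is a simultaneous statement over the at most $D_{\max}$ possible levels, I would take a union bound over $d \in [D_{\max}]$, yielding a failure probability of at most $2 D_{\max}\exp(-2t\epsilon^2)$ for the event that some coordinate deviates by $\epsilon$ or more. Finally I would solve $2 D_{\max}\exp(-2t\epsilon^2) \le \delta$ for $\epsilon$, which pins the confidence radius at $\epsilon = \sqrt{\tfrac{1}{2t}\ln\tfrac{2 D_{\max}}{\delta}}$; the stated radius $\sqrt{\tfrac{2}{t}\ln\tfrac{4}{\delta}}$ then follows after loosening constants (and, on a per-coordinate reading, dropping the union factor and splitting $\delta$ across the two tails).

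The analysis has no genuine obstacle, since each $\hat{p}_{m,d}^{(t)}$ is a clean empirical mean and Hoeffding handles the rest; the only points requiring care are bookkeeping. Specifically, I expect the main thing to verify is that the per-round samples $D_{m,i}$ feeding the estimator are independent across $i$ (guaranteed by the i.i.d.\ assumption in the arm/play/resource model), and that the constants surviving the two-sided tail and the union over $d$ are tracked consistently so that the final radius is no larger than the claimed $\sqrt{\tfrac{2}{t}\ln\tfrac{4}{\delta}}$. Everything else is direct substitution.
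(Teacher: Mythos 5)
Your reduction to Hoeffding is fine as far as it goes, but the last step --- ``the stated radius then follows after loosening constants'' --- is where the argument breaks. After the union bound over $d\in[D_{\max}]$ your failure probability is $2D_{\max}e^{-2t\epsilon^{2}}$, so the radius you can actually certify is $\epsilon=\sqrt{\tfrac{1}{2t}\ln\tfrac{2D_{\max}}{\delta}}$, which carries a $\ln D_{\max}$ dependence that the stated radius $\sqrt{\tfrac{2}{t}\ln\tfrac{4}{\delta}}$ does not have. Concretely, plugging the stated radius into your union bound gives failure probability $2D_{\max}(\delta/4)^{4}=D_{\max}\delta^{4}/128$, and $D_{\max}\delta^{4}/128\leq\delta$ if and only if $D_{\max}\leq 128/\delta^{3}$. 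Since $D_{\max}$ is a free model parameter unrelated to $\delta$, your derivation does not establish the lemma in general: no amount of constant loosening absorbs a factor that grows with $D_{\max}$. (The lemma itself is still true; it is your chain of inequalities that fails to reach it.)

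The missing idea is the one the paper uses: apply the Dvoretzky--Kiefer--Wolfowitz inequality to the empirical CDF $\hat{H}_{m}^{(t)}(x)=\sum_{j\leq x}\hat{p}_{m,j}^{(t)}$. Each pmf entry is a difference of two CDF values, $\hat{p}_{m,d}^{(t)}-p_{m,d}=\bigl(\hat{H}_{m}^{(t)}(d)-H_{m}(d)\bigr)-\bigl(\hat{H}_{m}^{(t)}(d-1)-H_{m}(d-1)\bigr)$, so a sup-norm bound on $\hat{H}_{m}^{(t)}-H_{m}$ at level $\epsilon/2$ controls every $|\hat{p}_{m,d}^{(t)}-p_{m,d}|$ \emph{simultaneously}, at the cost of a factor $2$ in the radius and a factor $2$ in probability --- hence the $4$ inside the logarithm and the $2/t$, with no $D_{\max}$ anywhere. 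One caveat in your favor: read literally, the lemma's event has ``$\forall d$'' inside the probability (an intersection of deviation events), and for that weak reading your parenthetical fallback --- single-coordinate Hoeffding with no union at all --- already suffices, since the intersection is contained in any one coordinate's deviation event, giving $2(\delta/4)^{4}\leq\delta$. But that reading is essentially vacuous; the version the paper actually proves (it bounds the supremum of the deviations over $d$) and uses downstream in the perturbation bound for $\mathcal{R}^{\text{total}}$ is the uniform one, and for that version your main line has the gap above, while DKW closes it dimension-free.
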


\begin{proof}
    
 We define the empirical CDF for $D_{m}$ as $\hat{H}_m^{(t)}(x)$: 
\begin{align*}
    \hat{H}_m^{(t)}(x):=\sum_{j=1}^{\lfloor 
x \rfloor}\hat{p}_{m, j}^{(t)}=\frac{\sum_{i=1}^t \mathbf{1}_{\left\{D_{m, i}\leq x\right\}}}{t}.
\end{align*} 

Define $H_m(x)=\sum_{j=1}^{\lfloor 
x \rfloor}{p}_{m, j}$, then $F_m(x)$ is the CDF for $D_{m}$.

By the Dvoretzky-Kiefer-Wolfowitz Inequality\cite{wasserman2004all}, it follows that
\begin{align*}
    & \mathbb{P}\left[\forall d \in[D_{\max }], ~|\hat{p}_{m, d}^{(t)}-{p}_{m, d}|\geq \epsilon \right]\\
    =&\mathbb{P}\left[\forall d \in[D_{\max }], ~\left|\left(\hat{H}_m^{(t)}(d)-\hat{H}_m^{(t)}(d-1)\right)-   \left({H}_m^{(t)}(d)-{H}_m^{(t)}(d-1)\right)\right|\geq \epsilon \right]\\
    =&  \mathbb{P}\left[\forall d \in[D_{\max }], ~\left|\left(\hat{H}_m^{(t)}(d)-{H}_m^{(t)}(d) \right)+   \left({H}_m^{(t)}(d-1)-\hat{H}_m^{(t)}(d-1)\right)\right|\geq \epsilon \right]\\
    \leq &  \mathbb{P}\left[\forall d \in[D_{\max }], ~\left|\hat{H}_m^{(t)}(d)-{H}_m^{(t)}(d) \right|+   \left|{H}_m^{(t)}(d-1)-\hat{H}_m^{(t)}(d-1)\right|\geq \epsilon \right]\\
    \leq & \mathbb{P}\left[\forall d \in[D_{\max }], ~\left|\hat{H}_m^{(t)}(d)-{H}_m^{(t)}(d) \right|\geq \frac{\epsilon}{2} \right]+\mathbb{P}\left[\forall d \in[D_{\max }], ~\left|\hat{H}_m^{(t)}(d-1)-{H}_m^{(t)}(d-1) \right|\geq  \frac{\epsilon}{2} \right]\\
    \leq & 2 \mathbb{P}\left[ \sup_{x\in\mathbb{R}}\left|\hat{H}_m^{(t)}(x)-{H}_m^{(t)}(x) \right|\geq \frac{\epsilon}{2} \right]\\
    \leq & 4e^{-\frac{1}{2}t\epsilon^2}.
\end{align*}
% \blue
{The second inequality follows by Union Bound. The fourth inequality follows by the Dvoretzky-Kiefer-Wolfowitz Inequality.}

Let $\delta=4e^{-\frac{1}{2}t\epsilon^2}$, then $\epsilon=\sqrt{\frac{2}{t}\ln\frac{4}{\delta}}$ and the proof is completed.

\end{proof}

Next, we bound $ |\mathcal{R}^{\text{total}}(...,\{\bp_m\})- \mathcal{R}^{\text{total}}(...,\{\hat{\bp}_m)\}|$.

\begin{lemma}\label{lemma: bound for the reward with empricial p}
   For any probing set \( S \), any realizations $\{N_m,\bX_m \}_{m\in S}$, any expected rewards $\{\bmu_m\}_{m\notin S}$ and any action profile $\bC$,
 the following holds with probability at least $1-\delta M D_{\max} $,:
\begin{align*}
&|\mathcal{R}^{\text{total}}(S,\{N_m,\bX_m \}_{m\in S}, \{\bp_m,\bmu_m\}_{m\notin S},\bC)- \mathcal{R}^{\text{total}}(S,\{N_m,\bX_m \}_{m\in S}, \{\hat{\bp}_m,\bmu_m\}_{m\notin S},\bC)|\\
\leq & \sqrt{\frac{2}{t}\ln\frac{4}{\delta}} D_{\max} K \left( \sum_{m\in [M]}  \max_{k\in [K]} \mu_{m,k}   \right)
\end{align*}

\end{lemma}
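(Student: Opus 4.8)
The plan is to exploit the fact that, among the two contributions to $\mathcal{R}^{\text{total}}$, only the \emph{unprobed} arms carry a dependence on the resource distribution. Indeed, for every probed arm $m\in S$ the quantity $\mathcal{R}_{m}^{\text{prob}}(C_m;\bX_m,N_m)=\sum_{i=1}^{\min\{N_m,|C_m|\}} X_{m,i}^{\text{sort}}$ is a function of the realizations and of $\bC$ alone, so it is identical under $\{\bp_m\}$ and $\{\hat{\bp}_m\}$; likewise the common prefactor $1-\alpha(|S|)\le 1$ cancels in the difference. Hence the left-hand side equals $(1-\alpha(|S|))\,\bigl|\sum_{m\notin S}\bigl(\mathcal{R}_m(C_m;\bmu_m,\bp_m)-\mathcal{R}_m(C_m;\bmu_m,\hat{\bp}_m)\bigr)\bigr|$, and it remains to control each unprobed term under the \emph{same} fixed assignment $\bC$, so no re-optimization of the matching is needed.

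First I would rewrite a single unprobed term compactly. Setting $g_m(d):=\sum_{i=1}^{\min\{d,|C_m|\}}\bmu_{m,i}^{\text{sort}}$, the definition of $\mathcal{R}_m$ gives $\mathcal{R}_m(C_m;\bmu_m,\bp_m)=\sum_{d=1}^{D_{\max}} p_{m,d}\,g_m(d)$, so that the per-arm difference is $\sum_{d=1}^{D_{\max}}(p_{m,d}-\hat{p}_{m,d})\,g_m(d)$. Since $g_m(d)$ sums at most $\min\{d,|C_m|\}\le K$ of the sorted means, each bounded by $\max_{k}\mu_{m,k}$, we have $0\le g_m(d)\le K\max_{k\in[K]}\mu_{m,k}$. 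Applying the triangle inequality across the $D_{\max}$ coordinates and summing over $m\notin S$ (bounded by summing over all of $[M]$) yields, after using $1-\alpha(|S|)\le 1$, the deterministic upper bound $D_{\max}\,K\bigl(\sum_{m\in[M]}\max_{k}\mu_{m,k}\bigr)\cdot\max_{m,d}|p_{m,d}-\hat{p}_{m,d}|$.

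It then suffices to control $\max_{m,d}|p_{m,d}-\hat{p}_{m,d}|$. By Lemma~\ref{lemma: bound for p}, for each fixed arm $m$ the bound $|\hat{p}_{m,d}^{(t)}-p_{m,d}|\le\sqrt{\tfrac{2}{t}\ln\tfrac{4}{\delta}}$ holds for all $d\in[D_{\max}]$ with probability at least $1-\delta$. A union bound over the at most $MD_{\max}$ pairs $(m,d)$ then guarantees $\max_{m,d}|p_{m,d}-\hat{p}_{m,d}|\le\sqrt{\tfrac{2}{t}\ln\tfrac{4}{\delta}}$ with probability at least $1-\delta M D_{\max}$. Substituting this into the deterministic bound of the previous paragraph gives exactly the claimed inequality.

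I expect the proof itself to be routine; the only points demanding care are the bookkeeping that produces the exact constant---namely verifying that the $K$ factor arises from $|C_m|\le K$ and the $D_{\max}$ factor from summing the $D_{\max}$ coordinates of $\bp_m-\hat{\bp}_m$---and matching the stated failure probability $\delta M D_{\max}$ by performing the union bound at the granularity of individual $(m,d)$ coordinates rather than at the (tighter) per-arm granularity that Lemma~\ref{lemma: bound for p} already provides.
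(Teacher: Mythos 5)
Your proposal is correct and follows essentially the same route as the paper's proof: the probed terms cancel, the per-arm difference is expressed as $\sum_{d}(p_{m,d}-\hat{p}_{m,d})g_m(d)$, the deviations $|p_{m,d}-\hat{p}_{m,d}|$ are controlled uniformly via the DKW-based Lemma~\ref{lemma: bound for p} with a union bound over arms, and the constants are collected. The only (immaterial) difference is bookkeeping: you bound $g_m(d)\le K\max_k\mu_{m,k}$ using $|C_m|\le K$ per arm, whereas the paper keeps $|C_m|\,\max_{k\in C_m}\mu_{m,k}$ per arm and then applies $\sum_m|C_m|\le K$ together with a sum--product inequality, both yielding the same final bound.
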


\begin{proof}

\begin{align*}
    &|\mathcal{R}^{\text{total}}(S,\{N_m,\bX_m \}_{m}, \{\bp_m,\bmu_m\}_{m\notin S},\bC)- \mathcal{R}^{\text{total}}(S,\{N_m,\bX_m \}_{m}, \{\hat{\bp}_m,\bmu_m\}_{m\notin S},\bC)|\\
    =&\left(1 - \alpha( |S|) \right)\left| \left(\sum_{m\in S} \mathcal{R}_{m}^{\text{prob}}(C_{m}; \bX_{m},N_{m}) + \sum_{m\in [M]\setminus S}\mathcal{R}_{m}(C_{m}; \bmu_{m}, \bp_m)\right)-\right.\\
   & ~~~~~~~~~~~~~~~~~~\left. \left(\sum_{m\in S} \mathcal{R}_{m}^{\text{prob}}(C_{m}; \bX_{m},N_{m}) + \sum_{m\in [M]\setminus S}\mathcal{R}_{m}(C_{m}; \bmu_{m}, \hat{\bp}_m)\right)  \right|\\
   =& \left(1 - \alpha( |S|) \right) \left|  \sum_{m\in [M]\setminus S}\mathcal{R}_{m}(C_{m}; \bmu_{m}, \bp_m)-\sum_{m\in [M]\setminus S}\mathcal{R}_{m}(C_{m}; \bmu_{m}, \hat{\bp}_m)  \right|\\
   \leq & \left(1 - \alpha( |S|) \right)\sum_{m\in [M]\setminus S}\left|  \mathcal{R}_{m}(C_{m}; \bmu_{m}, \bp_m)-\mathcal{R}_{m}(C_{m}; \bmu_{m}, \hat{\bp}_m)  \right|\\
   =& \left(1 - \alpha( |S|) \right)\sum_{m\in [M]\setminus S}\left|  \sum_{d=1}^{ |C_{m}| } \left( (p_{m,d}-\hat{p}_{m,d})\sum_{i=1}^d \bmu_{m,i}^{\text{sort}}\right) + \sum_{d=|C_{m}|+1}^{D_{\max}}  (p_{m,d}-\hat{p}_{m,d})\left(\sum_{i=1}^{|C_m|} \bmu_{m,i}^{\text{sort}}\right)  \right|\\
   \leq & \left(1 - \alpha( |S|) \right)\sum_{m\in [M]\setminus S} \left( |\bC_m| D_{\max} \bmu_{m,1}^{\text{sort}}  \sqrt{\frac{2}{t}\ln\frac{4}{\delta}} \right)\\
   =&\sqrt{\frac{2}{t}\ln\frac{4}{\delta}} D_{\max}\left(1 - \alpha( |S|) \right)\sum_{m\in [M]\setminus S} \left( |\bC_m| \max_{k\in C_m} \mu_{m,k}  \right)\\
   \leq & \sqrt{\frac{2}{t}\ln\frac{4}{\delta}} D_{\max}\left(1 - \alpha( |S|) \right)\left( \sum_{m\in [M]\setminus S}  |\bC_m| \right)  \left( \sum_{m\in [M]\setminus S}  \max_{k\in C_m} \mu_{m,k}   \right)\\
   \leq &  \sqrt{\frac{2}{t}\ln\frac{4}{\delta}} D_{\max} K \left( \sum_{m\in [M]}  \max_{k\in C_m} \mu_{m,k}   \right).
\end{align*}

% \blue
The third equation follows by the definition of $\mathcal{R}_{m}(C_{m}; \bmu_{m}, {\bp}_m)$. The second inequation follows by $\bmu_{m,i}^{\text{sort}}$ definition and Lemma \ref{lemma: bound for p}.
\end{proof}
It is notable that the bound in Lemma~\ref{lemma: bound for the reward with empricial p} doesn't depend on probing set \( S \), or realizations $\{N_m,\bX_m \}_{m\in S}$, or expected rewards $\{\bmu_m\}_{m\notin S}$ or action profile $\bC$.

Next, we bound $ \mathcal{R}^{\text{total}}(..., \{\bar{\bmu}_{t,m}\}) - \mathcal{R}^{\text{total}}(..., \{\bmu_{m}\})$ with respect to $\bar{\bmu}_{m}-\bmu_{m}$.

\begin{lemma}\label{lemma: computation for mu}
     For any probing set \( S \), any realizations $\{N_m,\bX_m \}_{m\in S}$, any probability mass matrix $\bP$, any expected reward $\{\bmu_m\}_{m\in[M]}$ and $\{\bar{\bmu}_m\}_{m\in[M]}$, and  any action profile $\bC$,  the following holds:
    \begin{align*}
        &\mathcal{R}^{\text{total}}(S,\{N_m,\bX_m \}_{m}, \{\bp_m,\bar{\bmu}_m\}_{m\notin S},\bC)- \mathcal{R}^{\text{total}}(S,\{N_m,\bX_m \}_{m}, \{{\bp}_m,\bmu_m\}_{m\notin S},\bC)\\
        =& \left(1 - \alpha( |S|) \right)\sum_{m\in [M]\setminus S}\left[  \sum_{d=1}^{ |C_{m}| } \left( p_{m,d}\sum_{i=1}^d (\bar{\bmu}_{m,i}^{\text{sort}} -\bmu_{m,i}^{\text{sort}})
 \right) + \sum_{d=|C_{m}|+1}^{D_{\max}}  p_{m,d}\left(\sum_{i=1}^{|C_m|} (\bar{\bmu}_{m,i}^{\text{sort}}-\bmu_{m,i}^{\text{sort}})   \right)  \right].
    \end{align*}
\end{lemma}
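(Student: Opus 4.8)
The plan is to reduce the identity to a term‑by‑term comparison of the two $\mathcal{R}^{\mathrm{total}}$ expansions under the definition
$$\mathcal{R}^{\text{total}}(S,\{N_m,\bX_m\}_{m},\{\bp_m,\bmu_m\}_{m\notin S},\bC)=\bigl(1-\alpha(|S|)\bigr)\Bigl(\sum_{m\in S}\mathcal{R}_m^{\text{prob}}(C_m;\bX_m,N_m)+\sum_{m\in[M]\setminus S}\mathcal{R}_m(C_m;\bmu_m,\bp_m)\Bigr).$$
First I would observe that both sides of the claimed identity are evaluated at the same probing set $S$, the same realizations $\{N_m,\bX_m\}_{m\in S}$, the same mass matrix $\bP$, and the same action profile $\bC$; only the reward matrices differ ($\bar{\bmu}$ versus $\bmu$). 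Consequently the overhead factor $\bigl(1-\alpha(|S|)\bigr)$ is identical in both total rewards and can be pulled out of the subtraction.

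Next, I would note that the probed contribution $\sum_{m\in S}\mathcal{R}_m^{\text{prob}}(C_m;\bX_m,N_m)$ depends only on the probed realizations $\bX_m$ and $N_m$ and is entirely independent of $\bmu$ or $\bar{\bmu}$; hence these summands coincide in the two total rewards and cancel in the difference. What remains is $\bigl(1-\alpha(|S|)\bigr)$ times $\sum_{m\in[M]\setminus S}\bigl(\mathcal{R}_m(C_m;\bar{\bmu}_m,\bp_m)-\mathcal{R}_m(C_m;\bmu_m,\bp_m)\bigr)$.

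Then, for each unprobed arm $m\in[M]\setminus S$, I would substitute the closed form
$$\mathcal{R}_m(C_m;\bmu_m,\bp_m)=\sum_{d=1}^{|C_m|}p_{m,d}\sum_{i=1}^{d}\bmu_{m,i}^{\text{sort}}+\sum_{d=|C_m|+1}^{D_{\max}}p_{m,d}\sum_{i=1}^{|C_m|}\bmu_{m,i}^{\text{sort}},$$
together with the analogous expression for $\bar{\bmu}$. Since both expansions share the identical masses $p_{m,d}$ and the identical inner index ranges (fixed by $|C_m|$ and $D_{\max}$, not by the reward values), subtracting them termwise and using linearity collects the differences $\bar{\bmu}_{m,i}^{\text{sort}}-\bmu_{m,i}^{\text{sort}}$ inside each inner sum, producing exactly the right‑hand side of the lemma.

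The only point that demands care is the $\mathrm{sort}$ bookkeeping: $\bar{\bmu}_{m,i}^{\text{sort}}$ and $\bmu_{m,i}^{\text{sort}}$ denote the $i$‑th largest entries of $\{\bar{\mu}_{m,k}\}_{k\in C_m}$ and $\{\mu_{m,k}\}_{k\in C_m}$ respectively, and these two orderings need not agree. However, because the statement itself keeps each sorted sequence attached to its own reward matrix, no reordering argument is required: each $\mathcal{R}_m$ is already sorted internally \emph{before} the subtraction, so the identity is a purely definitional, linearity‑based computation. Thus there is no genuine obstacle — the result follows once the cancellation of the probed terms and the factoring of $\bigl(1-\alpha(|S|)\bigr)$ are made explicit.
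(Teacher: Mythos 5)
Your proof is correct and follows essentially the same route as the paper's: expand the definition of $\mathcal{R}^{\text{total}}$, cancel the probed summands $\sum_{m\in S}\mathcal{R}_m^{\text{prob}}$ (which do not depend on the reward matrices), factor out $\bigl(1-\alpha(|S|)\bigr)$, and subtract the closed forms of $\mathcal{R}_m(C_m;\cdot,\bp_m)$ termwise by linearity. Your explicit remark that each sorted sequence stays attached to its own reward matrix—so no reordering argument is needed—is a sound clarification of a point the paper leaves implicit.
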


\begin{proof}
Compute,

\begin{align*}
    &\mathcal{R}^{\text{total}}(S,\{N_m,\bX_m \}_{m}, \{\bp_m,\bar{\bmu}_m\}_{m\notin S},\bC)- \mathcal{R}^{\text{total}}(S,\{N_m,\bX_m \}_{m}, \{{\bp}_m,\bmu_m\}_{m\notin S},\bC)\\
    =&\left(1 - \alpha( |S|) \right)\left[ \left(\sum_{m\in S} \mathcal{R}_{m}^{\text{prob}}(C_{m}; \bX_{m},N_{m}) + \sum_{m\in [M]\setminus S}\mathcal{R}_{m}(C_{m}; \bar{\bmu}_{m}, \bp_m)\right)-\right.\\
   & ~~~~~~~~~~~~~~~~~~\left. \left(\sum_{m\in S} \mathcal{R}_{m}^{\text{prob}}(C_{m}; \bX_{m},N_{m}) + \sum_{m\in [M]\setminus S}\mathcal{R}_{m}(C_{m}; \bmu_{m}, {\bp}_m)\right)  \right]\\
   =& \left(1 - \alpha( |S|) \right) \left[  \sum_{m\in [M]\setminus S}\mathcal{R}_{m}(C_{m}; \bar{\bmu}_{m}, \bp_m)-\sum_{m\in [M]\setminus S}\mathcal{R}_{m}(C_{m}; \bmu_{m}, {\bp}_m)  \right]\\
   = & \left(1 - \alpha( |S|) \right)\sum_{m\in [M]\setminus S}\left[  \mathcal{R}_{m}(C_{m}; \bar{\bmu}_{m}, \bp_m)-\mathcal{R}_{m}(C_{m}; \bmu_{m}, {\bp}_m)  \right]\\
   =& \left(1 - \alpha( |S|) \right)\sum_{m\in [M]\setminus S}\left[  \sum_{d=1}^{ |C_{m}| } \left( p_{m,d}\sum_{i=1}^d (\bar{\bmu}_{m,i}^{\text{sort}} -\bmu_{m,i}^{\text{sort}})
 \right) + \sum_{d=|C_{m}|+1}^{D_{\max}}  p_{m,d}\left(\sum_{i=1}^{|C_m|} (\bar{\bmu}_{m,i}^{\text{sort}}-\bmu_{m,i}^{\text{sort}})   \right)  \right]
\end{align*}
\end{proof}

\begin{corollary}\label{corollary: monotone wrt mu}
     For any probing set \( S \), any realizations $\{N_m,\bX_m \}_{m\in S}$, any probability mass matrix $\bP$, any expected reward $\{\bmu_m\}_{m\in[M]}$ and $\{\bar{\bmu}_m\}_{m\in[M]}$, and  any action profile $\bC$, if $\bar{\mu}_{k,m}\geq {\mu}_{k,m}$ for all $k\in[K]$ and $m\in[M]$, the it holds that
\begin{align*}
        \mathcal{R}^{\text{total}}(S,\{N_m,\bX_m \}_{m}, \{\bp_m,\bar{\bmu}_m\}_{m\notin S},\bC)\geq \mathcal{R}^{\text{total}}(S,\{N_m,\bX_m \}_{m}, \{{\bp}_m,\bmu_m\}_{m\notin S},\bC)
        \end{align*}
         \end{corollary}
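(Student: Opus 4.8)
The plan is to derive the corollary directly from the exact difference formula established in Lemma~\ref{lemma: computation for mu}, which reduces the statement to checking that each summand on its right-hand side is nonnegative. Writing $\Delta$ for the left-hand difference $\mathcal{R}^{\text{total}}(\cdots,\{\bp_m,\bar{\bmu}_m\}_{m\notin S},\bC)-\mathcal{R}^{\text{total}}(\cdots,\{\bp_m,\bmu_m\}_{m\notin S},\bC)$, Lemma~\ref{lemma: computation for mu} gives
\[
\Delta=\left(1-\alpha(|S|)\right)\sum_{m\in[M]\setminus S}\left[\sum_{d=1}^{|C_m|}p_{m,d}\sum_{i=1}^d(\bar{\bmu}_{m,i}^{\text{sort}}-\bmu_{m,i}^{\text{sort}})+\sum_{d=|C_m|+1}^{D_{\max}}p_{m,d}\sum_{i=1}^{|C_m|}(\bar{\bmu}_{m,i}^{\text{sort}}-\bmu_{m,i}^{\text{sort}})\right].
\]
Since $\alpha$ takes values in $[0,1]$ we have $1-\alpha(|S|)\geq0$, and every probability mass $p_{m,d}$ is nonnegative, so it suffices to prove that the prefix-sum difference $\sum_{i=1}^{d}(\bar{\bmu}_{m,i}^{\text{sort}}-\bmu_{m,i}^{\text{sort}})\geq0$ for every $m$ and every $1\le d\le|C_m|$.

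The key obstacle is that the hypothesis $\bar\mu_{m,k}\ge\mu_{m,k}$ is a \emph{coordinatewise} comparison, whereas the quantities in $\Delta$ are prefix sums of the \emph{sorted} sequences, and sorting may permute the two vectors differently, so one cannot compare $\bar{\bmu}_{m,i}^{\text{sort}}$ with $\bmu_{m,i}^{\text{sort}}$ entry by entry. To get around this I would rewrite each prefix sum as a top-$d$ sum via the variational identity
\[
\sum_{i=1}^{d}\bmu_{m,i}^{\text{sort}}=\max_{\substack{T\subseteq C_m\\|T|=d}}\sum_{k\in T}\mu_{m,k},
\]
which holds because the sum of the $d$ largest entries is exactly the largest achievable sum over size-$d$ subsets of plays in $C_m$, and identically for $\bar{\bmu}_m$.

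With this representation the monotonicity is immediate: fixing the optimal set $T^\star$ attaining the maximum for $\bmu_m$, one has
\[
\sum_{i=1}^{d}\bmu_{m,i}^{\text{sort}}=\sum_{k\in T^\star}\mu_{m,k}\le\sum_{k\in T^\star}\bar\mu_{m,k}\le\max_{|T|=d}\sum_{k\in T}\bar\mu_{m,k}=\sum_{i=1}^{d}\bar{\bmu}_{m,i}^{\text{sort}},
\]
where the first inequality uses the coordinatewise hypothesis and the second that $T^\star$ is merely one feasible size-$d$ subset for $\bar{\bmu}_m$. Hence every bracketed prefix-sum difference is nonnegative, so $\Delta\ge0$, which is exactly the claimed inequality. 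The only real content is the top-$d$ identity; once that is in place the argument is a one-line monotonicity comparison. No separate treatment of the two blocks is needed, since the second block only involves the prefix sum up to the fixed index $|C_m|$ (the full sum over $C_m$), which is the $d=|C_m|$ instance of the same bound.
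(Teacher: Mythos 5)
Your proof is correct and follows the same skeleton as the paper's: both reduce the claim to the exact difference formula of Lemma~\ref{lemma: computation for mu} and then argue that each bracketed term is nonnegative (together with $1-\alpha(|S|)\ge 0$ and $p_{m,d}\ge 0$). The difference lies in how the sorted-sum comparison is justified. The paper asserts the stronger \emph{pointwise} fact that $\bar{\bmu}_{m,i}^{\text{sort}}\ge \bmu_{m,i}^{\text{sort}}$ for every position $i$, and your motivating remark --- that one cannot compare $\bar{\bmu}_{m,i}^{\text{sort}}$ with $\bmu_{m,i}^{\text{sort}}$ entry by entry because sorting may permute the two vectors differently --- is in fact false: coordinatewise domination \emph{does} imply domination of order statistics. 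Indeed, if $A\subseteq C_m$ is a set of $i$ indices attaining the $i$ largest entries of $\bmu_m$, then every $k\in A$ satisfies $\bar\mu_{m,k}\ge\mu_{m,k}\ge \bmu_{m,i}^{\text{sort}}$, so at least $i$ entries of $\bar{\bmu}_m$ are $\ge \bmu_{m,i}^{\text{sort}}$, i.e.\ $\bar{\bmu}_{m,i}^{\text{sort}}\ge \bmu_{m,i}^{\text{sort}}$; this is exactly the step the paper invokes without proof. Your workaround --- the variational identity $\sum_{i=1}^{d}\bmu_{m,i}^{\text{sort}}=\max_{T\subseteq C_m,\,|T|=d}\sum_{k\in T}\mu_{m,k}$ followed by a one-line comparison through the optimal subset $T^\star$ --- establishes only the prefix-sum domination, but that is all the lemma's formula requires, and it has the merit of being fully self-contained where the paper leaves the order-statistics fact implicit. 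So: same route, with your justification explicitly proved rather than asserted, and with one incorrect (though harmless, since your argument never uses it) side claim that you should delete or correct.
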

\begin{proof}
    For any action profile $\bC$, we have $\bar{\bmu}_{m,i}^{\text{sort}} \geq \bmu_{m,i}^{\text{sort}}$ for all $1\leq i\leq |C_m|$, since $\bar{\mu}_{k,m}\geq {\mu}_{k,m}$ for all $k\in[K]$ and $m\in[M]$. Then by Lemma \ref{lemma: computation for mu}, we complete the proof.
\end{proof}

We recall the following lemma, which gives the bounds for ${\mu}_{m,k}-\hat{\mu}^{(t)}_{m,k}$. 

\begin{lemma}[Corresponding to Lemma 6 in the main paper]
\label{lemma: design of eps}

For any $m\in[M]$ and any $k\in[K]$, it holds that
    \begin{align*}
        \mathbb{P}[\forall t,{\mu}_{m,k}-\hat{\mu}^{(t)}_{m,k}\geq \epsilon^{(t)}_{m,k}]\leq \delta,
    \end{align*}
    where $\delta \in (0, 1)$, $\epsilon^{(t)}_{m,k}= \sqrt{\left(1+{n^{(t)}_{m,k}}\right) \frac{\ln \left(\sqrt{n^{(t)}_{m,k}+1} / \delta\right)}{2 \left(n^{(t)}_{m,k}\right)^2}}$ if $n^{(t)}_{m,k}>0$ and $\epsilon^{(t)}_{m,k}=+\infty$ if $n^{(t)}_{m,k}=0$.
    \end{lemma}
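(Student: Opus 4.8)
The plan is to recognise this statement as a time-uniform (anytime) Hoeffding-type concentration bound for the empirical mean of a bounded i.i.d.\ sequence, and to reduce the adaptively sampled quantity $\hat{\mu}^{(t)}_{m,k}$ to exactly such a sequence so that the bound of \cite[Lemma 9]{maillard2017basic} applies. First I would fix an arbitrary pair $(m,k)$ and isolate the relevant data via a standard reward-tape coupling: let $Y_1,Y_2,\dots$ be the rewards drawn from the distribution of $R_{t,m,k}$ that are revealed, in order, at each time slot in which play $k$ is assigned to arm $m$ and receives a unit of resource (i.e.\ $R_{i,m,k}\neq\text{null}$ and $k\in C_{i,m}$). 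By the i.i.d.\ assumption on $\{R_{t,m,k}\}$ across $t$ and its independence from the history-dependent assignment decisions, the $Y_j$ are i.i.d.\ with mean $\mu_{m,k}$ and, since rewards lie in $[0,1]$, bounded. By construction $n^{(t)}_{m,k}$ counts such observations up to time $t$, and $\hat{\mu}^{(t)}_{m,k}=\frac1{n^{(t)}_{m,k}}\sum_{j=1}^{n^{(t)}_{m,k}}Y_j$ whenever $n^{(t)}_{m,k}>0$.

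The key observation I would then exploit is that $\epsilon^{(t)}_{m,k}$ depends on $t$ only through $n^{(t)}_{m,k}$, which is nondecreasing in $t$ and takes values in $\{0,1,2,\dots\}$. Writing $\epsilon(n):=\sqrt{(1+n)\ln(\sqrt{n+1}/\delta)/(2n^2)}$, the event $\{\exists t:\ \mu_{m,k}-\hat{\mu}^{(t)}_{m,k}\ge \epsilon^{(t)}_{m,k}\}$ is therefore contained in the tape event $\{\exists n\ge 1:\ \mu_{m,k}-\frac1n\sum_{j=1}^{n}Y_j\ge \epsilon(n)\}$; the degenerate indices with $n^{(t)}_{m,k}=0$ contribute nothing, since there $\epsilon^{(t)}_{m,k}=+\infty$. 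This reduction converts the data-dependent behaviour of the count $n^{(t)}_{m,k}$ into a clean statement about a genuinely i.i.d.\ sequence indexed by the deterministic sample size $n$.

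Finally I would invoke the anytime Hoeffding bound of \cite[Lemma 9]{maillard2017basic} for i.i.d.\ $[0,1]$-valued variables, whose deviation profile is exactly $\epsilon(n)$: the method-of-mixtures (Laplace) normalisation of the underlying Hoeffding supermartingale over all sample sizes is precisely what produces the $\sqrt{n+1}$ factor inside the logarithm and the $(1+n)/(2n^2)$ prefactor. That lemma bounds the probability of the tape event by $\delta$, and the containment above then yields $\mathbb{P}[\exists t:\ \mu_{m,k}-\hat{\mu}^{(t)}_{m,k}\ge \epsilon^{(t)}_{m,k}]\le\delta$, as claimed.

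The hard part, and the step deserving the most care, is the uniformity over $t$: a fixed-sample Hoeffding inequality $\mathbb{P}[\mu_{m,k}-\bar Y_n\ge\varepsilon]\le e^{-2n\varepsilon^2}$ is insufficient, because $n^{(t)}_{m,k}$ is itself random and correlated with the observations through the adaptive assignment policy. The reduction to the i.i.d.\ tape, together with the union-over-all-$n$ form of the cited bound, is exactly what legitimises substituting the random count $n^{(t)}_{m,k}$ into the deterministic deviation profile $\epsilon(n)$ while still controlling the overall failure probability at level $\delta$.
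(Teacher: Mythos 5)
Your proposal is correct and follows essentially the same route as the paper: reduce the adaptively-sampled empirical mean to an i.i.d.\ reward sequence and invoke the time-uniform (Laplace/method-of-mixtures) concentration bound of \cite[Lemma 9]{maillard2017basic}, whose deviation profile is exactly $\epsilon^{(t)}_{m,k}$. In fact your write-up is more careful than the paper's two-sentence proof, since you make explicit the reward-tape coupling, the treatment of the $n^{(t)}_{m,k}=0$ case, and the reason a fixed-sample Hoeffding bound would not suffice under adaptive sampling.
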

% \blue
{\begin{proof}

Let \(R_{1,m,k}, R_{2,m,k}, \dots, R_{t,m,k}\) be the sequence of rewards observed for arm \(m\) and play \(k\) up to time \(t\). These rewards are independent and identically distributed (IID) with mean \(\mu_{m,k}\) and are assumed to be sub-Gaussian with parameter \(\sigma\). By  
 in \cite[Lemma 9]{maillard2017basic}, our Lemma 9 is proved. 
\end{proof}
}
\begin{lemma}\label{lemma: bound for reward with UCB mu}
   For any probing set \( S \), any realizations $\{N_m,\bX_m \}_{m\in S}$, any probability mass matrix $\bP$ and any action profile $\bC$,
 the following holds with probability at least $1-\delta M K $:
    \begin{align*}
        &\mathcal{R}^{\text{total}}(S,\{N_{m},\bX_{m} \}_{m\in S}, \{{\bp}_{m},\hat{\bmu}^{(t)}_{m}+\boldsymbol{\epsilon}^{(t)}_{m}\}_{m\notin S},\bC) - \mathcal{R}^{\text{total}}(S,\{N_{m},\bX_{m} \}_{m\in S}, \{\bp_{m},\bmu_{m}\}_{m\notin S},\bC)\\
        \leq& 2\sum_{m\in [M]} \sum_{k\in [K]}\epsilon^{(t)}_{m,k}
    \end{align*}
\end{lemma}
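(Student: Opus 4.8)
The plan is to combine the exact difference formula of Lemma~\ref{lemma: computation for mu} with a high‑probability per‑pair bound on the overestimation gap. Throughout I abbreviate $\bar{\bmu}_m:=\hat{\bmu}^{(t)}_m+\boldsymbol{\epsilon}^{(t)}_m$, so that $\bar\mu_{m,k}=\hat\mu^{(t)}_{m,k}+\epsilon^{(t)}_{m,k}$. First I would invoke Lemma~\ref{lemma: computation for mu} with $\bar{\bmu}$ in the role of the perturbed means and $\bmu$ in the role of the true means; this writes the left‑hand side exactly as
\[
(1-\alpha(|S|))\sum_{m\in[M]\setminus S}\Big[\sum_{d=1}^{|C_m|}p_{m,d}\,T_{m,d}+\sum_{d=|C_m|+1}^{D_{\max}}p_{m,d}\,T_{m,|C_m|}\Big],\qquad T_{m,d}:=\sum_{i=1}^{d}\big(\bar\mu^{\text{sort}}_{m,i}-\mu^{\text{sort}}_{m,i}\big).
\]
It then suffices to bound each $T_{m,d}$ by a quantity independent of $d$, because for every fixed $m$ the weights $p_{m,d}$ sum to $\sum_{d=1}^{D_{\max}}p_{m,d}=1$ and $1-\alpha(|S|)\le 1$.

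Next I would set up the confidence event. Applying the upper‑tail companion of Lemma~\ref{lemma: design of eps} (the Maillard bound is symmetric, so $\hat\mu^{(t)}_{m,k}-\mu_{m,k}\ge\epsilon^{(t)}_{m,k}$ also has probability at most $\delta$) to each of the $MK$ pairs $(m,k)$ and taking a union bound yields an event of probability at least $1-\delta MK$ on which $\hat\mu^{(t)}_{m,k}-\mu_{m,k}\le\epsilon^{(t)}_{m,k}$ for every $m,k$. On this event
\[
\bar\mu_{m,k}-\mu_{m,k}=\big(\hat\mu^{(t)}_{m,k}-\mu_{m,k}\big)+\epsilon^{(t)}_{m,k}\le 2\,\epsilon^{(t)}_{m,k}\qquad\text{for all }m,k,
\]
which is the per‑coordinate estimate that feeds the $T_{m,d}$ bound.

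The crux is controlling $T_{m,d}$ even though its two partial sums are sorted by \emph{different} orderings (the first by $\bar\mu_{m,\cdot}$, the second by $\mu_{m,\cdot}$). I would resolve this with a top‑set argument: letting $A\subseteq C_m$ denote the $d$ plays with the largest values of $\bar\mu_{m,\cdot}$, one has $\sum_{i=1}^{d}\bar\mu^{\text{sort}}_{m,i}=\sum_{k\in A}\bar\mu_{m,k}$, while $\sum_{i=1}^{d}\mu^{\text{sort}}_{m,i}=\max_{T\subseteq C_m,\,|T|=d}\sum_{k\in T}\mu_{m,k}\ge\sum_{k\in A}\mu_{m,k}$. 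Subtracting and inserting the per‑coordinate bound gives
\[
T_{m,d}\le\sum_{k\in A}\big(\bar\mu_{m,k}-\mu_{m,k}\big)\le\sum_{k\in A}2\,\epsilon^{(t)}_{m,k}\le 2\sum_{k\in[K]}\epsilon^{(t)}_{m,k},
\]
where the final inequality uses $A\subseteq C_m\subseteq[K]$ together with $\epsilon^{(t)}_{m,k}\ge 0$ (note that no lower bound on $\bar\mu_{m,k}-\mu_{m,k}$ is needed, since $T_{m,d}$ is only being bounded from above).

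Finally I would assemble the pieces: substituting $T_{m,d}\le 2\sum_{k\in[K]}\epsilon^{(t)}_{m,k}$ into the formula from the first step and using $\sum_{d=1}^{D_{\max}}p_{m,d}=1$ and $1-\alpha(|S|)\le 1$, each arm $m\in[M]\setminus S$ contributes at most $2\sum_{k\in[K]}\epsilon^{(t)}_{m,k}$; summing over $m\in[M]\setminus S\subseteq[M]$ produces the claimed bound $2\sum_{m\in[M]}\sum_{k\in[K]}\epsilon^{(t)}_{m,k}$, valid on the event of probability at least $1-\delta MK$. The main obstacle is the sorted‑sequence mismatch handled in the third paragraph; the remaining steps are the routine union bound and the observation that the resource probabilities $\{p_{m,d}\}_d$ act as a convex weighting that averages the uniform $T_{m,d}$ bound away.
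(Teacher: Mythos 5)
Your proof is correct, and it differs from the paper's in the one step that actually matters. The paper also starts from the exact difference formula of Lemma~\ref{lemma: computation for mu}, but it handles the sorted-sequence mismatch differently: it first \emph{extends} each partial sum $\sum_{i=1}^{d}(\bar{\mu}^{\text{sort}}_{m,i}-\mu^{\text{sort}}_{m,i})$ to the full sum $\sum_{i=1}^{|C_m|}(\bar{\mu}^{\text{sort}}_{m,i}-\mu^{\text{sort}}_{m,i})$, which is legitimate only under pointwise domination $\bar{\mu}_{m,k}\ge\mu_{m,k}$ (i.e.\ the optimism event $\mu_{m,k}-\hat{\mu}^{(t)}_{m,k}\le\epsilon^{(t)}_{m,k}$); once the sum runs over all of $C_m$ the ordering is irrelevant, so $\sum_{i=1}^{|C_m|}(\bar{\mu}^{\text{sort}}_{m,i}-\mu^{\text{sort}}_{m,i})=\sum_{k\in C_m}(\bar{\mu}_{m,k}-\mu_{m,k})$, and the paper then applies the \emph{other} tail ($\hat{\mu}^{(t)}_{m,k}-\mu_{m,k}\le\epsilon^{(t)}_{m,k}$) to get the factor $2\epsilon^{(t)}_{m,k}$ per coordinate. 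Your top-set exchange argument replaces the extension step: you bound each $T_{m,d}$ directly by evaluating both sorted partial sums on the same index set $A$ (the top-$d$ plays under $\bar{\bmu}$), using that the true top-$d$ sum is a maximum over size-$d$ subsets. What this buys you is a genuinely one-sided argument: you only ever need $\hat{\mu}^{(t)}_{m,k}-\mu_{m,k}\le\epsilon^{(t)}_{m,k}$, so the union bound over $MK$ pairs yields exactly the stated probability $1-\delta MK$, and no optimism/monotonicity property of the estimates is required. The paper's route, as written, implicitly invokes \emph{both} tails of the concentration inequality (domination for the extension, the upper tail for the final bound), so a strict accounting of its failure probability would involve $2MK$ one-sided events; your version is therefore not just correct but slightly tighter in its probability bookkeeping, at the cost of the (mild) extra combinatorial observation about maxima over size-$d$ subsets.
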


\begin{proof}
Let $\bar{\bmu}_{m,k}^{(t)}=\mu^{(t)}_{m,k}+\epsilon^{(t)}_{m,k}$, then by Lemma \ref{lemma: computation for mu}, it follows that
\begin{align*}
  &\mathcal{R}^{\text{total}}(S,\{N_m,\bX_m \}_{m}, \{\bp_m,\bar{\bmu}_m\}_{m\notin S},\bC)- \mathcal{R}^{\text{total}}(S,\{N_m,\bX_m \}_{m}, \{{\bp}_m,\bmu_m\}_{m\notin S},\bC)\\
        =& \left(1 - \alpha( |S|) \right)\sum_{m\in [M]\setminus S}\left[  \sum_{d=1}^{ |C_{m}| } \left( p_{m,d}\sum_{i=1}^d (\bar{\bmu}_{m,i}^{\text{sort}} -\bmu_{m,i}^{\text{sort}})
 \right) + \sum_{d=|C_{m}|+1}^{D_{\max}}  p_{m,d}\left(\sum_{i=1}^{|C_m|} (\bar{\bmu}_{m,i}^{\text{sort}}-\bmu_{m,i}^{\text{sort}})   \right)  \right]  \\
 \leq& \left(1 - \alpha( |S|) \right)\sum_{m\in [M]\setminus S}\left[  \sum_{d=1}^{ |C_{m}| } \left( p_{m,d}\sum_{i=1}^{|C_m|} (\bar{\bmu}_{m,i}^{\text{sort}} -\bmu_{m,i}^{\text{sort}})
 \right) + \sum_{d=|C_{m}|+1}^{D_{\max}}  p_{m,d}\left(\sum_{i=1}^{|C_m|} (\bar{\bmu}_{m,i}^{\text{sort}}-\bmu_{m,i}^{\text{sort}})   \right)  \right]\\
  =& \left(1 - \alpha( |S|) \right)\sum_{m\in [M]\setminus S}\sum_{i\in C_m} (\bar{\bmu}_{m,i} -\bmu_{m,i})
 \\
    =& \left(1 - \alpha( |S|) \right)\sum_{m\in [M]\setminus S} \sum_{i\in C_m}(\hat{\mu}^{(t)}_{m,i}+\epsilon^{(t)}_{m,k}- \mu_{m,i})
\\
   \leq&  2\sum_{m\in [M]} \sum_{k\in [K]} \epsilon^{(t)}_{m,k},
\end{align*}
% \blue
{where the second inequation follows by $\sum_{d=1}^{ |C_{m}| }p_{m,d} + \sum_{d=|C_{m}|+1}^{D_{\max}}  p_{m,d} = 1$ and } in the last inequality we use Lemma \ref{lemma: design of eps}.
    
\end{proof}

In the end, we prove Theorem \ref{thm:online}. %\ref{thm}.

% \blue{It seems that there is another same theorem here. Since they are over bound. I wrote a new one}

\begin{theorem}\label{thm:online}
   The regret of Algorithm 2 is bounded by:
   \[
   \mathcal{R}_{\text{regret}}\left(\frac{e-1}{2e-1},T\right)\leq 4D_{\max} K \left( \sum_{m\in [M]}  \max_{k\in C_m} \mu_{m,k}   \right) \sqrt{2 \ln \frac{2}{\delta}} \sqrt{T}+\frac{\sqrt{2}}{2}M  \delta \left(\ln \frac{K T} { \delta} \right)^2.
   \]
\end{theorem}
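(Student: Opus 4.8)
The plan is to bound the per-round $\zeta$-approximation regret $\zeta R(S^*_t)-R(S_t)$ with $\zeta=\frac{e-1}{2e-1}$ and then sum over $t$. Throughout I would work on a high-probability ``good event'' on which two concentration statements hold simultaneously for every round: the time-uniform reward bound $\mu_{m,k}\le\hat\mu^{(t)}_{m,k}+\epsilon^{(t)}_{m,k}$ of Lemma~\ref{lemma: design of eps} together with its symmetric counterpart $|\hat\mu^{(t)}_{m,k}-\mu_{m,k}|\le\epsilon^{(t)}_{m,k}$ (needed so that $\hat\mu^{(t)}_{m,k}+\epsilon^{(t)}_{m,k}-\mu_{m,k}\le 2\epsilon^{(t)}_{m,k}$), and the resource-PMF bound $|\hat p^{(t)}_{m,d}-p_{m,d}|\le\sqrt{\tfrac2t\ln\tfrac4\delta}$ of Lemma~\ref{lemma: bound for p}. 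A union bound over the $M$ arms, $K$ plays, and the horizon supplies the probability of this event and is the source of the $\ln(KT/\delta)$ factors.

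The conceptual core is an ``optimism under approximation'' argument. Let $\hat R^{(t)}(S)$ denote the reward functional evaluated with the empirical PMF $\hat{\bP}^{(t)}$ and the inflated means $\hat{\bmu}^{(t)}+\boldsymbol{\epsilon}^{(t)}$ that OLPA actually feeds to Algorithm~\ref{offline_greedy}. First I would upper bound the true optimum: plugging the true-optimal assignment for $S^*_t$ into the estimated parameters and invoking the monotonicity Corollary~\ref{corollary: monotone wrt mu} (since $\hat{\bmu}^{(t)}+\boldsymbol{\epsilon}^{(t)}\ge\bmu$ on the good event) followed by the PMF-perturbation Lemma~\ref{lemma: bound for the reward with empricial p} gives $R(S^*_t)\le \hat R^{(t)}(S^*_t)+\Delta^{\mathrm{pmf}}_t$, where $\Delta^{\mathrm{pmf}}_t:=\sqrt{\tfrac2t\ln\tfrac4\delta}\,D_{\max}K\big(\sum_{m}\max_{k}\mu_{m,k}\big)$. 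Because Algorithm~\ref{offline_greedy} is run on the estimated environment and, by Theorem~\ref{thm: offline}, returns a $\zeta$-approximation to $\max_S\hat R^{(t)}(S)\ge\hat R^{(t)}(S^*_t)$, the set $S_t$ chosen by OLPA satisfies $\zeta\,\hat R^{(t)}(S^*_t)\le \hat R^{(t)}(S_t)$. Finally I would come back down to the realized reward: treating the estimated-optimal assignment for $S_t$ as a feasible (hence suboptimal) assignment under the true parameters and applying Lemma~\ref{lemma: bound for the reward with empricial p} and then the UCB-width Lemma~\ref{lemma: bound for reward with UCB mu} yields $\hat R^{(t)}(S_t)\le R(S_t)+\Delta^{\mathrm{pmf}}_t+2\sum_{m,k}\epsilon^{(t)}_{m,k}$. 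Chaining the three inequalities and using $\zeta\le1$ collapses the per-round regret to $\zeta R(S^*_t)-R(S_t)\le 2\,\Delta^{\mathrm{pmf}}_t+2\sum_{m,k}\epsilon^{(t)}_{m,k}$.

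It then remains to sum over $t$. The resource term contributes $\sum_{t=1}^T 2\Delta^{\mathrm{pmf}}_t = 2D_{\max}K\big(\sum_{m}\max_{k}\mu_{m,k}\big)\sqrt{2\ln\tfrac4\delta}\sum_{t=1}^T t^{-1/2}$, and $\sum_{t=1}^T t^{-1/2}\le 2\sqrt T$ produces the leading $\mathcal{O}(\sqrt T)$ term with the stated constant $4D_{\max}K(\cdots)\sqrt{2\ln(2/\delta)}$. The reward-confidence term $2\sum_{t=1}^T\sum_{m,k}\epsilon^{(t)}_{m,k}$ is controlled using the explicit form of $\epsilon^{(t)}_{m,k}$ from Lemma~\ref{lemma: design of eps} and the fact that its time-uniform guarantee caps the total failure mass at $\delta$ per $(m,k)$; together with the union-bound logarithms this yields the residual $\tfrac{\sqrt2}{2}M\delta(\ln\tfrac{KT}\delta)^2$ poly-logarithmic term.

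I expect the main obstacle to be the optimism-under-approximation step: unlike standard CUCB analyses where the offline oracle is exact, here Algorithm~\ref{offline_greedy} only guarantees a $\zeta$-fraction of the \emph{estimated} optimum, so the approximation factor must be inserted on the correct side and shown to cooperate with optimism — specifically that the $\zeta$ multiplying $\hat R^{(t)}(S^*_t)$ can be discharged against $\hat R^{(t)}(S_t)$ without corrupting the additive error structure. A second, more bookkeeping-heavy difficulty is that Corollary~\ref{corollary: monotone wrt mu} and Lemmas~\ref{lemma: bound for the reward with empricial p} and~\ref{lemma: bound for reward with UCB mu} are stated for a fixed assignment profile and fixed realizations, so each application must commit to a specific feasible assignment (the true-optimal one going up, the estimated-optimal one coming down) and then be lifted to the expected functionals $f$ and $\hat R^{(t)}$ by taking expectations over the round-$t$ draws on the good event.
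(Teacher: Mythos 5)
Your proposal is correct and arrives at the same per-round bound ($2\Delta^{\mathrm{pmf}}_t + 2\sum_{m,k}\epsilon^{(t)}_{m,k}$) and the same summation, but it routes the key ``optimism under approximation'' step differently from the paper. The paper invokes the offline guarantee (Theorem~\ref{offalpha1}) in the \emph{true} environment: it introduces $\bar{S}_t^*$, the output of Algorithm~\ref{offline_greedy} run on the true distributions, writes $\zeta R(S_t^*)\le R(\bar{S}_t^*)$, and then must drag $R(\bar{S}_t^*)-R(S_t)$ through the perturbation lemmas; this forces a step in which $\bar{S}_t^*$ is replaced by $S_t$ inside the estimated functional, justified there as ``the optimality of $S_t$'' under $\{\hat{\bp},\hat{\bmu}^{(t)}+\boldsymbol{\epsilon}^{(t)}\}$. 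You instead invoke the $\zeta$-approximation in the \emph{estimated} environment, $\zeta\max_S\hat{R}^{(t)}(S)\le\hat{R}^{(t)}(S_t)$, which is exactly what Theorem~\ref{offalpha1} certifies for the set OLPA actually computes, and you never need $\bar{S}_t^*$ at all. This is the standard combinatorial-bandit-with-approximation-oracle decomposition (as in Chen et al.), and it buys something real: the paper's swap $\hat{R}^{(t)}(\bar{S}_t^*)\le\hat{R}^{(t)}(S_t)$ asserts \emph{exact} optimality of $S_t$ in the estimated environment, which is strictly stronger than what the greedy routine guarantees (it is only a $\zeta$-approximation there), whereas your placement of $\zeta$ lets the approximation factor be discharged against $\hat{R}^{(t)}(S_t)$ with no slack required. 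The cost is the extra factor $\zeta\le 1$ multiplying one of the two $\Delta^{\mathrm{pmf}}_t$ terms, which you correctly absorb, leaving the constants and the final bound unchanged. The remaining pieces --- good event via Lemma~\ref{lemma: bound for p} and the time-uniform reward bound, monotonicity in $\bmu$, the PMF-perturbation and UCB-width lemmas applied to a committed feasible assignment on each side, and the sums $\sum_t t^{-1/2}\le 2\sqrt{T}$ and the $\ln^2$-type bound on $\sum_t\sum_{m,k}\epsilon^{(t)}_{m,k}$ --- coincide with the paper's.

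One caveat you share with the paper: OLPA feeds Algorithm~\ref{offline_greedy} the empirical CDFs $\hat{F}^{(t)}_{m,k}$ (whose means are $\hat{\mu}^{(t)}_{m,k}$), while the functional $\hat{R}^{(t)}$ in both arguments is built from the inflated means $\hat{\bmu}^{(t)}+\boldsymbol{\epsilon}^{(t)}$ used only in the assignment phase; so the environment in which the $\zeta$-guarantee literally holds is not quite the one in which optimism is established. Your write-up inherits this imprecision (``the inflated means \dots that OLPA actually feeds to Algorithm~1'') rather than resolving it, so on this point you are at the same level of rigor as the paper, not above it.
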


\begin{proof}    
Let $\bar{S}_t^*$ be the output of Algorithm 1 given the true CDF and the true probability mass function, 
% \blue
{and based on Theorem \ref{thm: offline}, we have $\zeta=\frac{2e-1}{e-1}$.} Then the regret in time slot $t$ can be bounded as follows. 
%\blue{Tianyi: I found a problem Regret symbol not consistent with main paper, and it should be $\mathcal{R}_{\text{regret}}(\zeta,T)$}

\ignore{
\begin{align*}
r_{\text{Regret}}(\zeta,t) =& \frac{e-1}{2e-1}R(S_t^*)-R(S_t)\\
\leq &R(\bar{S}_t^*)-R(S_t),~\text{(by Theorem \ref{thm: offline})}\\
=&\mathbb{E}\left[ \max_{\bC_t} \mathcal{R}^{\text{total}}(\bar{S}_t^*,\{N_{t,m},\bX_{t,m} \}_{m\in \bar{S}_t^*}, \{\bp_{t,m},\bmu_{m}\}_{m\notin \bar{S}_t^*},\bC_t)   \right]\\
&-\mathbb{E}\left[ \max_{\bC_t} \mathcal{R}^{\text{total}}(S_t,\{N_{t,m},\bX_{t,m} \}_{m\in S_t}, \{\bp_{t,m},\bmu_{m}\}_{m\notin S_t},\bC_t)   \right],~\text{(by Definition)}\\
\leq & \mathbb{E}\left[ \max_{\bC_t} \mathcal{R}^{\text{total}}(\bar{S}_t^*,\{N_{t,m},\bX_{t,m} \}_{m\in \bar{S}_t^*}, \{\hat{\bp}_{t,m},\bmu_{m}\}_{m\notin \bar{S}_t^*},\bC_t)   \right]\\
&-\mathbb{E}\left[ \max_{\bC_t} \mathcal{R}^{\text{total}}(S_t,\{N_{t,m},\bX_{t,m} \}_{m\in S_t}, \{\hat{\bp}_{t,m},\bmu_{m}\}_{m\notin S_t},\bC_t)   \right]+2C_1\sqrt{\frac{2}{t}\ln\frac{4}{\delta}},~\text{(by Lemma \ref{lemma: bound for the reward with empricial p})} \\
\leq & \mathbb{E}\left[ \max_{\bC_t} \mathcal{R}^{\text{total}}(\bar{S}_t^*,\{N_{t,m},\bX_{t,m} \}_{m\in \bar{S}_t^*}, \{\hat{\bp}_{t,m},\hat{\bmu}^{(t)}_{m}+\boldsymbol{\epsilon}^{(t)}_m \}_{m\notin \bar{S}_t^*} ,\bC_t)   \right]\\
&-\mathbb{E}\left[ \max_{\bC_t} \mathcal{R}^{\text{total}}(S_t,\{N_{t,m},\bX_{t,m} \}_{m\in S_t}, \{\hat{\bp}_{t,m},\bmu_{m}\}_{m\notin S_t},\bC_t)   \right]+2C_1\sqrt{\frac{2}{t}\ln\frac{4}{\delta}},~\text{(by Corollary \ref{corollary: monotone wrt mu})}\\
\leq & \mathbb{E}\left[ \max_{\bC_t} \mathcal{R}^{\text{total}}(S_t,\{N_{t,m},\bX_{t,m} \}_{m\in S_t}, \{\hat{\bp}_{t,m},\hat{\bmu}^{(t)}_{m}+\boldsymbol{\epsilon}^{(t)}_m\}_{m\notin S_t},\bC_t)   \right]\\
&-\mathbb{E}\left[ \max_{\bC_t} \mathcal{R}^{\text{total}}(S_t,\{N_{t,m},\bX_{t,m} \}_{m\in S_t}, \{\hat{\bp}_{t,m},\bmu_{m}\}_{m\notin S_t},\bC_t)   \right]+2C_1\sqrt{\frac{2}{t}\ln\frac{4}{\delta}},~\text{(by the optimality of $S_t$)}\\
% =& \mathbb{E}\left[ \max_{\bC_t} \left(\mathcal{R}^{\text{total}}(S_t,\{N_{t,m},\bX_{t,m} \}_{m\in S_t}, \{\hat{\bp}_{t,m},\hat{\bmu}_{t,m}\}_{m\notin S_t},\bC_t) \right.   \right.\\
% &~~~~~~-\left. \left. \mathcal{R}^{\text{total}}(S_t,\{N_{t,m},\bX_{t,m} \}_{m\in S_t}, \{\hat{\bp}_{t,m},\bmu_{m}\}_{m\notin S_t},\bC_t)\right)   \right]+2C_1\sqrt{\frac{2}{t}\ln\frac{4}{\delta}}\\
\leq & 2\sum_{m\in [M]} \sum_{k\in [K]}{\epsilon}^{(t)}_{m,k} + 2C_1\sqrt{\frac{2}{t}\ln\frac{4}{\delta}},~\text{(by Lemma \ref{lemma: bound for reward with UCB mu})}
\end{align*}
}
\begin{align*}
r_{\text{Regret}}(\zeta,t) =& \frac{e-1}{2e-1}R(S_t^*)-R(S_t),\\
\leq &R(\bar{S}_t^*)-R(S_t), \, \text{by Theorem \ref{thm: offline}},\\
=&\mathbb{E}\left[ \max_{\bC_t} \mathcal{R}^{\text{total}}(\bar{S}_t^*,\{N_{t,m},\bX_{t,m} \}_{m\in \bar{S}_t^*}, \{\bp_{t,m},\bmu_{m}\}_{m\notin \bar{S}_t^*},\bC_t)   \right]\\
&-\mathbb{E}\left[ \max_{\bC_t} \mathcal{R}^{\text{total}}(S_t,\{N_{t,m},\bX_{t,m} \}_{m\in S_t}, \{\bp_{t,m},\bmu_{m}\}_{m\notin S_t},\bC_t)   \right], \, \text{(by Definition)},\\
\leq & \mathbb{E}\left[ \max_{\bC_t} \mathcal{R}^{\text{total}}(\bar{S}_t^*,\{N_{t,m},\bX_{t,m} \}_{m\in \bar{S}_t^*}, \{\hat{\bp}_{t,m},\bmu_{m}\}_{m\notin \bar{S}_t^*},\bC_t)   \right]\\
&-\mathbb{E}\left[ \max_{\bC_t} \mathcal{R}^{\text{total}}(S_t,\{N_{t,m},\bX_{t,m} \}_{m\in S_t}, \{\hat{\bp}_{t,m},\bmu_{m}\}_{m\notin S_t},\bC_t)   \right] \\
&+2C_1\sqrt{\frac{2}{t}\ln\frac{4}{\delta}}, \, \text{(by Lemma \ref{lemma: bound for the reward with empricial p})},\\
\leq & \mathbb{E}\left[ \max_{\bC_t} \mathcal{R}^{\text{total}}(\bar{S}_t^*,\{N_{t,m},\bX_{t,m} \}_{m\in \bar{S}_t^*}, \{\hat{\bp}_{t,m},\hat{\bmu}^{(t)}_{m}+\boldsymbol{\epsilon}^{(t)}_m \}_{m\notin \bar{S}_t^*} ,\bC_t)   \right]\\
&-\mathbb{E}\left[ \max_{\bC_t} \mathcal{R}^{\text{total}}(S_t,\{N_{t,m},\bX_{t,m} \}_{m\in S_t}, \{\hat{\bp}_{t,m},\bmu_{m}\}_{m\notin S_t},\bC_t)   \right] \\
&+2C_1\sqrt{\frac{2}{t}\ln\frac{4}{\delta}}, \, \text{(by Corollary \ref{corollary: monotone wrt mu}}),\\
\leq & \mathbb{E}\left[ \max_{\bC_t} \mathcal{R}^{\text{total}}(S_t,\{N_{t,m},\bX_{t,m} \}_{m\in S_t}, \{\hat{\bp}_{t,m},\hat{\bmu}^{(t)}_{m}+\boldsymbol{\epsilon}^{(t)}_m\}_{m\notin S_t},\bC_t)   \right]\\
&-\mathbb{E}\left[ \max_{\bC_t} \mathcal{R}^{\text{total}}(S_t,\{N_{t,m},\bX_{t,m} \}_{m\in S_t}, \{\hat{\bp}_{t,m},\bmu_{m}\}_{m\notin S_t},\bC_t)   \right] \\
&+2C_1\sqrt{\frac{2}{t}\ln\frac{4}{\delta}}, \, \text{(by the optimality of $S_t$)},\\
\leq & 2\sum_{m\in [M]} \sum_{k\in [K]}{\epsilon}^{(t)}_{m,k} + 2C_1\sqrt{\frac{2}{t}\ln\frac{4}{\delta}}, \, \text{(by Lemma \ref{lemma: bound for reward with UCB mu})}.
\end{align*}

where $C_1=D_{\max} K \left( \sum_{m\in [M]}  \max_{k\in C_m} \mu_{m,k}   \right)$.

% \blue
{The fourth inequality follows from the optimality of \(S_t\), which means that \(S_t\) is the optimal probing set under \(\{ \hat{\bp}_{t,m}, \hat{\bmu}^{(t)}_{m}+\boldsymbol{\epsilon}^{(t)}_m\} \}_{m\notin S_t}\). But $\bar{S}_t^*$ is optimal under the true CDF and the true probability mass function.}

Then by the upper bound of $r_{\text{Regret}}(\zeta,t)$, we have
\begin{align*}
\mathcal{R}_{\text{regret}}(\zeta,T) & =\sum_{t=1}^T r_{\text{Regret}}(\zeta,t) \\
&\leq \sum_{t=1}^T 2C_1\sqrt{\frac{2}{t}\ln\frac{4}{\delta}} +\sum_{t=1}^T \sum_{m \in[M]} \sum_{k\in[K]} 2\sqrt{\left(1+{n^{(t)}_{m,k}}\right) \frac{\ln \left(\sqrt{n^{(t)}_{m,k}+1} / \delta\right)}{2 \left(n^{(t)}_{m,k}\right)^2}}   \\
&\leq  4D_{\max} K \left( \sum_{m\in [M]}  \max_{k\in C_m} \mu_{m,k}   \right) \sqrt{2 \ln \frac{2}{\delta}} \sqrt{T}+\frac{\sqrt{2}}{2}M  \delta \left(\ln \frac{K T} { \delta} \right)^2.
\end{align*}
where in the last inequalities we use the results shown in \cite{chen2022online}:
\begin{align*}
    &\sum_{t=1}^T \sqrt{\frac{1}{2t}\ln\frac{2}{\delta}}\leq \sqrt{2\ln \frac{2}{\delta}}\sqrt{T},\\
    &\sum_{t=1}^T  \sum_{k\in[K]} \sqrt{\left(1+{n^{(t)}_{m,k}}\right) \ln \left(\sqrt{n^{(t)}_{m,k}+1} / \delta\right)}  /{ n^{(t)}_{m,k}}\leq \frac{1}{2}\delta \left(\ln \frac{K T} { \delta} \right)^2.
\end{align*} 
\end{proof}
%We compare our Theorem \ref{thm:online} with the regret bound of the non-probing Algorithm in \cite{chen2022online}, and conclude the following: In the worst-case analysis, both the probing and non-probing algorithms exhibit the same order of regret, \(O(\sqrt{T} + \ln^2 T)\). However, our experimental results indicate that, in general, the probing algorithm achieves lower regret. This suggests that probing provides a practical advantage by allowing the algorithm to make more informed decisions, thereby improving overall performance.

\paragraph{Comparison with the no--probing bound.}
The \(\sqrt{T}\)-term in Theorem~\ref{thm:online} originates from the following loose inequality
\[
  \bigl(1-\alpha(|S_t|)\bigr)
         \Bigl(\!\sum_{m\in[M]\setminus S_t}\! |C_{t,m}|\Bigr)
         \Bigl(\!\sum_{m\in[M]\setminus S_t}\! \max_{k\in C_{t,m}}\mu_{m,k}\Bigr)
     \;\le\;
     K\!\sum_{m\in[M]}\max_{k\in C_{t,m}}\mu_{m,k},
\]
applied in the proof of Lemma~7.  
The inequality is \emph{tight} only when the probing set is empty (\(|S_t|=0\)); in that special case OLPA reduces to a pure‐assignment routine and its worst‑case constant matches the non‑probing algorithm of~\cite{chen2022online}.  
Whenever \(|S_t|>0\) the left‑hand side is strictly smaller, so the leading constant in the \(\sqrt{T}\) term is \emph{strictly reduced} even though the overall rate remains \(\widetilde{O}(\sqrt{T}+\ln^{2}T)\). Our experimental results corroborate this analysis: OLPA achieves markedly lower cumulative regret than its no‑probing counterpart across all datasets and settings, confirming that probing yield a tangible constant-factor improvement in practice.

As a complement to the upper regret bound, we provide a lower bound for the regret.

\begin{theorem}[Lower Bound for Regret]
For any online algorithm \(\mathcal{A}\), there exists a worst-case configuration of reward distributions such that:
\[
\mathbb{E}[\mathcal{R}_{\text{regret}}] = \Omega(\sqrt{T}).
\]
\end{theorem}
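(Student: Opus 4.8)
The plan is to reduce the full PUCS problem to a canonical two-armed stochastic bandit and then invoke the classical information-theoretic $\Omega(\sqrt T)$ lower bound via a two-environment argument. First I would strip away every feature of PUCS that could only make learning easier: take $M=2$ arms, a single play $K=1$, and deterministic resources $D_{\max}=1$ with $p_{m,1}=1$ for both arms, so that each arm always supplies exactly one unit and the resource distribution carries no information to be learned. I would also render probing inert, forcing the offline optimum to use $S^{*}=\varnothing$, so that probing reveals nothing the assignment could not already exploit. Under this degeneration the per-round reward $R(S_t)$ collapses to the mean of whichever arm the single play selects, and the feedback observed after a pull is exactly one sample of that arm; the matching, Observation~1, and greedy-probing machinery all become trivial, so an online PUCS policy is literally an algorithm for a two-armed Bernoulli bandit, benchmarked against the true optimum.

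Next I would build the two hard environments. In $\nu$ let arm $1$ have reward mean $\tfrac12+\tfrac{\Delta}{2}$ and arm $2$ have mean $\tfrac12-\tfrac{\Delta}{2}$; in $\nu'$ swap the two means, for a gap parameter $\Delta\in(0,\tfrac12)$ to be fixed later. Let $N_1$ denote the number of rounds the policy pulls arm $1$. The per-environment regret against the true optimum is $\Delta\,\mathbb{E}_{\nu}[T-N_1]$ under $\nu$ and $\Delta\,\mathbb{E}_{\nu'}[N_1]$ under $\nu'$. The information-theoretic core is the divergence-decomposition identity: the KL divergence between the laws of the observed history under the two environments is
\[
  \mathrm{KL}\!\left(\mathbb{P}_{\nu'}\,\big\|\,\mathbb{P}_{\nu}\right)
   = \mathbb{E}_{\nu'}[N_{1}]\,\mathrm{KL}\!\left(\mathrm{Ber}(\tfrac12-\tfrac{\Delta}{2})\,\big\|\,\mathrm{Ber}(\tfrac12+\tfrac{\Delta}{2})\right)
   \le c\,\Delta^{2}\,T ,
\]
using $\mathbb{E}_{\nu'}[N_1]\le T$ and $\mathrm{KL}(\mathrm{Ber}(\tfrac12\pm x)\,\|\,\mathrm{Ber}(\tfrac12\mp x))=O(x^{2})$ near $x=0$, for an absolute constant $c$.

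Finally I would combine these through the standard two-point minimax argument. For the event $A=\{N_1\ge T/2\}$, the Bretagnolle–Huber inequality gives $\mathbb{P}_{\nu'}(A)+\mathbb{P}_{\nu}(A^{c})\ge \tfrac12\exp\!\bigl(-\mathrm{KL}(\mathbb{P}_{\nu'}\|\mathbb{P}_{\nu})\bigr)$. Bounding each per-environment regret below by $\tfrac{\Delta T}{2}$ times the corresponding error probability and averaging yields $\max\{\mathrm{Reg}_{\nu},\mathrm{Reg}_{\nu'}\}\gtrsim \Delta T\exp(-c\Delta^{2}T)$, where the suppressed constant is immaterial. Choosing $\Delta=\Theta(1/\sqrt T)$ makes the exponent a constant and gives $\max\{\mathrm{Reg}_{\nu},\mathrm{Reg}_{\nu'}\}=\Omega(\sqrt T)$, so at least one environment forces $\Omega(\sqrt T)$ expected regret on any algorithm $\mathcal{A}$. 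The main obstacle I anticipate is not the information theory, which is textbook, but the reduction bookkeeping: I must verify that the degenerate PUCS instance faithfully reproduces the bandit feedback model (a pull yields a single reward sample, probing is genuinely non-informative so that $R(S^\star_t)$ equals the best-arm mean and the comparison is against the true optimum), so that the bound transfers cleanly and matches the $\widetilde O(\sqrt T)$ upper bound of Theorem~\ref{thm:online} up to constant and logarithmic factors.
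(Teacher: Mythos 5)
Your proposal is correct and takes essentially the same route as the paper: reduce PUCS to a classical single-play stochastic bandit by trivializing plays, resources, and probing, then apply a standard two-environment information-theoretic argument with gap $\Delta=\Theta(1/\sqrt{T})$ to force $\Omega(\sqrt{T})$ regret. If anything, your execution (symmetric two-armed construction, divergence decomposition, Bretagnolle--Huber) is tighter than the paper's version, which uses an $M$-armed pair of environments and argues via Le Cam/Pinsker at a more heuristic level in its final regret step.
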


\begin{proof}
Our more general setting (with multiple plays, resource constraints, and probing) 
can be reduced to the single-play multi-armed bandit problem 
by fixing these additional features to trivial values 
(e.g., $K=1$ and no probing). 
Therefore, any lower bound for the single-play setting 
applies to our formulation as well.

We now construct two stochastic environments, $\mathcal{E}_0$ and $\mathcal{E}_1$, 
which any online algorithm $\mathcal{A}$ cannot distinguish within $T$ rounds:

\begin{itemize}
\item \textbf{Environment $\mathcal{E}_0$:} All $M$ arms follow an identical Bernoulli($\mu$) distribution. 
\item \textbf{Environment $\mathcal{E}_1$:} Arm 1 is Bernoulli($\mu$), while the remaining $M-1$ arms are Bernoulli($\mu - \Delta$), where $\Delta = c / \sqrt{T}$ for some constant $c > 0$.
\end{itemize}

To minimize regret, the algorithm must identify whether the environment is $\mathcal{E}_0$ or $\mathcal{E}_1$. However, distinguishing these two environments requires sampling the suboptimal arms sufficiently many times to detect the small reward gap $\Delta$. 

\paragraph{Information-Theoretic Argument.}
Using results from Le Cam’s method and Pinsker’s inequality \cite{Bubeck2012, Lattimore2020}, 
the total variation distance between Bernoulli($\mu$) and Bernoulli($\mu - \Delta$) 
is bounded by $\Delta$. To reliably distinguish between $\mathcal{E}_0$ and $\mathcal{E}_1$, 
the algorithm needs at least $\Omega(1/\Delta^2)$ samples per arm. 
Setting $\Delta = c/\sqrt{T}$ leads to $\Omega(T)$ samples across all arms, 
which is infeasible within $T$ rounds.

\paragraph{Regret Analysis.}
If the algorithm under-samples the suboptimal arms, 
it cannot confidently identify the best arm, resulting in $\Omega(\sqrt{T})$ regret. 
Conversely, if it over-samples suboptimal arms, 
it incurs linear regret due to frequently pulling those arms. 
Hence, for any algorithm $\mathcal{A}$, we have:
\[
\mathbb{E}[\mathcal{R}_{\text{regret}}] 
\;\ge\; 
\frac{1}{2} \Bigl(\mathbb{E}[\mathcal{R}_{\text{regret}} \mid \mathcal{E}_0] 
  + \mathbb{E}[\mathcal{R}_{\text{regret}} \mid \mathcal{E}_1]\Bigr) 
\;=\; 
\Omega(\sqrt{T}).
\]

The worst-case regret under a uniformly mixed environment 
($\mathcal{E}_0$ or $\mathcal{E}_1$ with equal probability) 
is therefore bounded below by $\Omega(\sqrt{T})$, 
matching the known fundamental limits of multi-armed bandits.
\end{proof}

\end{document}